\newcommand{\sprod}{\textstyle\prod}
\def\eqref#1{(\ref{#1})}
\def\ceil#1{\lceil #1 \rceil}
\def\floor#1{\lfloor #1 \rfloor}
\def\1{\bm{1}}
\def\rd{{\textnormal{d}}}
\def\re{{\textnormal{e}}}
\def\rg{{\textnormal{g}}}
\def\rk{{\textnormal{k}}}
\def\rvx{{\mathbf{x}}}
\def\rvz{{\mathbf{z}}}
\def\rmW{{\mathbf{W}}}
\def\rmX{{\mathbf{X}}}
\def\vzero{{\bm{0}}}
\def\vone{{\bm{1}}}
\def\vb{{\bm{b}}}
\def\vu{{\bm{u}}}
\def\vv{{\bm{v}}}
\def\vx{{\bm{x}}}
\def\vz{{\bm{z}}}
\def\mA{{\bm{A}}}
\def\mB{{\bm{B}}}
\def\mD{{\bm{D}}}
\def\mE{{\bm{E}}}
\def\mH{{\bm{H}}}
\def\mI{{\bm{I}}}
\def\mO{{\bm{O}}}
\def\mQ{{\bm{Q}}}
\def\mU{{\bm{U}}}
\def\mV{{\bm{V}}}
\def\mW{{\bm{W}}}
\def\mX{{\bm{X}}}
\def\mZ{{\bm{Z}}}
\DeclareMathAlphabet{\mathsfit}{\encodingdefault}{\sfdefault}{m}{sl}
\SetMathAlphabet{\mathsfit}{bold}{\encodingdefault}{\sfdefault}{bx}{n}
\def\gE{{\mathcal{E}}}
\def\gP{{\mathcal{P}}}
\def\gS{{\mathcal{S}}}
\def\gX{{\mathcal{X}}}
\def\sP{{\mathbb{P}}}
\def\sR{{\mathbb{R}}}
\newcommand{\E}{\mathbb{E}}
\newcommand{\lr}{\alpha}
\newcommand{\softmax}{\mathrm{softmax}}
\newcommand{\Cov}{\mathrm{Cov}}
\newcommand{\ie}{i.e.}
\newcommand{\eg}{e.g.}
\def\fnn{{\texttt{FNN}}}
\def\tfn{{\texttt{TFN}}}
\def\attn{{\texttt{Attn}}}
\def\relu{{\texttt{ReLU}}}
\def\rk{{\operatorname{rank}}}
\def\trace{{\operatorname{tr}}}
\def\lr{{\operatorname{LR}}}
\def\tr{{\widetilde{R}}}
\def\tdL{{M}}
\def\re{{R_{\mE}}}
\def\rep{{R_{\mE'}}}
\def\uP{{\gP^{\text{u}}}}
\def\ueP{P^{\text{u}}}
\def\set#1{\left\{#1\right\}}
\def\fnorm#1{\left\lVert #1 
\right\rVert_{\text{F}}}
\def\tnorm#1{\left\lVert #1 \right\rVert_2}
\def\ceil#1{\lceil #1 \rceil}
\def\floor#1{\lfloor #1 \rfloor}
\def\pt#1{\left( #1 \right)}
\def\bt#1{\left[ #1 \right]}
\def\tgf{{\widebar{f}}}
\def\tgW{{\widebar{\mW}}}
\def\tgrW{{\widebar{\rmW}}}
\def\tgb{{\widebar{\vb}}}
\def\tgL{{\widebar{L}}}
\def\tgD{{\widebar{D}}}
\def\tgrvz{{\widebar{\rvz}}}
\def\tgZ{{\widebar{\mZ}}}
\def\tgH{{\widebar{\mH}}}
\def\tgI{{\widebar{I}}}
\def\udW{{\widehat{\mW}}}
\def\udb{{\widehat{\vb}}}
\def\udrvz{{\widehat{\rvz}}}
\def\udZ{{\widehat{\mZ}}}
\def\udH{{\widehat{\mH}}}
\def\dW{{\Delta \mW}}
\def\tgeW{{\widetilde{\mW}}} 
\title{The Expressive Power of Low-Rank Adaptation}
\author{Yuchen Zeng\\
Department of Computer Science \\
University of Wisconsin-Madison\\
\texttt{yzeng58@wisc.edu} \\
\And
Kangwook Lee \\
Department of Electrical and Computer Engineering  \\
University of Wisconsin-Madison\\
\texttt{kangwook.lee@wisc.edu} \\
}
\begin{document}

\maketitle

\vspace{-.3in}
\begin{abstract}
\emph{Low-Rank Adaptation} (LoRA), a parameter-efficient fine-tuning method that leverages low-rank adaptation of weight matrices, has emerged as a prevalent technique for fine-tuning pre-trained models such as large language models and diffusion models.
Despite its huge success in practice, the theoretical underpinnings of LoRA have largely remained unexplored. 
This paper takes the first step to bridge this gap by theoretically analyzing the expressive power of LoRA. 
We prove that, for fully connected neural networks, LoRA can adapt any model $f$ to accurately represent any smaller target model $\tgf$ if LoRA-rank $\geq(\text{width of }f) \times \frac{\text{depth of }\tgf}{\text{depth of }f}$, under a mild assumption. 
We quantify the approximation error when the LoRA-rank is lower than the threshold. 
For Transformer networks, we show any model can be adapted to a target model of the same size with rank-$(\frac{\text{embedding size}}{2})$ LoRA adapters.
Our study reveals numerous theoretical insights on hyperparameter tuning and algorithm development for LoRA, all of which are empirically validated. 
\end{abstract} 


\vspace{-.2in}
\section{Introduction}
\vspace{-.1in}
Recent foundation models, such as large language models~\citep{openai2023gpt4,liu2019roberta,touvron2023llama}, have achieved remarkable success in a wide range of applications.
Due to their substantial size, the standard full fine-tuning approach—where all the model's parameters are updated for specialized tasks—is becoming increasingly difficult and inefficient.
This leads to the growing popularity of parameter-efficient fine-tuning approaches~\citep{hu2022lora,liu2022few,zaken2021bitfit,hu2022sparse}. 
Instead of updating all parameters, these approaches selectively update smaller subsets of weights or introduce lightweight adapters, thereby greatly decreasing the computational and storage costs. 

The most dominant approach along this line is \emph{Low-Rank Adaptation} (LoRA)~\citep{hu2022lora}, which employs lightweight low-rank adapters to pre-trained weight matrices.
Far from merely enhancing computational efficiency, empirical evidence has shown that LoRA can match or even exceed the performance of full fine-tuning~\citep{hu2022lora}. 
To date, LoRA has been widely used and achieved considerable success in adapting large language models~\citep{hu2022lora,dinh2022lift} and image generation models~\citep{lora_diffusion,fan2023dpok} for various downstream tasks. 

Despite the empirical success of LoRA, little is known in theory about how it works.
A notable exception~\citep{malladi2023kernel} showed that LoRA finetuning is approximately equivalent to full fine-tuning in the lazy regime.
However, many theoretical questions remain open, such as:
What is the minimum rank of the LoRA adapters required to adapt a (pre-trained) model $f$ to match the functionality of the target model $\tgf$?
How does the model architecture (\ie, depth, width) affect the minimal rank?
If the adapter rank is lower than this threshold, what is the resulting approximation error?
Answering such questions will provide important theoretical insights into when and why LoRA achieves effective adaptation. 

\vspace{-.1in}
\paragraph{Our Contributions. }
In this paper, we present the first set of theoretical results that characterize \underline{the expressive power of Low-Rank Adaptation (LoRA)} for Fully Connected Neural Networks (FNN) and Transformer Networks (TFN). 
In particular, we identify the necessary LoRA-rank for adapting a frozen model to exactly match a target model.
For FNN cases, we also establish the required LoRA-rank for closely approximating the target model when a small approximation error is allowed.
Our work focuses solely on the expressive power of the model with low-rank adapters, i.e., we show that under which conditions, effective low-rank adapters exist for the given adaptation task.
This excludes other aspects such as optimization and generalization. 

We now present the essence of our main theoretical findings in the following informal statement.
\begin{theorem}[Informal]\label{thm:informal_fnn}
    Let $\tgf$ be a target FNN and $f_0$ be an arbitrary frozen FNN.
    Under mild conditions on ranks and network architectures, there exist low-rank adapters such that a low-rank adapted version of $f_0$ is exactly equal to $\tgf$. 
\end{theorem}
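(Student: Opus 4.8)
The plan is to treat the $\tgL$ layers of $\tgf$ as ``targets'' and partition the $L$ layers of the frozen network $f_0$ into $\tgL$ consecutive blocks, the $i$-th block holding $m\approx L/\tgL$ layers, then choose LoRA updates so that the $i$-th block computes exactly the $i$-th layer of $\tgf$; composing the adapted blocks reproduces $\tgf$. Writing $D$ for the width of $f_0$, the hypothesis $R\ge D\cdot\tgL/L$ is exactly $mR\ge D$, so each block carries ``one full-rank layer's worth'' of LoRA freedom. I would first reduce to the case $\tgL\mid L$ (otherwise spread the remainder so a few blocks have $\lceil L/\tgL\rceil$ layers), zero-pad $\tgW_i,\tgb_i$ when $\tgf$ is narrower than $f_0$, and treat the activation-free final layer of $\tgf$ by making every layer of the last block affine rather than all-but-one.

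The heart is a per-block construction. Consider a block of $m$ layers $z\mapsto\sigma(W_jz+b_j)$ whose frozen matrices $W_j$ are non-singular (this genericity of $f_0$ is the ``mild assumption''), and a target layer $z\mapsto\sigma(\tgW z+\tgb)$. First, put large positive offsets on the biases of layers $1,\dots,m-1$ so that, on the (bounded) set of inputs reachable there, every pre-activation is positive and each of these $m-1$ \relu{}s acts as an affine map; the block then computes $z\mapsto\sigma\big((W_m+\Delta W_m)\cdots(W_1+\Delta W_1)\,z+\beta\big)$, with an accumulated offset $\beta$ that a suitable choice of the bias of layer $m$ forces to equal $\tgb$. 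It then remains to pick the updates $\Delta W_j$ of rank $\le R$ with $(W_m+\Delta W_m)\cdots(W_1+\Delta W_1)=\tgW$.

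That last step is where I expect the real work, and it is the purely linear-algebraic claim: if $W_1,\dots,W_m\in\R^{D\times D}$ are non-singular and $mR\ge D$, then any $A\in\R^{D\times D}$ equals $(W_m+\Delta W_m)\cdots(W_1+\Delta W_1)$ for some $\Delta W_j$ with $\rk(\Delta W_j)\le R$. I would prove it by writing $W_j+\Delta W_j=W_j(I+F_j)$ with $\rk(F_j)\le R$, commuting the fixed factors $W_j$ to the left (which only conjugates the factors $I+F_j$ by fixed invertible matrices, preserving ranks), thereby reducing to: every matrix is a product of $m$ factors of the form $I+N_j$ with $\rk(N_j)\le R$. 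The latter I would get from a block Gauss--Jordan argument --- group the $D$ coordinates into blocks of size $R$ and let each factor clear one block-row (or block-column), allowing the factors themselves to be singular so that rank-deficient targets are covered --- which needs $\lceil D/R\rceil\le m$ factors. The remaining work is bookkeeping rather than conceptual: propagating the bias offsets across all $\tgL$ blocks so the reachable sets stay bounded while the per-block accumulated biases land on $\tgb_1,\dots,\tgb_{\tgL}$; handling degenerate configurations in the block-elimination step (row/column permutations when a pivot block is singular); and pinning down the precise regularity hypothesis --- a bounded input domain, or a width-doubling $\sigma(u)-\sigma(-u)=u$ encoding, or adaptable biases --- under which ``exactly equal to $\tgf$'' holds; that last choice is what the ``mild assumption'' records.
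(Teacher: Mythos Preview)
Your high-level plan coincides with the paper's: partition the $L$ frozen layers into $\tgL$ consecutive blocks of size $m=\lfloor L/\tgL\rfloor$, push the first $m-1$ biases in each block large enough that the intermediate \relu{}s fire (bounded input domain, tunable biases), collapse the block to a single affine-plus-\relu{} layer, and then choose the rank-$R$ updates so that $\prod_j(W_j+\Delta W_j)=\tgW_i$ while the last bias absorbs the accumulated offset to hit $\tgb_i$. This is exactly the skeleton of Lemma~\ref{lemma:1_fnn_approx} and Theorem~\ref{thm:fnn_exact}.

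Where you diverge is the proof of the matrix step. The paper's Lemma~\ref{lemma:matrix_approx} uses the telescoping identity
\[
\prod_{l=1}^{m}(W_l+\Delta W_l)-\prod_{l=1}^{m}W_l
=\sum_{l=1}^{m}\Big(\prod_{i>l}W_i\Big)\,\Delta W_l\,\Big(\prod_{i<l}(W_i+\Delta W_i)\Big),
\]
then sets the $l$-th summand equal to the $l$-th rank-$R$ slice in the SVD of $\mE=\tgW-\prod_l W_l$, solving for $\Delta W_l$ inductively; the ``mild condition'' becomes non-singularity of $\prod_l W_l+\lr_r(\mE)$ for the relevant $r$. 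Your route---factor $W_j+\Delta W_j=W_j(I+F_j)$, conjugate the $W_j$'s out, and then write the resulting target as a product of $\lceil D/R\rceil$ factors of the form $I+N_j$ via block Gauss--Jordan---is a legitimate alternative, but the pivoting remark hides a real snag: a block-row swap is a rank-$2R$ perturbation of $I$, not rank-$R$, so ``permute when a pivot block is singular'' overspends the budget. What actually rescues you is a genericity hypothesis on the leading block-principal minors of $(\prod_j W_j)^{-1}\tgW$ (no pivoting needed), which is different from but just as mild as the paper's assumption. The paper's SVD-slicing argument buys one extra thing your construction does not: when $mR<D$ it directly yields the optimal approximation error $\sigma_{mR+1}(\mE)$, which is the content of Theorem~\ref{thm:fnn_approx}.
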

We present the detailed formulations of Theorem~\ref{thm:informal_fnn} under two scenarios: (i) applying a uniform rank across all LoRA adapters, as detailed in Theorem~\ref{thm:fnn_exact} and the specialized instance Corollary~\ref{corollary:fnn_exact} for randomly drawn frozen and target models; and (ii) allowing different ranks applied to each LoRA adapter, as described in Theorem~\ref{thm:general_fnn}.
To the best of our knowledge, this is \underline{the first known theoretical results on the expressive power of LoRA}. 
While this informal theorem is for exact approximation, we also derive the approximation bounds as well, \ie, we characterize the approximation error between the finetuned model and the target model as a function of the LoRA-rank, as provided in Theorem~\ref{thm:fnn_approx} for the uniform LoRA-rank scenario and Theorem~\ref{thm:general_fnn} for general cases. 
Furthermore, within the same framework, we investigate the expressive power of tuning the final layers for randomly generated frozen models, as described in Lemma~\ref{lemma:last_layers}. 
This result allows us to contrast LoRA and final layer tuning, thereby providing insights for future algorithm development.

We summarize our main findings on TFN in the following informal theorem.
\begin{theorem}[Informal]\label{thm:informal_tfn}
    Let $\tgf$ be the target TFN and $f_0$ be the frozen TFN.
    Under mild conditions on ranks and network architectures, 
    there exist low-rank adapters for attention weight matrices such that a low-rank adapted version of $f_0$ is exactly equal to $\tgf$. 
\end{theorem}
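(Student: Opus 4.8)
The plan is to reduce the claim to a sublayer-by-sublayer matching problem and to isolate a single linear-algebra lemma that explains the factor $\tfrac12$. First I would observe that a self-attention head, viewed as a function of its token matrix $X$, is determined by two $d\times d$ matrices: the score product $W_Q^{\top}W_K$ (the pre-softmax logits are $X^{\top}W_Q^{\top}W_K X$ up to the fixed $1/\sqrt{d_k}$ scaling) and the value-mixing product $W_O W_V$. So adapting a head to agree with a target head is exactly the problem of moving $W_Q^{\top}W_K$ to a prescribed $\widetilde W_Q^{\top}\widetilde W_K$ and $W_O W_V$ to a prescribed $\widetilde W_O\widetilde W_V$ using only low-rank edits of the individual factors; the softmax, the residual connections, and the (frozen) normalization sublayers then carry matched inputs to matched outputs, and induction over depth should close the argument.

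The heart of the proof is the following claim, which I would establish constructively. Given $P,Q\in\R^{d\times d}$ obeying a mild nondegeneracy condition (it suffices, for instance, that a chosen $d/2$-row block of $P$ has full row rank and that a certain rank-$(d/2)$ edit of $Q$ keeps it invertible — both generic when the frozen weights and the target product are generic) and any target $M\in\R^{d\times d}$, there exist $\Delta P,\Delta Q$ with $\rk(\Delta P)\le d/2$, $\rk(\Delta Q)\le d/2$, and $(P+\Delta P)(Q+\Delta Q)=M$. The construction: split the row indices into halves $I_1,I_2$ of size $d/2$; first pick $\Delta Q$ of rank $\le d/2$ solving the underdetermined system $P_{I_2}\Delta Q=M_{I_2}-P_{I_2}Q$, which pins down the $I_2$-rows of the product while keeping $Q+\Delta Q$ invertible; then pick $\Delta P$ supported on the $I_1$-rows (again rank $\le d/2$) to set the $I_1$-rows of the product equal to $M_{I_1}$, which is solvable since $Q+\Delta Q$ is invertible. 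Each factor thus repairs one half of the target, so half the ambient rank is enough — editing only one factor would cost rank $d$. I would then apply this lemma with $(P,Q)=(W_Q^{\top},W_K)$ and with $(P,Q)=(W_O,W_V)$ to obtain the four rank-$(d/2)$ adapters $\Delta W_Q,\Delta W_K,\Delta W_V,\Delta W_O$ realizing the target head, handling multi-head attention by repeating the construction head-by-head (with the head width $d_v=d/H$ entering the bookkeeping, and the non-square version of the lemma used there).

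Finally I would propagate equality through the network: once every attention sublayer equals its target counterpart, the residual stream after it equals the target's, the frozen same-architecture feed-forward and normalization sublayers act on equal inputs and return equal outputs, and induction on the layer index gives that the LoRA-adapted $f_0$ coincides with $\tgf$. The step I expect to be the main obstacle is not the algebra of the lemma but verifying and preserving its nondegeneracy hypotheses along the construction — in particular checking that the minimum-complexity $\Delta Q$ I select never makes $Q+\Delta Q$ singular (this is precisely where the ``mild conditions on ranks and network architectures'', i.e.\ genericity of the frozen weights and invertibility of the relevant target products, enter, possibly with a small explicit perturbation in borderline cases) and that the multi-head accounting never forces a rank above $d/2$. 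A related subtlety is the feed-forward sublayers: if the target is permitted to differ there as well, one must apply the same product-structure lemma to the two FFN weight matrices — legitimate when the activation is effectively linear on the reachable pre-activations, or when those matrices are adapted too — so I would either adapt all weight matrices in each block or state explicitly the version in which only the attention weights vary.
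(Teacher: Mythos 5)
Your central reduction is the same as the paper's: an attention head is determined by the two products $\mW_K^{\top}\mW_Q$ and $\mW_O\mW_V$, and the factor $\tfrac12$ comes from the fact that a product of two factors, each perturbed by a rank-$(D/2)$ edit, can realize an arbitrary target product. Your half-rows construction of that lemma is a valid and rather more elementary alternative to the paper's route (the paper reuses its general $L$-factor lemma, which splits the SVD of the error matrix $M-PQ$ into rank-$R$ pieces and therefore only needs rank $\lceil\rk(M-PQ)/2\rceil$ per factor rather than always $D/2$; your version loses that refinement but it is immaterial for the informal statement). The nondegeneracy worries you flag are also the same ones the paper dispatches with a genericity argument.

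The genuine gap is the feed-forward sublayers, and you have identified it without closing it. The target $\tgf$ has its own $\tgW_{1l},\tgW_{2l}$ differing from the frozen $\mW_{1l},\mW_{2l}$, yet the theorem only permits adapters on attention weights (plus, in the paper's formal version, $\dW_{2L}$, $\dW_o$ and the biases). Neither of your two escape hatches works: adapting the FFN matrices contradicts the statement and is anyway blocked by the ReLU sitting between $\mW_{1l}$ and $\mW_{2l}$ (your "effectively linear on the reachable pre-activations" caveat does not hold in general), while assuming the FFN weights already agree proves a strictly weaker theorem. The paper's missing idea is an absorption trick: it drops skip connections, so the FFN mismatch propagates through each block as an \emph{invertible linear} map of the residual stream, namely $\udZ_{l-1}=\mW_{2,l-1}\tgW_{2,l-1}^{-1}\tgZ_{l-1}$ after a suitable bias choice; this map, together with the factor $\mW_{1l}^{-1}\tgW_{1l}$ from the first FFN matrix, is folded into the \emph{targets} of the next block's key--query and value--output products (e.g. one solves $(\mW_{Ol}^h+\dW_{Ol}^h)(\mW_{Vl}^h+\dW_{Vl}^h)=\mW_{1l}^{-1}\tgW_{1l}\tgW_{Ol}^h\tgW_{Vl}^h\tgW_{2,l-1}\mW_{2,l-1}^{-1}$), so that the pre-ReLU activations $\udH_l=\tgH_l$ match exactly and the induction closes; the last residual mismatch is repaired by $\dW_{2L}$ and $\dW_o$. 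This absorption is incompatible with the residual connections you keep in your setup: with skip connections the mismatch is no longer a single invertible linear map of the stream, which is precisely why the paper excludes them. As written, your induction therefore does not go through for a target whose FFN weights differ from the frozen model's.
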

The formal statement of Theorem~\ref{thm:informal_tfn} is provided in Theorem~\ref{thm:exact_tfn_multihead}, with a specialized version in Corollary~\ref{theorem:random_tfn_approx} tailored for randomly generated frozen and target models. 

In Sec.~\ref{sec:exp} and \ref{app:exp}, we perform experiments on both synthetic and real datasets to substantiate our theoretical results, demonstrating the practical applicability of our theoretical findings in algorithm development and hyperparameter tuning.

\vspace{-.05in}
\subsection{Related Works}\label{sec:related_works}
\vspace{-.05in}
\paragraph{Expressive Power of Neural Networks}
Theoretical study of the expressive power of unfrozen neural networks has progressed since the first universal approximation theorem~\citep{hornik1989multilayer}, showing that sufficient network width and depth can guarantee function approximation~\citep{10.1007/978-3-642-24412-4_3,pmlr-v49-eldan16,liang2017why}. 
Many recent studies obtained similar results for deep neural networks with modern twists such as ReLU activations and Transformer networks~\citep{yun2020are,raghu2017expressive,telgarsky2016benefits,telgarsky2015representation,bietti2020deep,oymak2023on,lee2017ability,shen2019deep,abs-2106-03764,hsu2021on,park2021minimum,yun2020n,giannou2023looped}.
Metrics like Vapnik-Chervonenkis and Rademacher complexities~\citep{vapnik2015uniform,bartlett2002rademacher} assess classification capacity.
However, these theories \emph{cannot} fully explain the performance of frozen neural networks as they generally cannot factor in pre-trained model parameters and adaptation methods.

\vspace{-.1in}
\paragraph{Expressive Power of Adaptation Methods}
In stark contrast to the flourishing research on the expressive power of neural networks, there exists a limited number of works investigating the expressive power of adaptation methods. 
A notable exception is \citet{giannou2023expressive}, investigating the expressive power of normalization parameter fine-tuning. 
They demonstrate that fine-tuning the normalization layers alone can adapt a randomly initialized ReLU network to match any target network that is $O(\text{width})$ times smaller.
We borrow some proof techniques from this work, including techniques for extending results from linear neural networks to ReLU neural networks.
In another recent work~\citep{englert2022adversarial}, the authors show that \emph{neural reprogramming}~\citep{elsayed2018adversarial,engel2018latent,lee2020reprogramming,dinh2022improved,chen2022model}, a technique that modifies only the inputs while keeping the pretrained network frozen, can adapt any random two-layer ReLU network to achieve arbitrarily high accuracy on a Bernoulli data model over hypercube vertices.
Despite these early attempts, no existing study has yet explored the expressive power of LoRA, the current leading adaptation method.

A more detailed discussion of related works is provided in Sec.~\ref{app:related_works}.

\vspace{-.05in}
\subsection{Notations}
\vspace{-.05in}
Define $[N] := \set{1,2,\ldots, N}$.
Let the operators $\wedge $ and $\vee$ denote the minimum function and the maximum function, respectively. 
We use $\mI$ to represent the identity matrix.

For a sequence of $L$ matrices $\pt{\mW_l}_{l=1}^L$, we simplify the product of these matrices $\mW_L \mW_{L-1} \cdots \mW_1$ as $\prod_{l=1}^L \mW_l$, with matrices multiplied in descending order from $\mW_L$ to $\mW_1$.
When $m > n$, we define $\sum_{i=m}^n a_i = 0$ and $\prod_{i=m}^n a_i = 1$ for scalars $\pt{a_i}_{i=m}^n$, and $\sum_{i=m}^n \mW_i = \mO $ and $ \prod_{i=m}^n \mW_i = \mI$ for square matrices $\pt{\mW_i}_{i=m}^n$.

\emph{Singular Value Decomposition} (SVD) of the matrix $\mW$ can be expressed as $\mW = \mU \mD \mV^\top$, where $\mU, \mV \in \sR^{D\times D}$ are orthonormal matrices and $\mD \in\sR^{D\times D}$ is a diagonal matrix. 
The singular values, sorted in descending sequence, are represented on the diagonal of $\mD$, denoted as $\sigma_1(\mW) \geq \sigma_2(\mW) \geq \cdots \geq \sigma_D (\mW) \geq 0$, where $\sigma_d(\mW)$ denotes the $d$-th largest singular value for all $d \in [D]$. 
When $d > D$, $\sigma_d (\mW)$ is defined as zero.
The best rank-$r$ approximation (in the Frobenius norm or the $2$-norm) of $\mW$ is $\sum_{i=1}^{r}\sigma_i \vu_i \vv_i^T$, where $\vu_i$ and $\vv_i$ are the $i$-th column of $\mU$ and $\mV$, respectively~\citep{eckart1936approximation,mirsky1960symmetric}.
We denote this best rank-$r$ approximation by $\lr_r(\mW)$, where $\lr$ is a shorthand for ``Low-Rank''.
When $r \geq \rk(\mW)$, it is clear that $\lr_r(\mW) = \mW$.
Occasionally, the subscript $r$ may be omitted to indicate a general low-rank approximation without specifying the rank. 

\section{Warm up: Expressive Power of Linear Models with LoRA}\label{sec:linear_model}
Before delving into the expressive power of LoRA for FNN and TFN, we begin by investigating the simplest scenario: both the target model $\tgf$ and the frozen model $f_0$ are linear, \ie,
\begin{align}
    \text{Target Model} \quad  \tgf(\vx) = \tgW \vx, \quad \quad  \text{Frozen Model} \quad f_0(\vx) = \mW_L \cdots \mW_1 \vx = \left(\sprod_{l=1}^{L} \mW_l\right) \vx. 
\end{align}
This problem serves as a simplified version of approximating a target FNN, where the target model $\tgf$ has a single layer, the frozen model $f_0$ has $L$ layers, all bias vectors in both two models are zero, and the activation functions are linear.
Throughout this paper, for the sake of simplicity, we will assume that both models have the same number of neurons in each layer, i.e., $\tgW, \mW_1, \ldots, \mW_L \in \sR^{D \times D}$.
Nevertheless, our results are readily extendable to situations where the frozen model is wider than the target model, which is a more natural setting as the frozen models are often overparameterized to ensure high capacity and good performance across diverse tasks in practice. 
See the discussion in Sec.~\ref{dis:different_dim} for more details.


The objective here is to incorporate low-rank adapters into the frozen model so that the adapted model can effectively approximate the target model.
Unless otherwise specified, we always consider a uniform LoRA-rank for all low-rank adapters throughout this paper.
For a given LoRA-rank $R\in [D]$, we apply LoRA adapters $\dW_1, \ldots, \dW_L$ to the frozen model, and the adapted model can be represented as
\begin{align}
    \text{Adapted Model} \quad f(\vx) = (\mW_L + \dW_L) \cdots (\mW_1 + \dW_1) \vx,
\end{align}
where $\rk(\dW_l) \leq R$ for all $l \in [L]$.

Since the frozen model and adpated model are all linear, we can focus on quantifying the discrepancy between the linear coefficients, i.e., $\prod_{l=1}^L (\mW_l + \dW_l) - \tgW$.
In the subsequent lemma, we establish the minimal achievable norm, and identify the smallest LoRA-rank required for the adapted model to exactly represent the target model, \ie, $f = \tgf$, under a non-singularity assumption. 
We will demonstrate in Sec.~\ref{sec:multi_fnn_approx} that this non-singularity assumption is mild, as it can be satisfied even by randomly generated weight matrices.
\begin{lemma}\label{lemma:matrix_approx}
    Define error matrix $\mE := \tgW -  \prod_{l=1}^L \mW_l$, and denote its rank by $\re = \rk(\mE)$.
    For a given LoRA-rank $R \in [D]$, assume that all the weight matrices of the frozen model $(\mW_l)_{l=1}^L$, and $\prod_{l=1}^L \mW_l + \lr_{r} (\mE)$ are non-singular for all $r \leq R(L-1)$. 
    Then, we have the following:
    \begin{equation}
        \min_{\dW_l: \rk(\dW_l) \leq R} \tnorm{\sprod_{l=1}^L (\mW_l + \dW_l) - \tgW} = \sigma_{RL+1}(\mE).
    \end{equation}
    Thus, when $R \geq \ceil{\frac{\re}{L}} $, the optimal solution satisfies $\prod_{l=1}^L (\mW_l + \dW_l) = \tgW$, implying $f = \tgf$. 
\end{lemma}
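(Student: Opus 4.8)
The plan is to reduce the problem to a single-matrix low-rank approximation question by ``absorbing'' the adapters into a telescoping product. Write $\mW_l' = \mW_l + \dW_l$ and observe that $\prod_{l=1}^L \mW_l' = \prod_{l=1}^L \mW_l + \mA$ for some matrix $\mA$ whose rank we must control; conversely, given any target perturbation $\mA$ of rank at most $RL$ that keeps intermediate products non-singular, we want to realize it as such a telescoping product with each $\dW_l$ of rank at most $R$. So the statement splits into a lower bound (no choice of rank-$\leq R$ adapters can beat $\sigma_{RL+1}(\mE)$) and an achievability/construction part.

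For the lower bound, I would argue that $\prod_{l=1}^L \mW_l' - \prod_{l=1}^L \mW_l$ has rank at most $RL$: expanding the product over the $2^L$ choices of $\mW_l$ versus $\dW_l$ in each factor, every term except the all-$\mW_l$ term contains at least one factor $\dW_j$ (rank $\leq R$), hence has rank $\leq R$; summing at most $2^L-1$ such terms and, more carefully, grouping them by the leftmost occurrence of a $\dW_j$ gives rank $\leq RL$. Therefore $\prod_l \mW_l' - \tgW = (\prod_l \mW_l - \tgW) + (\text{rank} \leq RL \text{ matrix}) = -\mE + (\text{rank}\leq RL)$, and by the Eckart--Young--Mirsky theorem (cited in the excerpt) the $2$-norm of any such matrix is at least $\sigma_{RL+1}(\mE)$.

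For achievability I would peel off the adapters one layer at a time, left to right. Since $\mW_1, \dots, \mW_{L-1}$ are non-singular, I can freely choose $\dW_1, \dots, \dW_{L-1}$ to be rank $\leq R$ and then solve for $\dW_L$; but the cleaner route is to induct. Suppose inductively that with $L-1$ layers we can reach any product of the form $\prod_{l=1}^{L-1}\mW_l + \lr_{r'}(\mE')$ for appropriate $r' \leq R(L-1)$; the key algebraic step is to show that choosing $\dW_L$ of rank $\leq R$ lets us add one more ``rank-$R$ chunk'' of $\mE$'s SVD, using the non-singularity of $\prod_{l=1}^{L-1}\mW_l + \lr_{r'}(\mE')$ to invert it and express the needed correction as $\dW_L = (\text{desired product} - \text{current product})\cdot(\text{current product})^{-1}$, whose rank is controlled by the rank of the difference of the two best low-rank approximations of $\mE$, namely $\leq R$. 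Unwinding the induction, with $R$ layers of adapters we realize $\prod_l \mW_l + \lr_{RL}(\mE)$, giving $2$-norm error exactly $\tnorm{\mE - \lr_{RL}(\mE)} = \sigma_{RL+1}(\mE)$, and when $RL \geq \re$ this is $\mE$ itself so $f = \tgf$.

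I expect the main obstacle to be the achievability construction: specifically, tracking that the intermediate ``target products'' I want to hit at each inductive stage really do have the form $\prod_{l=1}^{k}\mW_l + \lr_{r}(\mE)$ with $r$ a multiple of $R$, that the non-singularity hypotheses in the lemma are exactly what is needed to invert at each step, and that the rank of each freshly introduced $\dW_l$ is genuinely $\leq R$ rather than $\leq 2R$ — this requires choosing the SVD ``chunks'' of $\mE$ consistently so that consecutive low-rank approximations $\lr_{(k)R}(\mE)$ and $\lr_{(k-1)R}(\mE)$ differ by a matrix of rank exactly $\leq R$, which is immediate from using a fixed SVD of $\mE$ throughout. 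The lower bound, by contrast, is a short rank-counting argument plus Eckart--Young.
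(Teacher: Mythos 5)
Your proposal is correct and follows essentially the same route as the paper's proof: the lower bound via the telescoping decomposition of $\prod_l(\mW_l+\dW_l)-\prod_l\mW_l$ into $L$ terms each of rank at most $R$ (your ``group by leftmost $\dW_j$'' is exactly this) followed by Eckart--Young--Mirsky, and achievability by inductively choosing each $\dW_l$ to append one rank-$R$ chunk of a fixed SVD of $\mE$, with the non-singularity hypothesis used precisely to invert the intermediate partial products. The bookkeeping obstacles you flag at the end are exactly what the paper's induction resolves.
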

\begin{proof}[Proof Sketch]
    We start the proof by noting that the distance between the adapted and target models 
    \begin{gather}
       \tnorm{\prod_{l=1}^L (\mW_l + \dW_l) - \tgW} = \tnorm{\pt{\prod_{l=1}^{L}(\mW_l + \dW_l) - \prod_{l=1}^L \mW_l} - \pt{\tgW - \prod_{l=1}^L \mW_l}}.
    \end{gather}
    The remaining proof aims to minimize the right-hand side under the constraint $\rk(\dW_l) \leq R$ for all $l \in [L]$.
    The basic idea here is to match $\prod_{l=1}^{L}(\mW_l + \dW_l) - \prod_{l=1}^L \mW_l$ with the best rank-$r$ approximation of $\tgW - \prod_{l=1}^L \mW_l$. 
    The key steps to solve this problem are as follows.
    \begin{enumerate}[leftmargin=*]
        \item Demonstrate that $\prod_{l=1}^L (\mW_l + \dW_l) - \prod_{l=1}^L \mW_l$ can be decomposed into $L$ terms:
    \begin{align}
        \sprod_{l=1}^L (\mW_l + \dW_l) - \sprod_{l=1}^L \mW_l =  \sum_{l=1}^L \pt{\sprod_{i=l+1}^L \mW_i} \dW_l\pt{\sprod_{i=1}^{l-1} (\mW_i + \dW_i)}.
    \end{align}
    Since $\rk(\dW_l) \leq R$, it follows that $\rk\pt{\prod_{l=1}^L (\mW_l + \dW_l) - \prod_{l=1}^L \mW_l} \leq RL.$
    \item Consider the rank-$RL$ approximation of $\tgW - \prod_{l=1}^L \mW_l$.
    Decompose this low-rank approximation into $L$ terms $\sum_{l=1}^L \mE_l $ such that $\rk(\mE_l) \leq R$, where $\mE_l$'s will be determined later.
    \item To match $\prod_{l=1}^{L}(\mW_l + \dW_l) - \prod_{l=1}^L \mW_l$ with the rank-$RL$ approximation of $\tgW - \prod_{l=1}^L \mW_l$, we let $\pt{\prod_{i=l+1}^L \mW_i} \dW_l\pt{\prod_{i=1}^{l-1} (\mW_i + \dW_i)} = \mE_l$ by choosing $\dW_l= \pt{\prod_{i=l+1}^L \mW_i}^{-1} \mE_l \pt{\prod_{i=1}^{l-1} (\mW_i + \dW_i)}^{-1}$.
    
    \item Select appropriate $(\mE_l)_{l=1}^L$ such that $\mW_i + \dW_i$ are invertible for $i \in [L]$. 
    \end{enumerate}
    \vspace{-.35in}
\end{proof}
The complete proof and the explicit construction of optimal LoRA adapters, are detailed in Sec.~\ref{sec:proof_matrix_approx}.

In fact, this lemma delivers a crucial insight. When we consider $L = 1$ and $R = D$, the lemma becomes strikingly similar to the Eckart–Young–Mirsky theorem~\citep{eckart1936approximation,mirsky1960symmetric}. 
However, there is a significant difference from the classical theorem on the optimal low-rank approximation, which involves a single target matrix and a single matrix as an optimization variable. 
Our lemma demonstrates that a comparable result can be achieved for a ``product of matrices,'' where each matrix is optimized subject to a low-rank constraint. 
That being said, even though each matrix is constrained by a low rank, the ``effective rank'' is the sum of these low ranks, \ie, in this scenario, is $LR$. Consequently, once the low-rank adapters are optimally configured, one can make the product equal to the best rank $LR$-approximation of the target matrix. 
This can be viewed as an extension of the matrix approximation theorem to a product of matrices, each subject to low-rank constraints.
Our main theoretical results on the expressive power of LoRA, which we will present in the subsequent sections, will build upon this core matrix approximation result.

\section{Expressive Power of FNNs with LoRA}\label{sec:result_fnn} 

\subsection{Problem Setting}
We use $\fnn_{L,D}(\cdot; (\mW_l)_{l=1}^L, (\vb_l)_{l=1}^L)$ to denote a $L$-layer width-$D$ fully connected ReLU neural network with weight matrices $\mW_l \in \sR^{D\times D}$ and biases $\vb_l \in \sR^{D}$, where $l \in [L]$.
The target FNN $\tgf$ and frozen FNN $f_0$ can be represented as follows: 
\begin{align}
    \text{Target FNN} ~  \tgf: = \fnn_{\tgL, D}(\cdot; (\tgW_l)_{l=1}^L, (\tgb_l)_{l=1}^L),~
    \text{Frozen FNN} ~ f_0 := \fnn_{L, D} (\cdot; (\mW_l)_{l=1}^L, (\vb_l)_{l=1}^L),
\end{align}
where $\tgW_l \in \sR^{D \times D}$ and $\tgb_l \in \sR^{D}$ represent the weight matrix and bias vector for the $l$-th layer of the target model $\tgf$, respectively. 
Likewise, $\mW_l \in \sR^{D\times D}$, $\vb_l \in \sR^D$ are those for $f_0$, for layer $l \in [L]$.
Given a specified LoRA-rank $R \in [D]$, we adapt the frozen FNN $f_0$ into a new model $f$ via LoRA.
The adapted model $f$ is defined as
\begin{align}
    \text{Adapted FNN} \quad & f := \fnn_{L,D} (\cdot ; (\mW_l + \dW_l)_{l=1}^L, (\udb_l)_{l=1}^L),
\end{align}
where the weight matrix for the low-rank adapter $\dW_l \in \sR^{D\times D}$ satisfies specified rank constraints, updated bias vector $\udb_l \in \sR^{D}$ for $l \in [L]$\footnote{We consider the case where the bias parameters can also be updated, as suggested by \citet{hu2022lora}. Experiments investigating the impact of updating bias parameters are presented in Sec.~\ref{sec:bias_tuning}.}. 

As noted in Sec.~\ref{sec:linear_model}, it is common for the pretrained model to be larger than necessary.
Therefore, we focus on a setting where the frozen model is deeper than the target model, \ie, $L \geq \tgL$.
Furthermore, in this section, we let the input space $\gX \in \sR^{D\times D}$ be bounded.

\subsection{One-Layer ReLU FNN Approximation}\label{sec:one_layer_fnn}
We start with investigating the expressive power of LoRA on one-layer FNN.
In this setting, our aim is to identify LoRA adapters $(\dW_l)_{l=1}^L$ and bias vectors $(\udb_l)_{l=1}^L$ such that the adapted model 
\begin{align}
    \relu( (\mW_L + \dW_L) \cdot \relu( (\mW_{L-1} + \dW_{L-1}) \cdot \relu ( \cdots)+ \udb_{L-1}) + \udb_L)
\end{align}
closely approximates the target one-layer ReLU FNN model $\relu (\tgW_1 \cdot + \tgb_1)$. 

This differs from the setting described in Sec.~\ref{sec:linear_model}, where a multi-layer FNN with linear activation functions and zero biases was used to approximate a one-layer FNN with the same properties.  
In the current setting, we introduce non-linearity through the use of ReLU activation functions in the frozen model and also take biases into account.
Consequently, to generalize the findings to this new setting, addressing the introduced non-linearity due to the ReLU activation functions in the frozen model is the main challenge.

We employ the following two steps to extend the results in Sec.~\ref{sec:linear_model} to the current setting.
\begin{enumerate}[leftmargin=*]
    \item (Linearization) We eliminate the nonlinearity in the first $L-1$ layers of the adapted model, making it equivalent to a one-layer ReLU FNN. 
    This can be readily achieved by choosing sufficiently large bias vectors for the first $L-1$ layers to ensure that all ReLUs in these layers are activated. 
    This technique of eliminating non-linearity is inspired by \citet{giannou2023expressive}.
    \item (Weight Matrix Alignment) We update the bias vectors of the last layer $\udb_L$ to align with that of the target model $\tgf$, and apply the linear model approximation results (\ie, Lemma~\ref{lemma:matrix_approx}) to identify the low-rank adapters that match the weight matrix $\tgf$. 
\end{enumerate}

Following the steps above, we arrive at the subsequent lemma, which demonstrates that any one-layer FNN can be closely approximated by a multi-layer FNN finetuned via LoRA. 
The complete proof is provided in Sec.~\ref{sec:1_layer_fnn}.
\begin{lemma}\label{lemma:1_fnn_approx}
    Define error matrix $\mE := \tgW_1 - \prod_{l=1}^L \mW_l$, with its rank represented by $\re = \rk(\mE)$.
    Consider a LoRA-rank $R \in [D]$. 
    Assume that the weight matrices $\mW_1, \ldots, \mW_L \in \sR^{D \times D}$ and $\prod_{l=1}^L \mW_l + \lr_{r} (\mE)$ for all $r \leq R(L-1)$ are non-singular. 
    Let $\rvx$ be a random input sampled from a distribution with bounded support $\gX$ and 
    let $\Sigma = \E \rvx \rvx^\top$. 
    Then, there exists rank-$R$ or lower matrices $\dW_1, \ldots, \dW_L \in \sR^{D\times D}$ and bias vectors $\udb_1, \ldots, \udb_L \in \sR^D$ such that the expected squared error can be bounded as
    \begin{equation}
        \E \tnorm{ f(\rvx) - \tgf(\rvx)}^2 \leq \fnorm{\Sigma} \sigma_{RL +1}^2(\mE).
    \end{equation}
    Moreover, when $R \geq \ceil{\frac{R_{\mE}}{L}}$, we have $f(\vx) =  \tgf(\vx)$ for all $\vx \in \gX$.
\end{lemma}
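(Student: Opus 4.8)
\textbf{Proof plan for Lemma~\ref{lemma:1_fnn_approx}.} The plan is to reduce the ReLU approximation problem to the linear algebra statement of Lemma~\ref{lemma:matrix_approx} by forcing all the ReLU gates in the first $L-1$ layers to be active, so that the adapted network acts as a single affine map composed with one final ReLU. First I would choose the adapters so that the product of the first $L-1$ adapted weight matrices is well-behaved; concretely, I want the pre-activation going into layer $l$ (for $l \le L-1$) to be a positive vector on all of $\gX$. Since $\gX$ is bounded, each intermediate pre-activation lies in a bounded region for any fixed choice of weights, so adding a sufficiently large constant to each of the bias vectors $\udb_1,\dots,\udb_{L-1}$ makes every coordinate of every intermediate pre-activation positive; on the positive orthant $\relu$ is the identity. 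This is the ``Linearization'' step, and it is the one borrowed from \citet{giannou2023expressive}.

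Once linearization holds, the first $L-1$ layers collapse: on $\gX$ the adapted network computes
\begin{align}
    f(\vx) = \relu\!\left( (\mW_L + \dW_L)\Big(\sprod_{l=1}^{L-1}(\mW_l+\dW_l)\Big)\vx + \vc \right)
\end{align}
for some constant vector $\vc$ determined by the chosen biases (and the weights), while $\tgf(\vx) = \relu(\tgW_1 \vx + \tgb_1)$. Here is a subtlety to handle: fixing large biases for the first $L-1$ layers constrains those layers, but the adapters $\dW_1,\dots,\dW_{L-1}, \dW_L$ are still free subject only to the rank bound, and by Lemma~\ref{lemma:matrix_approx} (applied to the frozen weights $(\mW_l)_{l=1}^L$ and target $\tgW_1$, whose non-singularity hypotheses are exactly the ones assumed here) there is a choice of rank-$R$ adapters with $\prod_{l=1}^L(\mW_l+\dW_l)$ equal to the best rank-$RL$ approximation of $\tgW_1$ shifted by $\prod \mW_l$; I must double-check that the large-bias modification does not disturb the invertibility bookkeeping used in that lemma's construction, but since biases do not enter the weight-product argument at all, the construction goes through verbatim. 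Then I set $\udb_L$ so that $\vc = \tgb_1$ exactly. The residual affine map is therefore $\big(\lr_{RL}(\mE)-\mE\big)\vx = -\sum_{i>RL}\sigma_i(\mE)\vu_i\vv_i^\top \vx$ acting inside a common $\relu$.

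\textbf{Bounding the error.} With both networks now of the form $\relu(\mA_1\vx+\tgb_1)$ and $\relu(\mA_2\vx+\tgb_1)$ sharing the same bias, I use $1$-Lipschitzness of $\relu$ coordinatewise to get $\tnorm{f(\rvx)-\tgf(\rvx)} \le \tnorm{(\mA_1-\mA_2)\rvx}$ pointwise, hence
\begin{align}
    \E\,\tnorm{f(\rvx)-\tgf(\rvx)}^2 \le \E\,\tnorm{(\mA_1-\mA_2)\rvx}^2 = \E\,\Tr\!\big((\mA_1-\mA_2)\rvx\rvx^\top(\mA_1-\mA_2)^\top\big) = \Tr\!\big((\mA_1-\mA_2)\Sigma(\mA_1-\mA_2)^\top\big),
\end{align}
and then bound this trace by $\fnorm{\mA_1-\mA_2}\,\fnorm{\Sigma}$ (Cauchy–Schwarz / submultiplicativity of the trace pairing, since $\Sigma \succeq 0$) — or, being slightly more careful, by $\fnorm{\Sigma}\,\tnorm{\mA_1-\mA_2}\,\rk(\mA_1-\mA_2)^{1/2}$-type estimates; the cleanest route giving the stated $\fnorm{\Sigma}\sigma_{RL+1}^2(\mE)$ is $\Tr(\mN\Sigma\mN^\top)\le \fnorm{\Sigma}\,\tnorm{\mN}_2^2$ when... actually the intended bound likely uses $\fnorm{\Sigma}\,\sigma_{RL+1}^2(\mE)$ via $\tnorm{\mA_1-\mA_2}_2 = \sigma_{RL+1}(\mE)$ from Lemma~\ref{lemma:matrix_approx} together with $\Tr(\mN\Sigma\mN^\top) \le \tnorm{\mN}_2^2 \Tr(\Sigma) \le \tnorm{\mN}_2^2\sqrt{D}\fnorm{\Sigma}$ — I would reconcile the constant against the lemma statement and present whichever inequality yields exactly $\fnorm{\Sigma}\sigma_{RL+1}^2(\mE)$. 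Finally, when $R \ge \lceil \re/L\rceil$ we have $RL \ge \re$, so $\sigma_{RL+1}(\mE)=0$, the best rank-$RL$ approximation equals $\mE$ itself, $\mA_1=\mA_2=\tgW_1$, and $f\equiv\tgf$ on $\gX$.

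\textbf{Main obstacle.} The routine parts are the Lipschitz bound and the trace manipulation. The genuinely delicate step is verifying that the linearization (large biases) and the Lemma~\ref{lemma:matrix_approx} construction are mutually compatible — i.e.\ that I can simultaneously (a) keep all first-$(L-1)$ pre-activations positive on the bounded set $\gX$, which depends on the \emph{adapted} weights, and (b) realize the prescribed weight product from Lemma~\ref{lemma:matrix_approx}, whose optimal adapters are a \emph{specific} construction. Since $\gX$ is bounded and the adapted weights from the lemma are some fixed finite matrices, the intermediate pre-activations range over a fixed bounded set, so a large enough additive constant in each $\udb_l$ ($l<L$) suffices; the order of quantifiers is ``first fix the adapters from Lemma~\ref{lemma:matrix_approx}, then choose the biases,'' and making that order explicit is what makes the argument airtight.
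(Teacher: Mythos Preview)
Your proposal is correct and follows essentially the same approach as the paper: linearize the first $L-1$ layers via large biases (after fixing the adapters from Lemma~\ref{lemma:matrix_approx}, exactly the order of quantifiers you identify), match the final bias to $\tgb_1$, and then bound the error using $1$-Lipschitzness of $\relu$. For the error step where you hesitate, the paper avoids the trace manipulation entirely and uses the simpler pointwise bound $\tnorm{(\mA_1-\mA_2)\rvx}^2 \le \tnorm{\mA_1-\mA_2}^2\,\tnorm{\rvx}^2$ together with $\E\tnorm{\rvx}^2 = \Tr(\Sigma)$, which it then identifies with $\fnorm{\Sigma}$.
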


\subsection{Multi-Layer ReLU FNN Approximation}\label{sec:multi_fnn_approx}
We now generalize our discussion to the approximation of multi-layer ReLU FNNs. 
The key strategy for extending the results to approximating multi-layer ReLU FNNs under LoRA is model partition, inspired from \citet{giannou2023expressive}.
To elucidate this, we start with a specific example.

\begin{example}\label{example:naive}
    Consider the case where $\tgL = 2$ and $L = 4$.
    We view a two-layer target model $\tgf$ as a composition of two one-layer ReLU FNNs.
    Accordingly, we partition the four-layer adapted model $f$ into two submodels, each consisting of two layers.
    For each layer in the target model, we utilize two corresponding layers in the frozen/adapted model for approximation. 
    This problem then simplifies into a one-layer FNN approximation problem, which has already been addressed in Lemma~\ref{lemma:1_fnn_approx}.
\end{example}

Based on this example, we introduce a ordered partition $\gP = \set{P_1, \ldots, P_{\tgL}} $ to partition the layers in the adapted model $f$, where $\bigcup_{i=1}^{\tgL} P_i = [L]$.
Each element $P_i \in \gP$ consists of consecutive integers. 
Given a partition $\gP$, each element $P_i$ specifies that the layers with index $l \in P_i$ in the adapted model will be used to approximate the $i$-th layer in the target model. 
Example~\ref{example:naive}, which uses every two layers in the adapted model to approximate each layer in the target model, can be considered as a partition represented as $\set{ \set{1, 2}, \set{3,4}}$.
Similarly, we extend this simple uniform partition into general cases for $\tgL$-layer target FNN and $L$-layer frozen FNN: 
\begin{align}
    \uP = \set{\ueP_1, \ldots, \ueP_{\tgL} } := \set{ \set{1,\ldots,  \tdL}, \set{\tdL+1,\ldots, 2\tdL}, \ldots, \set{ (\tgL-1) \tdL + 1,\ldots, L } },
\end{align}
where $\tdL:= \floor{L/\tgL}$. 
The uniform partition indicates that every $\tdL$ layers in the adapted model are employed to approximate each layer in the target model. 
We use $\prod_{l \in P_i} \mW_l$ to denote the product of the weight matrices from the layers $l \in P_i$, with the later layer positioned to the left and the earlier layer to the right in the matrix product.
For example, $\prod_{l \in \ueP_1 } \mW_l = \prod_{l=1}^{\tdL} \mW_l = \mW_{\tdL} \cdots \mW_1$.

We first extend Lemma~\ref{lemma:1_fnn_approx} to multi-layer FNN approximation setting using this uniform partition. 

\paragraph{Uniform Model Partition.} Given a specified LoRA-rank $R\in [D]$, to derive our results, we introduce a mild non-singularity assumption on the weight matrices of the target model and frozen model for the feasibility of our analysis.
This assumption is mild, supported by Lemma~\ref{lemma:full_rank_assump} that even weight matrices initialized at random can meet this requirement.
\begin{assumption}[Non-Singularity]\label{assump:fnn}
    For a fixed LoRA-rank $R \in [D]$,
    the weight matrices of the frozen model $(\mW_l)_{l=1}^L$ and matrices
    $\pt{\prod_{l\in \ueP_i} \mW_l} + \lr_{r} (\tgW_i - \prod_{l\in \ueP_i} \mW_l)$ are non-singular for all $r \leq R (\tdL - 1) $ and $i \in [\tgL]$. 
\end{assumption}
\begin{lemma}\label{lemma:full_rank_assump}
    Let $(\tgW_l)_{l=1}^{\tgL}, (\mW_l)_{l=1}^L \in \sR^{D\times D}$ be matrices whose elements are drawn independently from arbitrary continuous distributions.
    Then, with probability 1, Assumption~\ref{assump:fnn} holds $\forall R \in [D]$.
\end{lemma}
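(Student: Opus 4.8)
The plan is to show that the finitely many matrices that Assumption~\ref{assump:fnn} requires to be non-singular are, as polynomials in the entries of the random matrices $(\tgW_l)$ and $(\mW_l)$, not identically zero; then a standard measure-zero argument finishes the proof. Concretely, Assumption~\ref{assump:fnn} for a fixed $R$ asks that (i) each $\mW_l$ is non-singular, and (ii) for each $i \in [\tgL]$ and each $r \le R(\tdL-1)$, the matrix $\pt{\prod_{l\in\ueP_i}\mW_l} + \lr_r\pt{\tgW_i - \prod_{l\in\ueP_i}\mW_l}$ is non-singular. Since $R$ ranges over $[D]$ and $r$ over a finite set, the whole statement is a \emph{countable} (in fact finite) intersection of events, so it suffices to show each individual event has probability $1$; the intersection then also has probability $1$.

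For (i), the determinant of $\mW_l$ is a nonzero polynomial in its entries (it is nonzero at the identity), so $\{\det \mW_l = 0\}$ has Lebesgue measure zero, hence probability zero under any continuous product distribution; this handles all $l$. For (ii) the subtlety is that $\lr_r(\cdot)$, the best rank-$r$ approximation, is not a polynomial (or even everywhere-smooth) function of the matrix entries, so I cannot directly invoke the ``nonzero polynomial vanishes on a measure-zero set'' lemma. The fix is to exhibit, for each $(i,r)$, a \emph{single} explicit choice of the random matrices at which the relevant matrix is non-singular, and to argue that non-singularity persists on an open neighborhood — then use a slightly more careful argument than bare polynomiality. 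The cleanest route: condition on $(\mW_l)_{l=1}^L$ (which already satisfy (i) almost surely, so $\prod_{l\in\ueP_i}\mW_l$ is invertible), and regard $\tgW_i$ as the only free variable. Write $\mM_i := \prod_{l\in\ueP_i}\mW_l$ and $\mE_i := \tgW_i - \mM_i$. For the generic $\tgW_i$, $\mE_i$ has distinct singular values and thus $\lr_r(\mE_i)$ is locally a real-analytic function of $\tgW_i$; moreover, I can pick $\tgW_i$ so that $\mE_i$ is diagonal with distinct positive entries, in which case $\mM_i + \lr_r(\mE_i)$ is itself diagonal with explicit nonzero diagonal entries (choosing those entries to avoid the finitely many bad values that would make $\mM_i + \lr_r(\mE_i)$ singular). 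This shows the ``bad set'' $\{\,\mM_i + \lr_r(\mE_i) \text{ singular}\,\}$ is a proper subset; combined with the fact that on the open dense set where $\mE_i$ has simple singular values the map $\tgW_i \mapsto \det(\mM_i + \lr_r(\mE_i))$ is real-analytic and not identically zero, its zero set is measure zero. Averaging over the (probability-one) conditioning on $(\mW_l)$ via Fubini gives probability zero for the bad event.

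The main obstacle is precisely the non-smoothness of $\lr_r$: the naive polynomial-genericity argument does not apply, so I need the two-layer argument above — first establish real-analyticity of $\lr_r$ on the open dense locus of matrices with simple singular values (a standard fact from perturbation theory of the SVD, e.g.\ via analytic eigenprojections of $\mE_i^\top \mE_i$), and second produce an explicit witness point where $\mM_i + \lr_r(\mE_i)$ is non-singular to conclude the analytic function is not identically zero. A mild wrinkle is the case $r \ge \rk(\mE_i)$, where $\lr_r(\mE_i) = \mE_i$ and the matrix in question is just $\tgW_i$, which is non-singular almost surely — so that sub-case reduces immediately to a determinant computation. Once these pieces are in place, taking the finite intersection over $i$, $r$, $R$ and the union with the events $\{\det \mW_l = 0\}$ yields a measure-zero exceptional set, i.e.\ Assumption~\ref{assump:fnn} holds for all $R \in [D]$ with probability $1$.
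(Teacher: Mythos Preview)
Your argument is essentially correct, but it takes a considerably more involved route than the paper. You condition on $(\mW_l)_{l=1}^L$, which fixes $\mM_i=\prod_{l\in\ueP_i}\mW_l$ and leaves $\tgW_i$ random; this forces you to analyze the map $\tgW_i\mapsto\det(\mM_i+\lr_r(\tgW_i-\mM_i))$, and since $\lr_r$ is not algebraic you are pushed into a real-analyticity argument on the open locus of matrices with simple singular values, plus an explicit witness point. That works, but note a small gap: to pass from ``real-analytic and not identically zero'' to ``zero set has measure zero'' you need the open locus to be connected (or a witness in every component), which you do not address. It is true (one can path-connect through a rank-deficient diagonal matrix with distinct singular values), but it is an extra step.

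The paper sidesteps all of this by conditioning on the \emph{error matrix} $\mE_i:=\tgW_i-\prod_{l\in\ueP_i}\mW_l$ rather than on $(\mW_l)$. Once $\mE_i$ is fixed, $\lr_r(\mE_i)$ is a \emph{constant} matrix---the non-smoothness of $\lr_r$ becomes irrelevant. The matrix in question is then $\mM_i+\lr_r(\mE_i)$, a deterministic shift of $\mM_i$, and one only needs that the conditional distribution of $\mM_i$ given $\mE_i$ is absolutely continuous; this follows because $\tgW_i$ and $\mM_i$ are independent and each absolutely continuous, so $(\mE_i,\mM_i)$ is a linear change of variables of a jointly absolutely continuous pair. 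Lemma~\ref{lemma:random_matrix_nonsingular} then gives non-singularity almost surely, and Fubini closes the argument. In short: your choice of conditioning variable creates the analyticity obstacle; the paper's choice dissolves it. Both approaches yield the result, but the paper's is shorter and avoids perturbation theory for the SVD entirely.
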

Given this assumption, here we present our first main result, which shows that any frozen FNN can be adapted to exactly approximate the target FNN via LoRA.
\begin{theorem}\label{thm:fnn_exact}
    Under Assumption~\ref{assump:fnn}, if LoRA-rank $R \geq \ceil{\max_{i\in[\tgL]}\rk (\tgW_i - \prod_{l\in\ueP_i} \mW_l)/\tdL}$, then there exists rank-$R$ or lower matrices $\dW_1, \ldots, \dW_L \in  \sR^{D\times D}$ and bias vectors $\udb_1, \ldots, \udb_L \in \sR^D$ such that the low-rank adapted model $f$ can exactly approximate the target model $\tgf$, \ie, $f(\vx) = \tgf(\vx)$, $\forall \vx \in \gX$. 
\end{theorem}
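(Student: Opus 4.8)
The plan is to reduce the multi-layer-to-multi-layer adaptation problem to repeated application of the one-layer result, Lemma~\ref{lemma:1_fnn_approx}, via the uniform partition $\uP = \set{\ueP_1,\ldots,\ueP_{\tgL}}$. I would decompose the $L$-layer adapted model $f$ into $\tgL$ consecutive blocks, where the $i$-th block consists of the $\tdL = \floor{L/\tgL}$ layers indexed by $\ueP_i$ (with the final block absorbing any leftover layers when $\tgL \nmid L$). The goal is to choose the LoRA adapters $(\dW_l)_{l \in \ueP_i}$ and updated biases within block $i$ so that the block computes exactly the $i$-th layer map $\vz \mapsto \relu(\tgW_i \vz + \tgb_i)$ of the target model $\tgf$. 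Since composing $\tgL$ such exact blocks reproduces $\tgf$ exactly on all of $\gX$, this suffices.

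First I would set up the block-wise induction carefully: the input to block $i$ in the adapted model is the output of block $i-1$, which by the inductive hypothesis equals the output of the first $i-1$ layers of $\tgf$ on input $\vx$; because $\gX$ is bounded and each target layer is a ReLU affine map, these intermediate activations range over a bounded set $\gX_i$. This lets me invoke the one-layer construction (the linearization step that saturates the ReLUs in the first $\tdL-1$ layers of the block using large biases, followed by the weight-matrix-alignment step that applies Lemma~\ref{lemma:matrix_approx}/Lemma~\ref{lemma:1_fnn_approx} to the product $\prod_{l\in\ueP_i}\mW_l$ against the target matrix $\tgW_i$) separately on each block. The error matrix for block $i$ is $\mE_i := \tgW_i - \prod_{l\in\ueP_i}\mW_l$, and by Lemma~\ref{lemma:1_fnn_approx} the block reproduces $\relu(\tgW_i\,\cdot\, + \tgb_i)$ exactly as soon as $R \geq \ceil{\rk(\mE_i)/\tdL}$. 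Taking the maximum over $i\in[\tgL]$ gives the stated rank threshold $R \geq \ceil{\max_{i\in[\tgL]}\rk(\tgW_i - \prod_{l\in\ueP_i}\mW_l)/\tdL}$, and Assumption~\ref{assump:fnn} is exactly the per-block non-singularity hypothesis needed to run Lemma~\ref{lemma:1_fnn_approx} on every block simultaneously.

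The main obstacle I anticipate is handling the interface between consecutive blocks cleanly. The linearization trick requires the ReLUs in the early layers of a block to be fully activated, which is arranged by adding a large positive constant to the biases; but that constant shift must then be exactly undone, and its effect is pushed through the (now linear) remaining layers of the block and must be corrected by the bias update of the block's final layer — and that correction depends on the block's input range $\gX_i$, which is itself determined by earlier blocks. So the construction is genuinely sequential: one fixes block $1$, which pins down $\gX_2$, then fixes block $2$, and so on. I would need to verify that the bias magnitudes required for saturation can indeed be chosen uniformly over the (bounded) input set at each stage, and that the weight-alignment step within a block is unaffected by these bias shifts (it only touches the weight matrices, which is why the decomposition into a "linearization via biases" part and a "weight matching via Lemma~\ref{lemma:matrix_approx}" part is essential). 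A secondary bookkeeping point is the non-uniform last block when $\tgL$ does not divide $L$: there $\ueP_{\tgL}$ has more than $\tdL$ layers, which only helps (more layers means effective rank at least $R\tdL \geq R\cdot|\ueP_{\tgL}|_{\min}$), so the threshold stated with $\tdL$ remains sufficient. Once these interface details are in place, the theorem follows by straightforward composition.
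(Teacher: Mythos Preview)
Your proposal is correct and follows essentially the same approach as the paper: decompose both the adapted model and the target model into $\tgL$ blocks along the uniform partition $\uP$, then apply the one-layer result (Lemma~\ref{lemma:1_fnn_approx}) to each block so that block $i$ realizes exactly $\vz \mapsto \relu(\tgW_i\vz + \tgb_i)$, whence the composition equals $\tgf$ on all of $\gX$. Your discussion of the sequential construction (fixing the bounded intermediate domains $\gX_i$ block by block so that the bias-saturation argument in Lemma~\ref{lemma:1_fnn_approx} applies) and of the possibly larger final block is more explicit than the paper's treatment but entirely in line with it.
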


Moreover, combining Lemma~\ref{lemma:full_rank_assump} and Theorem~\ref{thm:fnn_exact} gives the following corollary.

\begin{corollary}\label{corollary:fnn_exact}
    Assume that the elements of $(\tgW_l)_{l=1}^{\tgL}, (\mW_l)_{l=1}^L$ are independently drawn from arbitrary continuous distributions.
    When $R \geq D/\tdL$, with probability 1, there exists rank-$R$ or lower matrices $\dW_1, \ldots, \dW_L \in \sR^{D\times D}$ and bias vectors $\udb_1, \ldots, \udb_L \in \sR^D$ such that low-rank adapted model $f$ can exactly approximate the target model $\tgf$ on $\gX$, \ie, $f(\vx) =  \tgf(\vx)$, $\forall \vx \in \gX$. 
\end{corollary}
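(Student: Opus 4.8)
The plan is to derive Corollary~\ref{corollary:fnn_exact} by composing Lemma~\ref{lemma:full_rank_assump} with Theorem~\ref{thm:fnn_exact}, with essentially no new work beyond bookkeeping on the rank quantities. First I would invoke Lemma~\ref{lemma:full_rank_assump}: since the entries of $(\tgW_l)_{l=1}^{\tgL}$ and $(\mW_l)_{l=1}^L$ are drawn independently from (arbitrary) continuous distributions, Assumption~\ref{assump:fnn} holds with probability $1$ simultaneously for every $R \in [D]$. Hence on this probability-$1$ event we may freely apply Theorem~\ref{thm:fnn_exact} for the particular $R$ of interest.

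Next I would control the rank threshold appearing in Theorem~\ref{thm:fnn_exact}, namely $\ceil{\max_{i\in[\tgL]} \rk(\tgW_i - \prod_{l\in\ueP_i}\mW_l)/\tdL}$. Each matrix $\tgW_i - \prod_{l\in\ueP_i}\mW_l$ lies in $\sR^{D\times D}$, so its rank is at most $D$; therefore the threshold is at most $\ceil{D/\tdL}$. The remaining point is to note that $\ceil{D/\tdL} \le D/\tdL$ fails in general, so the hypothesis should be read as $R \ge D/\tdL$ forcing $R \ge \ceil{D/\tdL}$ whenever $R$ is an integer — which it is, being a LoRA-rank in $[D]$. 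Concretely, if $R \ge D/\tdL$ and $R \in \sZ$, then $R \ge \ceil{D/\tdL} \ge \ceil{\max_{i}\rk(\tgW_i - \prod_{l\in\ueP_i}\mW_l)/\tdL}$, so the rank condition of Theorem~\ref{thm:fnn_exact} is met.

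Finally, on the probability-$1$ event from Lemma~\ref{lemma:full_rank_assump}, Theorem~\ref{thm:fnn_exact} directly yields rank-$R$-or-lower adapters $\dW_1,\ldots,\dW_L$ and bias vectors $\udb_1,\ldots,\udb_L$ with $f(\vx) = \tgf(\vx)$ for all $\vx \in \gX$, which is exactly the claimed conclusion. The only genuinely nontrivial ingredients have already been established upstream (the deterministic construction in Theorem~\ref{thm:fnn_exact} and the genericity statement in Lemma~\ref{lemma:full_rank_assump}); I do not anticipate a real obstacle here. The one mild subtlety worth stating carefully is that Lemma~\ref{lemma:full_rank_assump} must be applied to get the assumption for \emph{all} $R$ on a single probability-$1$ event (so that a later union over finitely many $R$, or a specific choice of $R$, costs no probability), and that the uniform partition $\uP$ used implicitly by Theorem~\ref{thm:fnn_exact} is the one for which Assumption~\ref{assump:fnn} is phrased — so the two statements are indeed about the same partition.
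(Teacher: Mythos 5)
Your proposal is correct and follows essentially the same route as the paper: invoke Lemma~\ref{lemma:full_rank_assump} to get Assumption~\ref{assump:fnn} on a single probability-one event for all $R\in[D]$, then apply Theorem~\ref{thm:fnn_exact}. The only (harmless) difference is that the paper additionally proves $\rk\bigl(\tgW_i - \prod_{l\in\ueP_i}\mW_l\bigr)=D$ with probability $1$ via a conditioning argument, whereas you bound the threshold using only the deterministic inequality $\rk(\cdot)\le D$ together with the integrality of $R$ — which already suffices for the stated sufficiency claim.
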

To understand the implications of this corollary, let us consider $L \gg \tgL$.
In this scenario, the required LoRA-rank is sufficiently small such that the dimension of the rank-$R$ matrix is approximately $2RD$. 
This corollary suggests that with $2R D L \geq 2D^2L/\tdL \approx 2D^2\tgL$ learnable parameters, even a random FNN can be adapted into the target model $\tgf$.
It is noteworthy that the total number of parameters of the target model is $D^2 \tgL$.
This indicates that even though the learnable parameters under LoRA finetuning appear to be highly constrained (low-rank constrained learnable parameters distributed across many layers), the effective expressive power of LoRA is nearly optimal up to a constant factor of $2$. 
Our discovery provides the first theoretical insights into the practical success of LoRA.
Furthermore, Theorem~\ref{thm:fnn_exact} indicates that if the model $f$ is `close' to $\tgf$ such that $\max_{i\in[\tgL]}\rk (\tgW_i - \prod_{l\in\ueP_i} \mW_l)$ is small, the number of learnable parameters used by LoRA can be lower than $D^2 \tgL$.

Meanwhile, when the employed LoRA-rank is lower than the critical threshold, the following theorem provides an upper bound for the approximation error.
\begin{theorem}\label{thm:fnn_approx}
    Define the approximation error of $i$-th layer as $E_i  = \sigma_{R\tdL+1} (\tgW_i - \prod_{l\in \ueP_i} \mW_l) $, and the magnitude of the parameters and the input as  $\beta: = \max_{i\in [\tgL]}\left(\sqrt{\fnorm{\Sigma}} \prod_{j=1}^{i} \fnorm{\tgW_{j}}  + \sum_{j=1}^{i} \prod_{k=j+1}^{i-1} \fnorm{\tgW_k} \tnorm{\tgb_j}\right) \bigvee \sqrt{\fnorm{\Sigma}}$.
    
    Under Assumption~\ref{assump:fnn}, there exists rank-$R$ or lower matrices $(\dW_l)_{l=1}^{L}$ with $\dW_l \in \sR^{D\times D}$ and bias vectors $(\udb_l)_{l=1}^L$ with $\udb_l \in \sR^D$ such that for input $\rvx \in \gX$ with $\E \rvx \rvx^{\top} = \Sigma$, 
    \begin{equation}\label{inq:bound_fnn}
         \E \tnorm{ f(\rvx) - \tgf(\rvx)}  \leq \beta \sum_{i=1}^\tgL \max_{k \in [\tgL]} \left( \fnorm{\tgW_{k}} + E_k  \right) ^{\tgL-i} E_i .
    \end{equation}
\end{theorem}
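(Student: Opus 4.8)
The plan is to decompose the target model $\tgf$ layer by layer and control how the per-layer approximation errors propagate through the remaining layers. Concretely, I would use the uniform partition $\uP = \{\ueP_1,\ldots,\ueP_{\tgL}\}$ and, for each $i \in [\tgL]$, apply Lemma~\ref{lemma:1_fnn_approx} (the one-layer result) to the block of $\tdL$ frozen layers indexed by $\ueP_i$: choose the bias vectors in the first $\tdL-1$ layers of the block large enough to linearize all ReLUs there, update the last bias in the block to match $\tgb_i$, and then invoke Lemma~\ref{lemma:matrix_approx} to select rank-$R$ adapters $(\dW_l)_{l \in \ueP_i}$ so that the realized linear map of block $i$ equals $\bigl(\prod_{l\in\ueP_i}\mW_l\bigr) + \lr_{R\tdL}\!\bigl(\tgW_i - \prod_{l\in\ueP_i}\mW_l\bigr)$. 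By the Eckart--Young--Mirsky-type identity in Lemma~\ref{lemma:matrix_approx}, the residual matrix for block $i$ has $2$-norm exactly $\sigma_{R\tdL+1}(\tgW_i - \prod_{l\in\ueP_i}\mW_l) = E_i$. So after the construction the adapted model $f$ is exactly a $\tgL$-layer ReLU FNN with weight matrices $\widehat{\mW}_i := \tgW_i - \mE_i^{\mathrm{res}}$ where $\|\mE_i^{\mathrm{res}}\|_2 \le E_i$, and biases exactly $\tgb_i$, matching $\tgf$ except for the perturbations $\mE_i^{\mathrm{res}}$ in the weights.

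Next I would set up the error recursion. Write $h_i(\rvx)$ for the output of the first $i$ layers of $f$ and $\tgh_i(\rvx)$ for that of $\tgf$, so $h_0 = \tgh_0 = \rvx$, $\tgh_i = \relu(\tgW_i \tgh_{i-1} + \tgb_i)$, $h_i = \relu(\widehat{\mW}_i h_{i-1} + \tgb_i)$. Using $1$-Lipschitzness of $\relu$ and the triangle inequality,
\begin{equation}
    \tnorm{h_i(\rvx) - \tgh_i(\rvx)} \le \fnorm{\tgW_i}\,\tnorm{h_{i-1}(\rvx) - \tgh_{i-1}(\rvx)} + E_i\,\tnorm{h_{i-1}(\rvx)},
\end{equation}
since $\|\widehat{\mW}_i(h_{i-1}-\tgh_{i-1}) + \mE_i^{\mathrm{res}} h_{i-1}\| \le \|\widehat{\mW}_i\|\|h_{i-1}-\tgh_{i-1}\| + E_i\|h_{i-1}\|$ and $\|\widehat{\mW}_i\|_2 \le \fnorm{\tgW_i} + E_i \le \max_k(\fnorm{\tgW_k}+E_k)$. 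I also need a forward bound on $\tnorm{h_{i-1}(\rvx)}$; unrolling $h_j = \relu(\widehat{\mW}_j h_{j-1}+\tgb_j)$ and again using $1$-Lipschitzness gives $\tnorm{h_{j}(\rvx)} \le \prod_{k=1}^{j}(\fnorm{\tgW_k}+E_k)\,\tnorm{\rvx} + \sum_{k=1}^{j}\prod_{m=k+1}^{j}(\fnorm{\tgW_m}+E_m)\tnorm{\tgb_k}$, which (together with $\E\tnorm{\rvx}^2 = \fnorm{\Sigma}$ and Jensen, since $\E\tnorm{\rvx}\le\sqrt{\fnorm{\Sigma}}$) is bounded in expectation by the quantity $\beta$ after absorbing the $E_k$ factors into the $\max$. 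Solving the linear recursion for $\E\tnorm{h_i-\tgh_i}$ with multiplicative coefficient $\fnorm{\tgW_i} \le \max_k\fnorm{\tgW_k} \le \max_k(\fnorm{\tgW_k}+E_k)$ and additive term $E_i\,\E\tnorm{h_{i-1}} \le E_i\beta$ yields
\begin{equation}
    \E\tnorm{f(\rvx)-\tgf(\rvx)} = \E\tnorm{h_{\tgL}-\tgh_{\tgL}} \le \beta \sum_{i=1}^{\tgL} \Bigl(\max_{k\in[\tgL]}(\fnorm{\tgW_k}+E_k)\Bigr)^{\tgL-i} E_i,
\end{equation}
which is exactly \eqref{inq:bound_fnn}.

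The main obstacle is the bookkeeping in two places. First, making the one-layer linearization argument of Lemma~\ref{lemma:1_fnn_approx} compose cleanly across blocks: I must check that Assumption~\ref{assump:fnn} is precisely what is needed to run Lemma~\ref{lemma:matrix_approx} independently on each block $\ueP_i$ (the non-singularity of $\prod_{l\in\ueP_i}\mW_l + \lr_r(\tgW_i - \prod_{l\in\ueP_i}\mW_l)$ for $r \le R(\tdL-1)$ is exactly the hypothesis there), and that the large-bias linearization in one block does not interfere with the next block, which is fine because each block re-aligns its final bias to $\tgb_i$ and the ReLU output of the previous block is a fixed bounded input to the next. Second, the definition of $\beta$ uses $\prod_j\fnorm{\tgW_j}$ and $\prod_k\fnorm{\tgW_k}\tnorm{\tgb_j}$ \emph{without} the $E_k$ terms, so I need the perturbed forward bound $\prod(\fnorm{\tgW_k}+E_k)$ to be dominated by a clean expression; the cleanest route is to note that in the final bound everything is stated via $\max_k(\fnorm{\tgW_k}+E_k)$ anyway, and to bound the perturbed forward norm of $h_{i-1}$ by $\beta$ by a slightly loose comparison (each factor $\fnorm{\tgW_k}+E_k$ compared to the $\max$, and $\sqrt{\fnorm{\Sigma}}\le\beta$), so no tightness is lost relative to the claimed inequality. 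Everything else — Lipschitz estimates, Jensen, solving a scalar linear recursion — is routine.
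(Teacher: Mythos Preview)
Your overall strategy is correct and matches the paper: partition the frozen model via $\uP$, apply Lemma~\ref{lemma:1_fnn_approx} (linearize each block, then invoke Lemma~\ref{lemma:matrix_approx}) to realize block $i$ as a one-layer ReLU FNN with weight $\udW_i$ satisfying $\tnorm{\udW_i - \tgW_i} \le E_i$, and then propagate the per-layer errors through a scalar recursion. The construction step and the composition-across-blocks check are exactly as in the paper.

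There is, however, a genuine gap in your error recursion. You split $\udW_i h_{i-1} - \tgW_i \tgh_{i-1}$ so that the additive term is $E_i\,\tnorm{h_{i-1}}$, i.e.\ the \emph{adapted} model's intermediate output, and then claim $\E\tnorm{h_{i-1}} \le \beta$. But $\beta$ is defined using products of $\fnorm{\tgW_j}$ alone, with no $E_k$ terms, whereas unrolling $h_j = \relu(\udW_j h_{j-1}+\tgb_j)$ gives coefficients $\prod_k(\fnorm{\tgW_k}+E_k)$, which is \emph{larger}. Your proposed ``slightly loose comparison'' goes the wrong direction: you cannot bound $\prod_k(\fnorm{\tgW_k}+E_k)$ by $\prod_k\fnorm{\tgW_k}$, so $\E\tnorm{h_{i-1}}\le\beta$ simply fails in general. (Your justification line also mixes two different decompositions: the displayed inequality has coefficient $\fnorm{\tgW_i}$ on the recursion term, but the supporting sentence bounds $\|\udW_i\|\le \fnorm{\tgW_i}+E_i$, which is the other splitting.)

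The fix is exactly what the paper does: add and subtract $f_i(\tgh_{i-1})$ rather than $\tgf_i(h_{i-1})$. That is, write
\[
\E\tnorm{h_i-\tgh_i}\ \le\ \underbrace{\E\tnorm{f_i(h_{i-1})-f_i(\tgh_{i-1})}}_{A_i}\ +\ \underbrace{\E\tnorm{f_i(\tgh_{i-1})-\tgf_i(\tgh_{i-1})}}_{B_i}.
\]
Then $A_i \le \fnorm{\udW_i}\,\E\tnorm{h_{i-1}-\tgh_{i-1}} \le \alpha\,\E\tnorm{h_{i-1}-\tgh_{i-1}}$ with $\alpha=\max_k(\fnorm{\tgW_k}+E_k)$, and $B_i \le E_i\,\E\tnorm{\tgh_{i-1}}$. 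Now $\tgh_{i-1}$ is the \emph{target} model's intermediate output, whose forward bound involves only $\fnorm{\tgW_j}$ and $\tnorm{\tgb_j}$, giving $\E\tnorm{\tgh_{i-1}}\le\beta$ directly. Solving the recursion then yields \eqref{inq:bound_fnn}. With this one-line change in the splitting your argument goes through.
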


Theorem~\ref{thm:fnn_approx} provides an upper bound on the approximation error for the adapted model.
This bound is influenced by several factors:
(i) magnitude of the target model's parameters and the input, which is captured by $\beta$ and $\fnorm{\tgW_k}$,
(ii) the rank of the adapter $R$ and the discrepancy between the frozen model and the target model $(\tgW_i - \prod_{l\in\ueP_i} \mW_l)_{i=1}^{\tgL}$, both of which contribute to the term $E_i$, 
(iii) the depth of the frozen model $L$, reflected in $\tdL$ and consequenly $E_i$.

All the proofs of the results derived for uniform partition are provided in  Sec.~\ref{app:uniform_fnn}.

\paragraph{General Model Partition.}
We note that employing this uniform partition strategy for approximating the target model may not always yield optimal results. 
To illustrate this, we revisit the case considered by Example~\ref{example:naive}, where $\tgL =2$ and $L =4$.
Consider a scenario where the first layer of the frozen model has been pretrained to match the first layer of the target model. 
In this case, we can use just the first layer in $f$ to approximate the first layer in $\tgf$, and a zero LoRA-rank is sufficient for the exact representation of the first layer.
The remaining three layers in $f$ can then be used to approximate the second layer in $\tgf$. 
Compared to uniform partition, this partition leverages more layers to approximate the second layer in $\tgf$, allowing us to achieve the desired performance with a lower LoRA-rank, as per Lemma~\ref{lemma:1_fnn_approx}.
This suggests that our approximation error bounds could be further optimized by considering partitioning schemes tailored to specific scenarios. 

We now extend our results to a more general setting, where we do not assume a uniform partition.
Concurrently, recent research by~\citet{zhang2023adalora} has shown that the application of varying LoRA-ranks leads to improved results. 
Consequently, we permit each layer in the frozen model to utilize adapters with different LoRA-ranks.
The rank of the LoRA adapter associated with the $l$-th layer in the frozen model is denoted by $R_l$, where $l \in [L]$.
This result relies on Assumption~\ref{assump:general_fnn}, an analog of Assumption~\ref{assump:fnn}, but revised to include a general model partition. 
More details, including the proofs, are provided in Sec.~\ref{app:general_partition}.
\begin{theorem}\label{thm:general_fnn}
    Consider a partition $\gP$ for the frozen model. 
    Let Assumption~\ref{assump:general_fnn} hold.
    If $ \sum_{l\in P_i} R_l  \geq \rk (\tgW_i - \prod_{l\in P_i} \mW_l)$ for all $i \in [\tgL]$,  there exists LoRA adapters $(\dW_l)_{l=1}^L$ with $\rk(\dW_l) \leq R_l$ and biases $(\udb_l)_{l=1}^L$ such that the adapted model $f$ can exactly approximate the target model.

    Moreover, define the approximation error of the $i$-th layer as $E_i =  \sigma_{\sum_{l\in P_i} R_l +1} (\tgW_i - \prod_{l \in P_i} \mW_l) $, and the magnitude of the parameters and the input as  $\beta: = \max_{i\in [\tgL]}\left(\sqrt{\fnorm{\Sigma}} \prod_{j=1}^{i} \fnorm{\tgW_{j}}  + \sum_{j=1}^{i} \prod_{k=j+1}^{i-1} \fnorm{\tgW_k} \tnorm{\tgb_j}\right) \bigvee \sqrt{\fnorm{\Sigma}}$.
    
   Then, there exists LoRA adapters $(\dW_l)_{l=1}^L$ with $\rk(\dW_l) \leq R_l$ and biases $(\udb_l)_{l=1}^L$ such that for any input $\rvx \in \gX$ with $\E \rvx \rvx^{\top} = \Sigma$, the approximation  error can be bounded as
    \begin{equation}
         \E \tnorm{ f(\rvx) - 
            \tgf(\rvx)}  \leq \beta \sum_{i=1}^\tgL \max_{k \in [\tgL]} \left( \fnorm{\tgW_{k}} + E_k  \right) ^{\tgL-i} E_i .
    \end{equation}
\end{theorem}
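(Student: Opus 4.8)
The plan is to mirror the proof strategy already established for the uniform-partition case (Theorem~\ref{thm:fnn_exact} and Theorem~\ref{thm:fnn_approx}), but carried out relative to the arbitrary partition $\gP = \set{P_1, \ldots, P_{\tgL}}$ instead of $\uP$. First I would set up the block structure: since each $P_i$ consists of consecutive layer indices and $\bigcup_i P_i = [L]$, the adapted model $f$ factors as a composition of $\tgL$ ``sub-networks'', where the $i$-th sub-network uses the layers $l \in P_i$ of the frozen model (with their adapters and updated biases) to approximate the $i$-th layer $\relu(\tgW_i \cdot + \tgb_i)$ of the target model. This is exactly the reduction used for the uniform partition; the only change is that the $i$-th block now has $|P_i|$ layers rather than $\tdL$ layers.

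Next I would apply the single-layer approximation result (Lemma~\ref{lemma:1_fnn_approx}) to each block separately. Within block $i$, the first $|P_i| - 1$ layers are linearized by choosing sufficiently large biases so all their ReLUs are active, reducing that block to a one-layer ReLU FNN with effective weight matrix $\prod_{l\in P_i}\mW_l$ plus the adapter contributions; then Lemma~\ref{lemma:matrix_approx}, applied to the product of $|P_i|$ matrices each of rank-$\leq R_l$ constraint, yields that the best achievable error in matching $\tgW_i$ is $\sigma_{\sum_{l\in P_i}R_l + 1}(\tgW_i - \prod_{l\in P_i}\mW_l) = E_i$, using Assumption~\ref{assump:general_fnn} to guarantee the requisite non-singularities so that the explicit adapter construction of Lemma~\ref{lemma:matrix_approx} goes through. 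Here is where the ``sum of ranks'' phenomenon from the warm-up lemma is essential: the effective rank available in block $i$ is $\sum_{l\in P_i}R_l$, not any single $R_l$, which is precisely why the hypothesis $\sum_{l\in P_i}R_l \geq \rk(\tgW_i - \prod_{l\in P_i}\mW_l)$ forces $E_i = 0$ and hence exact representation of the $i$-th layer, giving $f = \tgf$.

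For the approximation bound, with each block incurring (expected) error governed by $E_i$ on the input distribution it actually sees, I would propagate errors through the composition. The standard argument: if block $i$ maps its input to the target layer's output up to error controlled by $E_i$ scaled by the input magnitude, then composing $\tgL$ such blocks, the errors accumulate with Lipschitz-type amplification factors $\max_{k}(\fnorm{\tgW_k} + E_k)$ per subsequent layer — the $+E_k$ accounting for the fact that the adapted block is only approximately the target layer, so its operator norm is bounded by $\fnorm{\tgW_k} + E_k$ rather than $\fnorm{\tgW_k}$. Tracking the magnitude of intermediate activations gives the prefactor $\beta$, which bundles $\sqrt{\fnorm{\Sigma}}$ with the product/sum of target weight norms and bias norms exactly as in Theorem~\ref{thm:fnn_approx}. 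Telescoping yields $\E\tnorm{f(\rvx) - \tgf(\rvx)} \leq \beta \sum_{i=1}^{\tgL} \max_{k\in[\tgL]}(\fnorm{\tgW_k} + E_k)^{\tgL - i} E_i$.

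The main obstacle I anticipate is bookkeeping the input distributions to the successive blocks: Lemma~\ref{lemma:1_fnn_approx} is stated for an input with second-moment matrix $\Sigma$, but the input seen by block $i$ in the adapted model is the output of the previous $i-1$ approximate blocks, not the true intermediate activation of $\tgf$. One must either re-derive per-block bounds in terms of the actual (adapted) intermediate activations and bound their magnitude by a $\beta$-type quantity, or carefully chain the exact-case construction (where blocks $1,\dots,i-1$ are exact when their $E$'s vanish) with the approximate tail. Ensuring that the linearization biases chosen for later blocks remain valid — i.e. that the (now perturbed, approximate) inputs to those blocks still lie in a region where the chosen large biases keep all ReLUs active — requires the boundedness of $\gX$ together with a uniform bound on intermediate activation norms, and this uniformity is the delicate point. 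Everything else is a routine adaptation of the uniform-partition arguments in Sec.~\ref{app:uniform_fnn}.
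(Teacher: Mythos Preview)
Your proposal is correct and matches the paper's approach: the paper's proof of Theorem~\ref{thm:general_fnn} simply says to repeat the proofs of Theorem~\ref{thm:fnn_exact} and Theorem~\ref{thm:fnn_approx} with $\uP$ replaced by $\gP$, invoking Lemma~\ref{lemma:1_fnn_approx_general} (the variant of Lemma~\ref{lemma:1_fnn_approx} that allows layer-dependent ranks $R_l$, which in turn rests on Lemma~\ref{lemma:matrix_approx_different_rank}) in place of Lemma~\ref{lemma:1_fnn_approx}. The only adjustment you should make is to cite these generalized lemmas explicitly rather than Lemma~\ref{lemma:1_fnn_approx} and Lemma~\ref{lemma:matrix_approx}, since the latter are stated for a uniform rank $R$ across layers; the ``sum of ranks'' phenomenon you describe is exactly the content of Lemma~\ref{lemma:matrix_approx_different_rank}.
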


\paragraph{Comparison to Tuning Final Layers.}
Updating the final layers and keeping the initial layers frozen~\citep{chatfield2014return,donahue2014decaf,sharif2014cnn,rahimi2007random} is another popular model adaptation method.
However, unlike LoRA, which can adapt even randomly generated networks to match a target model, empirical studies~\citep{kornblith2019better} suggest that the effectiveness of final layers tuning heavily depends on the quality of the initial layers. 
This indicates that merely tuning the final layers of randomly generated networks may not yield desirable performance. 
The following lemma rigorously supports this assertion, demonstrating that regardless of how the final layers are tuned, it is impossible to adapt a randomly generated model into even a one-layer FNN, a model of very low complexity.
\begin{lemma}\label{lemma:last_layers}
    Let $D \geq 2$ and $\tgf$ be a one-layer target FNN. 
    Assume that the elements of weight matrices $(\mW_l)_{l=1}^L$ are independently drawn from arbitrary continuous distributions. 
    With probability 1, for any tuning of the last $L-1$ layers, $f\neq\tgf$. 
\end{lemma}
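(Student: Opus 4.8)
The plan is to factor the adapted network as $f=g\circ\phi$, where $\phi(\vx)=\relu(\mW_1\vx+\vb_1)$ is the frozen first layer (for whatever fixed bias $\vb_1$) and $g\colon\sR^D\to\sR^D$ is the arbitrarily tuned depth-$(L-1)$ ReLU sub-network, and to exploit that $\phi$ is piecewise linear: on the region $R_S:=\set{\vx:(\mW_1\vx+\vb_1)_j>0\ \forall j\in S,\ (\mW_1\vx+\vb_1)_j<0\ \forall j\notin S}$ it is an affine map of rank $\abs{S}$, which pins down $f$ there. (If $L=1$ there is nothing to tune and the claim is immediate; the statement presumes $\tgf$ non-constant, i.e. $\tgW_1\neq\vzero$, since a constant target is trivially reproduced by zeroing the tuned weights; and I take $\gX=\sR^D$ for concreteness, since over a tiny $\gX$ the first layer can be injective and the claim genuinely fails, so $\gX$ must at least be large enough to meet every $R_S$.) First I would record the probability-one events for $\mW_1$: it is invertible (so every $R_S$ is a nonempty open set, namely the $\mW_1^{-1}$-image of a translated orthant); no row of $\mW_1$ is parallel to any nonzero row of $\tgW_1$; and $\langle(\tgW_1)_{i,:},(\mW_1^{-1})_{:,k}\rangle\neq0$ for all $k$ and all $i$ with $(\tgW_1)_{i,:}\neq\vzero$. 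Each is the complement of the zero set of a nontrivial rational function of $\mW_1$, and there are finitely many, so all hold simultaneously almost surely.

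Now suppose, for contradiction, that some tuning yields $f=\tgf$ on $\sR^D$. On $R_{[D]}$ the frozen layer is the affine bijection $\vx\mapsto\mW_1\vx+\vb_1$ onto $\sR^D_{>0}$, so $g$ agrees on $\sR^D_{>0}$ with $\tgf\circ\phi^{-1}(\vy)=\relu(A\vy+\vc')$ where $A:=\tgW_1\mW_1^{-1}$ and $\vc':=\tgb_1-\tgW_1\mW_1^{-1}\vb_1$; since the latter is continuous on all of $\sR^D$ and $g$ is continuous, this forces $g=\relu(A\,\cdot+\vc')$ on $\sR^D_{\ge0}$, hence $f(\vx)=\relu\big(A\,\relu(\mW_1\vx+\vb_1)+\vc'\big)$ for every $\vx$. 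Write $\vv_k:=(\mW_1^{-1})_{:,k}$ and $\alpha_{ik}:=\langle(\tgW_1)_{i,:},\vv_k\rangle$, and note $A\ve_k=\tgW_1\vv_k$ and $A(\mW_1\vx+\vb_1)+\vc'=\tgW_1\vx+\tgb_1$. On $R_{\{k\}}$ we have $\relu(\mW_1\vx+\vb_1)=s_k(\vx)\ve_k$ with $s_k(\vx):=(\mW_1\vx+\vb_1)_k$ sweeping $(0,\infty)$, so for $i$ with $(\tgW_1)_{i,:}\neq\vzero$ the identity $f_i=\tgf_i$ reads $\relu(\alpha_{ik}s_k(\vx)+c'_i)=\relu((\tgW_1)_{i,:}\vx+(\tgb_1)_i)$ on an open set. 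Two maps $\vx\mapsto\relu(\ell(\vx))$ with $\ell$ affine that agree on an open set must either have equal affine parts there or both satisfy $\ell\le0$ there; equal affine parts would force $(\mW_1)_{k,:}\parallel(\tgW_1)_{i,:}$, which is excluded; so $\alpha_{ik}s_k+c'_i\le0$ throughout $R_{\{k\}}$, and letting $s_k\to\infty$ gives $\alpha_{ik}\le0$, hence (by the third generic event) $\alpha_{ik}<0$, for every $k$.

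On a general region $R_S$ with $S\subsetneq[D]$ one has $\relu(\mW_1\vx+\vb_1)=(\mW_1\vx+\vb_1)-\sum_{j\notin S}s_j(\vx)\ve_j$ with each $s_j<0$, whence a one-line computation gives $f_i(\vx)=\relu\big(p_i(\vx)-\sum_{j\notin S}\alpha_{ij}s_j(\vx)\big)$ with $p_i(\vx):=(\tgW_1\vx+\tgb_1)_i$. Since every $\alpha_{ij}<0$ and every $s_j<0$ on $R_S$, the argument of this $\relu$ is strictly less than $p_i(\vx)$, and $\relu(a)=\relu(b)$ with $a<b$ forces $b\le0$; thus $f_i=\tgf_i$ on $R_S$ demands $p_i\le0$ on $R_S$, i.e. $\tgf_i\equiv0$ on every $R_S$ with $S\neq[D]$. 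Consequently the open half-space $\set{\vx:(\tgW_1)_{i,:}\vx+(\tgb_1)_i>0}$ is contained in $\overline{R_{[D]}}=\bigcap_{j=1}^D\set{\vx:(\mW_1)_{j,:}\vx+(\vb_1)_j\ge0}$, hence in each of those $D$ closed half-spaces; comparing recession cones (Farkas), this forces $(\mW_1)_{j,:}$ to be a positive multiple of $(\tgW_1)_{i,:}$ for every $j$, so $\rk(\mW_1)=1$ — impossible since $\mW_1$ is invertible and $D\ge2$. This contradiction proves the lemma.

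The step I expect to be the real obstacle is the middle one: the naive attempt, using only the all-inactive region $R_\emptyset$ (on which $f$ is constant), does not close, because a genuine one-layer $\tgf$ is itself constant on a large half-space. The actual work is to squeeze the strict sign information $\alpha_{ik}<0$ out of the one-dimensional pieces $R_{\{k\}}$ and then propagate it through every intermediate region $R_S$ to conclude $\set{\tgf_i>0}\subseteq\overline{R_{[D]}}$; only after that does genericity of $\mW_1$ (its rows are not all collinear) yield the rank-one contradiction. Everything else — the continuity extension of $g$, the elementary ``$\relu(\ell_1)=\relu(\ell_2)$ on an open set'' dichotomy, and the recession-cone comparison — is routine.
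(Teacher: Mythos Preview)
Your argument is correct and genuinely different from the paper's. The paper proceeds via an auxiliary Lemma~\ref{lemma:pre_last_layers}: on the all-inactive region $I_1=\set{\vx:\mW_1\vx+\vb_1\le\vzero}$ (your $\overline{R_\emptyset}$) the adapted network is constant, so one only needs $I_1\setminus\tgI_1\neq\emptyset$ to find two points on which $f$ agrees but $\tgf$ does not. The paper then asserts that $I_1\setminus\tgI_1=\emptyset$ can occur only when $\tgW_1=\mW_1$, hence has probability zero. Your route instead pins down $g$ on all of $\sR^D_{\ge0}$ from $R_{[D]}$, extracts the sign constraints $\alpha_{ik}<0$ from the one-dimensional pieces $R_{\{k\}}$, pushes them through every $R_S$, and closes with a recession-cone/Farkas argument.

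What your approach buys is robustness exactly at the point you flagged. The paper's key claim is in fact false: $I_1\subseteq\tgI_1$ holds whenever $\tgW_1\mW_1^{-1}$ is entrywise nonnegative and the bias constraint $\tgW_1\mW_1^{-1}\vb_1\ge\tgb_1$ is met, which for many continuous distributions on $\mW_1$ (and suitable fixed biases) is an open condition of \emph{positive} probability, not a null event. So the ``naive'' $R_\emptyset$-only argument really does not close, and your instinct to route through all the regions $R_S$ is not overkill but necessary. Conversely, the paper's idea is conceptually lighter when it applies, and could be rescued with an additional argument covering the event $I_1\subseteq\tgI_1$; your proof handles that event automatically (indeed, in that regime some $\alpha_{ik}>0$, so your Step~5 already yields the contradiction before Steps~6--7 are needed).
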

In Corollary~\ref{corollary:fnn_exact}, we demonstrate that LoRA can adapt any randomly generated models to match the target model, using at most twice the number of learnable parameters as the target model. 
However, this lemma reveals that final layers tuning, even with $L-1$ times the learnable parameters of the target model, cannot achieve performance comparable to LoRA.
In other words, LoRA requires at most $2RDL \leq 2D^2$ learnable parameters to achieve an exact approximation, while final layers tuning fails to approximate the target model even with $(L-1)D^2$ learnable parameters.
Therefore,  when $L \geq 3$,  LoRA can deliver strictly superior performance than final layers tuning with the same or fewer parameters.
This provides insights into the empirical observation that LoRA outperforms final layers tuning~\citep{kaplun2023subtuning,ding2023delta}.

\section{Expressive Power of Transformer Networks with LoRA}\label{sec:result_tfn}
\subsection{Problem Setting}
Transformer network, denoted as $\tfn_{L,D}$, is a composition of $L$ Transformer blocks and an output layer, parameterized by weight $\mW_o \in \sR^{D\times D}$.
Each transformer block comprises a $H$-head self-attention layer, parameterized by weight $((\mW_{Ol}^h, \mW_{Vl}^h, \mW_{Kl}^h, \mW_{Ql}^h )_{h=1}^H)_{l=1}^L$, followed by a token-wise feedforward layer, parameterized by weight $(\mW_{1l}, \mW_{2l})_{l=1}^L$ and bias $(\vb_{1l}, \vb_{2l})_{l=1}^L$. 
We assume that all weight matrices have a dimension of $D \times D$, while the bias vectors are of dimension $D$. 

We employ the same formulations of transformer blocks as \citet{yun2020are}, with one exception: we exclude skip connections for analytical feasibility. 
As before, we use $\overline{\cdot}$ (\eg, $\tgW_{1l}$) to represent the corresponding parameters for the target model, and $\Delta \cdot$ (\eg, $\dW_{Ol}^h$) to represent the corresponding low-rank update. 
For TFN cases,we consider scenarios where both the frozen model and the target model have $L$ Transformer blocks.
For an explicit formulation, please refer to Sec.~\ref{sec:proof_tf_multihead}. 


 \subsection{Main Results on Transformer Networks}
We now present our main findings on TFNs. 
The first result relies on a non-singularity assumption (Assumption~\ref{assump:tfn}) tailored for TFN.
This assumption is mild, and models with randomly generated weights can satisfy its criteria (Lemma~\ref{lemma:full_rank_assump_multi_tf}).
Further details are deferred to Sec.~\ref{sec:proof_tf_multihead}.

The following theorem shows that adding LoRA adapters primarily to the self-attention layers enables the adapted model $f$ to exactly approximate the target model $\tgf$.
This finding is consistent with a recent observation made by~\citet{hu2022lora}, which indicates that a good performance can be achieved by adapting only the attention layers when applying LoRA to TFNs.

\begin{theorem}\label{thm:exact_tfn_multihead}
    Consider a given LoRA-rank $R \in [D]$.
    Let Assumption~\ref{assump:tfn} hold. 
    Let $G_i$ be the rank-based functionality gap to $i$-th transformer block ($i \in [L]$) or output layer ($i=L+1$) defined in \eqref{eq:rank_based_gap_gi}.
    If $R \geq \max_{i\in[L+1]} \ceil{\frac{G_i}{2}}$, then there exists 
    low-rank adapters with rank lower than $R \in [D]$ $((\dW_{Kl}^h, \dW_{Ql}^h, \dW_{Vl}^h, \dW_{Ol}^h)_{h=1}^H)_{l=1}^L, \dW_{2L}, \dW_o$ with other low-rank adapters set to $\mO$, 
    and updated bias vectors $(\udb_{1l}, \udb_{2l})_{l=1}^L$,
    such that for any $\mX \in \sR^{D\times N}$, the adapted model $f$ exactly approximates target model $\tgf$, \ie, $f(\mX) = \tgf(\mX)$.
\end{theorem}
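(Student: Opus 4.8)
\emph{Proof proposal.} The plan is to construct the adapters block by block, matching the adapted network $f$ to the target $\tgf$ one transformer block (and finally the output layer) at a time, and to reduce every per-block step to the product-of-matrices approximation already established in Lemma~\ref{lemma:matrix_approx}. The starting observation is structural: in a self-attention head the key and query weights enter block $l$ \emph{only} through the product $(\mW_{Kl}^h)^\top \mW_{Ql}^h$, which forms the logits $\mX^\top (\mW_{Kl}^h)^\top \mW_{Ql}^h \mX$ fed to the softmax, while the value and output weights enter \emph{only} through the product $\mW_{Ol}^h \mW_{Vl}^h$, which post-multiplies the attention matrix; hence the usable degrees of freedom in block $l$'s attention are the $2H$ matrices of size $D\times D$ given by these two products per head. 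Applying a rank-$\le R$ LoRA update to \emph{each} of the two factors of such a product realizes, by the $L=2$ instance of Lemma~\ref{lemma:matrix_approx}, \emph{any} change to the product of rank at most $2R$ — this is exactly the mechanism that upgrades a per-factor budget $R$ into an effective budget $2R$ at the product level, and it is why the hypothesis appears as $R \ge \max_i \lceil G_i/2\rceil$: the rank-based functionality gap $G_i$ in \eqref{eq:rank_based_gap_gi} is precisely the total rank of the product-level changes that block $i$ (or, for $i=L+1$, the output layer $\mW_o$) must undergo to realize the target functionality. Assumption~\ref{assump:tfn} supplies the non-singularity of the intermediate matrices that the explicit construction behind Lemma~\ref{lemma:matrix_approx} must invert, so the adapters it produces are well defined, and Lemma~\ref{lemma:full_rank_assump_multi_tf} certifies that this requirement is mild.

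Next I would dispose of the feedforward sublayers. For $l<L$ the weights $\mW_{1l},\mW_{2l}$ are frozen and only the biases $\udb_{1l},\udb_{2l}$ may move; the idea, adapting the linearization step used in the FNN case, is to choose $\udb_{1l}$ so that on the inputs actually reaching layer $l$ (which are fixed once the upstream attention adapters are set) every ReLU is in its linear regime, so the sublayer acts as the affine map $\mZ \mapsto \mW_{2l}\mW_{1l}\mZ + (\mW_{2l}\udb_{1l}+\udb_{2l})\mathbf{1}^\top$. The additive constant is absorbed by $\udb_{2l}$, and the fixed linear factor $\mW_{2l}\mW_{1l}$ is \emph{commuted} through the adjacent self-attention layer: because the value--output product $\mW_{Ol}^h\mW_{Vl}^h$ (and, on the softmax side, the query--key product) can be steered to an arbitrary rank-$2R$ perturbation of itself, this extra linear factor can be folded into those products, with the residual discrepancy at the top of the network mopped up by the two remaining free adapters $\dW_{2L}$ and $\dW_o$. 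The definition of $G_i$ must track exactly how much additional rank this folding costs at each position, and the content of the theorem is that $2R \ge G_i$ still suffices after that bookkeeping. Composing the matched blocks with the matched output layer then yields $f(\mX) = \tgf(\mX)$ for every $\mX \in \sR^{D\times N}$.

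The main obstacle is the interplay between the softmax nonlinearity, the frozen feedforward ReLUs, and the demand for \emph{exact} equality on all of $\sR^{D\times N}$. Since $\mX^\top (\mW_{Kl}^h)^\top \mW_{Ql}^h \mX$ must agree with the target's quadratic form for \emph{every} $\mX$, the query--key products have to be matched on the nose rather than merely up to a low-rank residual, so the reduction to Lemma~\ref{lemma:matrix_approx} must land in its exact-recovery regime ($2R \ge$ the relevant rank) at all attention layers simultaneously; and because the frozen feedforward ReLUs cannot be altered, the induction has to carry forward not just ``the block outputs agree'' but enough control on the intermediate activations to justify linearizing each downstream feedforward sublayer. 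Reconciling all of this — keeping the folded-in linear factors, the matched query--key and value--output products, and the surviving non-singularity conditions of Assumption~\ref{assump:tfn} all within the rank budget dictated by $G_i$, across every head and every layer at once — is the technical heart of the argument; the base case at block $1$, the output layer, and the final composition are then routine.
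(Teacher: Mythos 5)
Your treatment of the attention sublayers is essentially the paper's argument: the key--query and value--output weights enter each block only through their products, a rank-$R$ update to each factor realizes an arbitrary rank-$2R$ change of the product via the $L=2$ case of Lemma~\ref{lemma:matrix_approx}, the softmax logits force exact (not approximate) matching of the query--key products, and Assumption~\ref{assump:tfn} supplies the invertibility needed by that construction. You also correctly anticipate that the frozen feedforward weights must be folded into the adjacent attention products and that $G_i$ is bookkeeping for the rank cost of this folding.

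The genuine gap is your handling of the feedforward ReLUs. You propose to choose $\udb_{1l}$ so that every ReLU in block $l<L$ of the \emph{adapted} model is in its linear regime, importing the linearization trick from the FNN proof. This fails here for two reasons. First, the theorem demands exact equality for all $\mX \in \sR^{D\times N}$, an unbounded domain, so the pre-activations reaching block $l$ are unbounded and no finite bias can keep every ReLU active; the FNN linearization explicitly relied on a bounded input space $\gX$. Second, and more fundamentally, each target block contains exactly one ReLU that is genuinely nonlinear and cannot be emulated anywhere else; if you linearize the corresponding ReLU in the adapted block you have nothing left with which to reproduce the target's nonlinearity, so exact matching becomes impossible. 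The paper instead keeps the ReLU and matches the \emph{pre-activations} exactly: it sets $\udb_{1l}=\tgb_{1l}$ and chooses the value--output adapters so that $(\mW_{Ol}^h+\dW_{Ol}^h)(\mW_{Vl}^h+\dW_{Vl}^h)=\mW_{1l}^{-1}\tgW_{1l}\tgW_{Ol}^h\tgW_{Vl}^h\tgW_{2,l-1}\mW_{2,l-1}^{-1}$, i.e.\ it folds $\mW_{1l}^{-1}\tgW_{1l}$ (and the previous block's $\tgW_{2,l-1}\mW_{2,l-1}^{-1}$) into the product that Lemma~\ref{lemma:matrix_approx} controls, so that the arguments of the two ReLUs coincide and hence their outputs do. The induction then carries the invariant $\udZ_{l-1}=\mW_{2,l-1}\tgW_{2,l-1}^{-1}\tgZ_{l-1}$ (with $\dW_{2l}=\mO$ for $l<L$ and $\udb_{2,l-1}$ chosen to cancel the bias mismatch), and the remaining factor at the top is absorbed by $\dW_{2L}$ and $\dW_o$. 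Replacing your linearization step with this exact pre-activation matching is what the definition of $G_i$ in \eqref{eq:rank_based_gap_gi} is already set up for.
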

\begin{proof}[Proof Sketch]
The primary challenge for extending our analysis to TFNs, similar to FNN cases, is the nonlinearity introduced by softmax and ReLU. 
To manage this, we segment a sequence of transformer blocks based on the softmax and ReLU functions. 
Specifically, we align the output of attention scores before the softmax is applied, and then match the output of the first feedforward layer before ReLU is applied. 
\end{proof}

The complete proof of Theorem~\ref{thm:exact_tfn_multihead} and results for randomly generated models can be found in Sec.~\ref{sec:proof_tf_multihead}.
Meanwhile, our results here are specifically for TFNs with multi-head attention layers. 
For TFNs with single-head attention layers, the construction of LoRA adapters differs due to the absence of $\mW_{Oi}^h$.
Since the results are similar, we defer the problem setting and results for TFNs with single-head attention layers to Sec.~\ref{app:single_tf}.

\section{Experiments}\label{sec:exp}
Recall that all our theoretical statements are based on our construction of the LoRA adapters presented in their corresponding proofs. 
To validate these results, here we empirically examine the relationship between approximation error and rank by integrating the LoRA adapters, which are constructed with the uniform partition in our proof, into the frozen model. 


\paragraph{Validation of Our LoRA Adapter Construction.}
We employ the \textit{Mean Squared Error} (MSE) to assess the approximation error, comparing the MSE of the LoRA adapter as derived from the gradient update method with that from our construction. 
We consider linear models and FNNs with model dimension $D = 16$.
For linear model cases, we set $\tgL=1, L=2$, while for FNN cases, we set $\tgL=2, L =4$. 
We include two variants of the frozen model for fine-tuning: one with randomly initialized parameters (Random) and another pretrained on the target distribution (Pretrained).

\begin{wrapfigure}{r}{0.4\textwidth}
    \centering
    \begin{subfigure}[b]{0.4\textwidth}
        \centering
        \vspace{-.2in}
        \includegraphics[width=\textwidth]{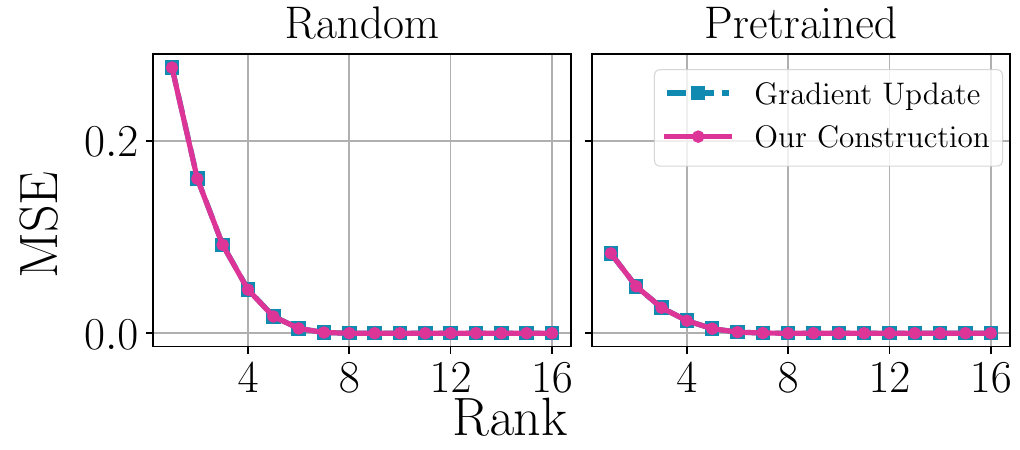}
        \vspace{-.25in}
        \caption{Linear model approximation.}
        \label{fig:fig1}
    \end{subfigure}
    \begin{subfigure}[b]{0.4\textwidth}
        \centering
        \includegraphics[width=\textwidth]{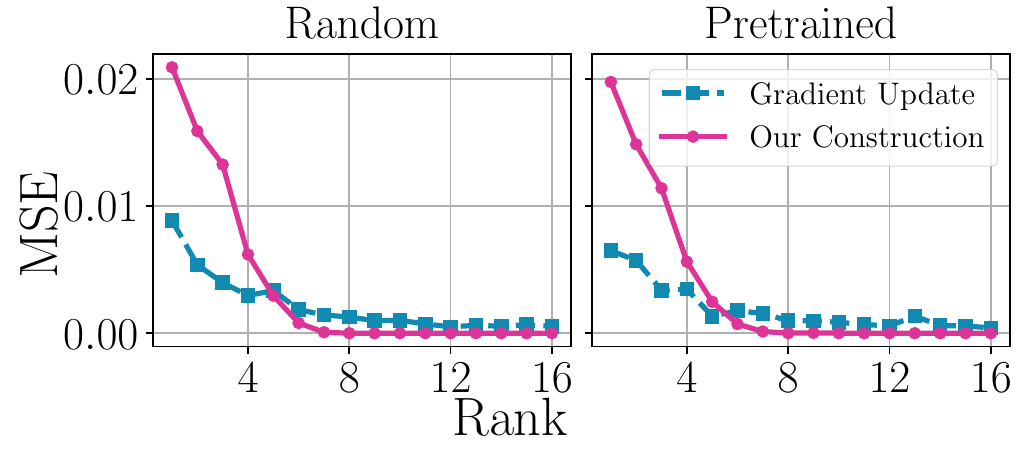}
        \vspace{-.25in}
        \caption{FNN approximation.}
        \label{fig:fig2}
        \vspace{-.1in}
    \end{subfigure}
    \caption{Approximation error (measured by MSE) versus LoRA-rank.}
    \label{fig:main_body}
    \vspace{-.2in}
\end{wrapfigure}

Our results for linear model approximation and FNN approximation via LoRA are depicted in Fig.~\ref{fig:fig1} and \ref{fig:fig2}, respectively. 
Firstly, we observe that the MSE of both two cases is close to zero when $R \geq \frac{D}{L/\tgL} = 8$, which corroborates our claims. 
Meanwhile, a comparison between the left and right columns of Fig.~\ref{fig:fig1} suggests that pretraining can further reduce the required rank to achieve near-zero approximation error. 
Furthermore, the curves of our construction align well with those of the gradient update method in linear model approximation cases, confirming the optimality claimed in Lemma~\ref{lemma:matrix_approx}. 
However, for FNN approximation cases, the gradient update method outperforms our construction in the small rank region. 
We conjecture that the suboptimality of our construction for this multi-layer FNN case could arise from unnecessarily matching the intermediate outputs of the frozen model with those of the target model during adapter construction.
Additionally, the uniform partition could also be one contributing factor.

\begin{table}[!htb]
    \centering
    \resizebox{\textwidth}{!}{
    \begin{tabular}{p{8cm}|p{5cm}|p{3.5cm}}
        \toprule
        \textbf{Findings} & \textbf{Empirical Observation} & \textbf{Theoretical Insights}  \\ \midrule  
        For a fixed downstream task, larger models require a lower LoRA-rank to achieve the desired performance. & Sec.~\ref{exp:real} & Lemma~\ref{lemma:matrix_approx}, \ref{lemma:1_fnn_approx}, and Theorem~\ref{thm:fnn_approx}, \ref{thm:general_fnn} \\ \midrule 
        When the frozen model is closer to the target model, a lower LoRA-rank is sufficient to attain the desired performance. & Sec.~\ref{exp:real} and 6-th footnote in \citet{hu2022lora} & Lemma~\ref{lemma:matrix_approx}, \ref{lemma:1_fnn_approx}, and Theorem~\ref{thm:fnn_approx}, \ref{thm:general_fnn}, \ref{thm:exact_tfn_multihead} \\ \midrule 
        LoRA outperforms final layers tuning if the quality of shared representation is not good. & Sec.~\ref{exp:last_layers} and observations by \citet{kaplun2023subtuning} and \citet{ding2023delta} & Lemma~\ref{lemma:last_layers} \\ \midrule 
        In addition to applying low-rank updates to weight matrices, it is crucial to also update the bias. & Sec.~\ref{sec:bias_tuning} and 2-nd footnote in \citet{hu2022lora} & Proofs in  Sec.~\ref{sec:one_layer_fnn} and \ref{sec:1_layer_fnn} \\ \midrule 
        Tuning attention weights is sufficient for achieving good performance on TFNs. & Sec.~4.2 in \citet{hu2022lora} & Theorem~\ref{thm:exact_tfn_multihead} \\ \midrule
        Current optimization algorithms for LoRA training might be suboptimal. & Fig.~\ref{fig:log_mse}, \ref{fig:tfn}, and \ref{fig:classification} & — \\ \bottomrule
    \end{tabular}}
    \caption{Summary of our findings, supported by empirical evidence and theoretical results.}
    \label{tab:summary}
\end{table}

\paragraph{Detailed Experimental Setup and Additional Experiments in the Appendix.} Further experiment details and a series of additional experiments, including simulations on FNNs and TFNs at different depths, evaluation of classification tasks, empirical comparison between LoRA and the final layers tuning, investigation of the importance of updatable bias, LoRA's generalization and optimization properties, and experiments on GLUE benchmark~\citep{wang-etal-2018-glue}, are provided in Sec.~\ref{app:exp}. 
Table~\ref{tab:summary} summarizes all the empirical findings and aligns them with theoretical insights.

\section{Conclusions}\label{sec:conclusion}
This work pioneers the theoretical analysis of LoRA fine-tuning's expressive capabilities in FNNs and TFNs, offering novel insights into how rank, model depth, and proximity to the target model influence LoRA's effectiveness. 
Our theoretical findings are validated by empirical evidence. 
Future work includes quantifying approximation errors for TFNs when the LoRA-ranks are lower than required and refining LoRA adapter update algorithms based on our construction of LoRA adapters.



\section*{Acknowledgement}
This work was supported by NSF Award DMS-2023239, NSF/Intel Partnership on Machine Learning for Wireless Networking Program under Grant No. CNS-2003129, and a grant by FuriosaAI.
We extend our heartfelt gratitude to Angeliki Giannou, Kartik Sreenivasan, Tuan Dinh, Jy-yong Sohn, Jingpeng Liu, and anonymous reviewers for their insightful comments that significantly enhanced the quality of our paper.

\section*{Reproducibility Statement}
The code for all experiments reported in this paper is publicly accessible. 
For the purpose of reproducibility, the code can be found at the following anonymized GitHub repository: \url{https://github.com/UW-Madison-Lee-Lab/Expressive_Power_of_LoRA}.


\bibliography{iclr2024_conference}
\bibliographystyle{iclr2024_conference}

\appendix
\newpage

{\LARGE \textbf{Appendix}} \par \vspace{.1in}
This appendix encompasses more discussions, experiments, and proofs of the results presented in the main body. 
Given the extensive use of notations in our paper, we begin by presenting a list of common notations in Sec.~\ref{app:notations} for the reader's convenience. 
We then delve into a more detailed discussion of related works in Sec.~\ref{app:related_works}. 
Following this, we present the proofs of results from the main body and auxiliary results in Sec.~\ref{app:linear_algebra}, \ref{sec:proof_matrix_approx}, \ref{sec:proof_fnn}, and \ref{app:tf}. 
Specifically, we provide additional results for TFN with single-head attention layers, and TFN with multi-head attention layers under random model cases in Sec.~\ref{app:tf}. 
Further experimental details and interesting experiment findings are provided in Sec.~\ref{app:exp}. 
Finally, we discuss how to extend our results to cases with varying model dimensions in Sec.~\ref{dis:different_dim}, while this work primarily focuses on instances where both the target model and the frozen model possess the same model width $D$.
More potential future works are outlined in Sec.~\ref{app:future_works}.

\startcontents[sections]
\printcontents[sections]{ }{1}{}

\newpage

\section{List of Common Notations}\label{app:notations}
We first give a list of common notations that are used in the main body and appendix for reference. 
\begin{itemize}[leftmargin=*]
    \item $f$: LoRA-adapted model.
    \item $\tgf$: target model.
    \item $f_0$: frozen/pretrained model. 
    \item $R$: rank of LoRA adapters. 
    \item $D$: dimensionality of the model, representing the number of neurons in each layer for FNNs and the embedding size for TFNs.
    \item $L$: depth of the (frozen) model, representing the number of layers for FNNs and the number of transformer blocks for TFNs. 
    \item $N$: sequence length of the input for TFNs.
    \item $\vx$: input.
    \item $\rvx$: random input.
    \item $\mX$: matrix input. 
    \item $\gX$: input space. 
    \item $\Sigma$: $\E \rvx \rvx^{\top}$. 
    \item $\mW$: a weight matrix associated with (frozen) model. Subscripts and superscripts may be added for specificity.
    \item $\vb$: a bias vector associated with the (frozen) model. Subscripts may be added for specificity.
    \item $\vz_l$: the output of the first $l$ layers in the (frozen) FNN. 
    \item $\mZ_l$: the output of the first $l$ transformer blocks in a (frozen) TFN.
    \item $\tgW$: a weight matrix associated with the target model. Subscripts and superscripts may be added for specificity.
    \item $\tgb$: a bias vector associated with the target model. Subscripts may be added for specificity.
    \item $\tgrvz_l$: the intermediate output of the first $l$ layers in target FNN given the random input $\rvx$. 
    \item $\tgZ_l$: the output of the first $l$ transformer blocks in a target TFN. 
    \item $\tgL$: depth of the target model, representing the number of layers for FNNs and the number of transformer blocks for TFNs.
    \item $\dW$: the weight matrix of a LoRA adapter.
    \item $\udb$: a bias vector associated with the LoRA-adapted model. 
    \item $\udrvz_l$: the output of the first $l$ layers in the LoRA-adapted model given the random input $\rvx$.
    \item $\udZ_l$: the output of the first $l$ transformer blocks in the LoRA-adapted model. 
    \item $\tdL$: the ratio of the depth of the frozen model to that of the target model, \ie, $L/\tgL$.
    \item $\gP$: partition $\gP = \set{P_1, \ldots, P_{\tgL}}$, each element $P_i$ specifies that the layers with index $l \in P_i$ in the adapted model will be used to approximate the $i$-th layer in the target model. 
    \item $P_i$: the $i$-th element in partition $\gP$. 
    \item $\uP$: uniform partition $\uP := \{ \set{1,\ldots,  \tdL}, \set{\tdL+1,\ldots, 2\tdL}, \ldots, \set{ (\tgL-1) \tdL + 1,\ldots, L } \}$. The uniform partition indicates that every $\tdL$ layers in the adapted model are employed to approximate each layer in the target model.
    \item $\ueP_i$: the $i$-th element in uniform partition $\uP$.
    \item $\mI_D$: the $D \times D$ identity matrix. When the context permits, the subscript $D$ of $\mI_D$ may be omitted, simplifying the notation to $\mI$.
    \item $\mI_{a:b,D}$: a diagonal matrix where the diagonal entries from the $a$th to $b$th position are set to 1, while all remaining entries are 0s. 
    \item $\sigma_d(\cdot)$: the $d$-th largest singular value for the given square matrix. When $d$ is greater than the width of the matrix, $\sigma_d(\cdot) = 0$.
    \item $\lr_r (\cdot)$: best rank-$r$ approximation of a square matrix in Frobenuis norm and spectral norm. The subscript $r$ may be omitted to indicate a general low-rank approximation without specifying the rank. 
    \item $\prod_{l \in P_i} \mW_l$: product of the weight matrices from the layers $l \in P_i$, with the later layer positioned to the left and the earlier layer to the right in the matrix product.
    For example, $\prod_{l \in \ueP_1 } \mW_l = \prod_{l=1}^{\tdL} \mW_l = \mW_{\tdL} \cdots \mW_1$.
\end{itemize}


\section{Expanded Related Works}\label{app:related_works}
\paragraph{Expressive Power of Fully Connected Neural Networks}
The theoretical exploration of the expressive power of unfrozen fully connected neural networks has advanced since the introduction of the first universal approximation theorem~\citep{hornik1989multilayer,cybenko1989approximation}.
Subsequent studies have demonstrated the benefits of depth, asserting that sufficient depth can ensure function approximation~\citep{10.1007/978-3-642-24412-4_3,pmlr-v49-eldan16,liang2017why,telgarsky2016benefits,telgarsky2015representation}. 
There are also works that have examined the expressive power of FNN from a view of width~\citep{lu2017expressive,park2021minimum,bietti2020deep} and the number of neurons~\citep{shen2019deep}.
While these results assume that weight matrices can be arbitrarily adjusted for optimal performance, \citet{hsu2021on} examined the expressive power of randomly generated two-layer FNNs.
Our work shares similarities with this direction, as we also delve into scenarios with randomly generated models.
Beyond characterizing expressive power by approximation error, alternative metrics have been proposed. 
Metrics such as Vapnik-Chervonenkis~\citep{vapnik2015uniform,seo2021vc} and Rademacher complexities~\citep{bartlett2002rademacher} are utilized to assess classification capacity. 
Furthermore, \citet{raghu2017expressive} introduced a novel metric that captures the structural properties of an FNN, and \citet{lee2017ability} investigated the ability of FNNs to express distributions.


\paragraph{Expressive Power of Transformers}
As TFNs have grown increasingly popular, a few studies have been conducted to investigate their expressive power. 
\citet{yun2020are} established the universal approximation theorem for TFNs in approximating sequence-to-sequence functions. 
\citet{abs-2106-03764} characterized the self-attention layer as a matrix and demonstrated that this matrix can approximate any sparse matrices. 
Beyond approximation, further research has delved into other facets of TFNs' expressive power. 
For instance, \citet{giannou2023looped} found that looped transformers can emulate an instruction-set computer, while \citet{perez2019turing} demonstrated that TFNs attain Turing completeness when operating with infinite precision.

However, all these theories above \emph{cannot} fully explain the performance of frozen neural networks as they generally cannot factor in pre-trained model parameters and adaptation methods.

\paragraph{Expressive Power of Adaptation Methods}
Our work focuses on investigating the expressive power of adaptation methods. 
In stark contrast to the flourishing research on the expressive power of neural networks, there exists a limited number of works investigating the expressive power of adaptation methods. 
A notable exception is \citet{giannou2023expressive}, investigating the expressive power of normalization parameter fine-tuning. 
They demonstrate that fine-tuning the normalization layers alone can adapt a randomly initialized ReLU network to match any target network that is $O(\text{width})$ times smaller.
We borrow some proof techniques from this work, including techniques for extending results from linear neural networks to ReLU neural networks.
In another recent work~\citep{englert2022adversarial}, the authors show that \emph{neural reprogramming}~\citep{elsayed2018adversarial,engel2018latent,lee2020reprogramming,dinh2022improved,chen2022model}, a technique that modifies only the inputs while keeping the pretrained network frozen, can adapt any random two-layer ReLU network to achieve arbitrarily high accuracy on a Bernoulli data model over hypercube vertices.
\citet{oymak2023on} explores prompt-tuning within a one-layer attention architecture, revealing that the model resulting from prompt tuning~\citep{lester-etal-2021-power} is more expressive than the naive self-attention model. 
\citet{petrov2023prompting} shows that prompt-tuning and prefix tuning~\citep{li-liang-2021-prefix} are strictly less expressive than full fine-tuning. 
Despite these early attempts, no existing study has yet explored the expressive power of LoRA, the current leading adaptation method.

\paragraph{Other Theoretical Analysis of Adaptation Methods}
Lots of efforts have been taken to theoretically analyze other properties of adaptation methods such as generalization. 
\citet{maurer2016benefit} provides the generalization bounds for transfer learning, particularly for final layers tuning, demonstrating that the estimation error reduces as the pretrained task diversity and the number of samples for the target task increase. 
\citet{tripuraneni2020theory} further refines this bound by studying the effect of the number of samples in the pre-trained tasks. 
Interestingly, the estimation error of the final layers tuning provided in \citet{tripuraneni2020theory} heavily depends on the quality of the shared representation. 
This insight aligns with our finding on final layers tuning (Lemma~\ref{lemma:last_layers}), which implies that tuning the final layers fails to adapt an $L$-layer randomly generated FNN to approximate any one-layer target FNN if the first layer remains frozen.
This failure is attributed to the poor quality of the shared random representation. 
\citet{du2020few} further investigates final layers tuning in few-shot cases, \ie, when there are only a few samples for the target task.
A recent study by \citet{malladi2023kernel}, which examined LoRA and full fine-tuning through the lens of the Neural Tangent Kernel~\citep{jacot2018neural}, suggested that if the kernel view describes full fine-tuning, then LoRA approximates full fine-tuning. 
However, their theoretical analysis of LoRA is based on linear models, thus limiting its applicability. 
In contrast, our study considers a more general setting.

With the rapid advancement of large language models, new adaptation methods such as in-context learning~\citep{brown2020language}, prefix tuning, and prompt-tuning~\citep{lester-etal-2021-power} are gaining increasing attention. 
A particular focus of research is the exploration of the theoretical underpinnings of in-context learning~\citep{akyrek2023what,bai2023transformers,wies2023learnability,xie2022an,von2022transformers,ahn2023transformers}.
\citet{akyrek2023what} demonstrates that transformer-based in-context learners implicitly implement standard learning algorithms, while \citet{bai2023transformers} presents a similar finding and posits that in-context learning performs algorithm selection like a statistician. 
\citet{wies2023learnability} delves into the analysis of the sample complexity of in-context learning.
Other works find that in-context learning is equivalent to gradient descent~\citep{von2022transformers,ahn2023transformers}, and Bayesian inference~\citep{xie2022an}. 
Beyond in-context learning, a recent research by \citet{tutunov2023can} developed a theoretical framework elucidating how LLMs can accurately generate chain-of-thought reasoning~\citep{wei2022chain}.

\section{Proofs Related to Linear Algebra}\label{app:linear_algebra}
In this section, we present a collection of commonly used matrix inequalities and the basic properties of randomly generated matrices.

\subsection{Common Matrix Inequalities}
Here, we present some commonly used basic properties for matrix multiplication including rank computation, norm inequalities, as well as key results involving the trace and Frobenius norm of matrices for reference:
\begin{flalign}
    & \rk(\mA\mB) \leq \rk(\mA) \wedge \rk(\mB); &&\\
    & \tnorm{\mA \vx} \leq \tnorm{\mA} \tnorm{\vx}; && \label{prp:2norm_fnorm} \\
    & \E \vx^\top \mA \rvx = \trace(\mA \Cov(\rvx)) + (\E \rvx)^\top \mA (\E\rvx)  = \trace( \mA \E \rvx \rvx^\top ); && \label{prp:2norm} \\ 
    & \trace(\mA \mB) = \trace(\mB \mA); && \label{prp:trace_move} \\ 
    & \trace(\mA \mB) \leq \trace(\mA)  \trace(\mB); && \label{prp:trace_product} \\
    & \fnorm{\mA} = \sqrt{\trace(\mA \mA^\top)}; && \\ 
    & \fnorm{\mA} = \trace(\mA) \text{ for symmetric }\mA;  && \label{prp:trace_fnorm} \\
    & \fnorm{\mA} = \sqrt{ \sum_i \sigma_i^2 (\mA) }.  && \label{prp:fnorm_svd}
\end{flalign}

\subsection{Non-Singularity of Randomly Generated Matrices}
Although the non-singularity of randomly generated matrices is already established, we include a proof for completeness.

To facilitate the proof, we introduce a lemma which states that if a polynomial is non-zero, then the set of roots corresponding to a zero value of the polynomial has a Lebesgue measure of zero.
\begin{lemma}[\citet{caron2005zero}]\label{lemma:poly_lebesgue_zero}
    Let $p(\vx)$ be a polynomial of degree $d$, $\vx \in \sR^n$. If $p$ is not the zero polynomial, then the set $\gS := \set{\vx \in \sR^n \mid p(\vx) = 0}$ is of Lebesgue measure zero. 
\end{lemma}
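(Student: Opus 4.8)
\textbf{Proof proposal for Lemma~\ref{lemma:poly_lebesgue_zero}.}

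The plan is to prove the statement by induction on the number of variables $n$, reducing the multivariate case to the one-variable case along each coordinate axis and then applying Fubini's theorem. First I would handle the base case $n=1$: a nonzero univariate polynomial $p$ of degree $d$ has at most $d$ roots (by the fundamental theorem of algebra, or more elementarily by repeated polynomial division), so the zero set $\gS$ is finite and hence has Lebesgue measure zero in $\sR$.

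For the inductive step, suppose the claim holds for polynomials in $n-1$ variables. Write $\vx = (x_1, \ldots, x_n)$ and group $p(\vx) = \sum_{k=0}^{m} q_k(x_1, \ldots, x_{n-1}) x_n^k$, where each $q_k$ is a polynomial in the first $n-1$ variables and $q_m$ is not identically zero (such an $m$ exists because $p$ is not the zero polynomial). By the inductive hypothesis, the set $\gT := \set{(x_1, \ldots, x_{n-1}) \in \sR^{n-1} \mid q_m(x_1, \ldots, x_{n-1}) = 0}$ has $(n-1)$-dimensional Lebesgue measure zero. Now split $\gS$ according to whether the projection onto the first $n-1$ coordinates lands in $\gT$. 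The part lying above $\gT$ is contained in $\gT \times \sR$, which has $n$-dimensional measure zero. For the part lying above the complement of $\gT$: for each fixed $(x_1, \ldots, x_{n-1}) \notin \gT$, the coefficient $q_m(x_1, \ldots, x_{n-1})$ is nonzero, so $p$ restricted to that vertical line is a nonzero univariate polynomial in $x_n$ of degree $m$, whose root set is finite and thus has one-dimensional measure zero. By Fubini/Tonelli (applied to the indicator function of $\gS$, which is measurable since $\gS$ is the zero set of a continuous function hence closed), the measure of this part is the integral over $\sR^{n-1}$ of the (zero) one-dimensional measures of the vertical slices, which is zero. Adding the two pieces gives that $\gS$ has $n$-dimensional Lebesgue measure zero.

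The main obstacle is largely bookkeeping rather than conceptual: one must be careful to choose $m$ as the largest index with $q_m \not\equiv 0$ so that the vertical-slice polynomials genuinely have positive degree off the exceptional set $\gT$, and one must confirm measurability of $\gS$ (immediate, as it is closed) before invoking Fubini. Since this is a classical fact cited from \citet{caron2005zero}, I would keep the argument brief and simply remark that it follows by this standard induction-plus-Fubini argument, or cite the reference directly without reproducing the full proof.
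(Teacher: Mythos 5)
Your argument is correct: the paper does not prove this lemma itself but cites \citet{caron2005zero}, and the induction-on-$n$ plus Fubini argument you outline (with the top nonzero coefficient $q_m$ isolating a measure-zero exceptional set and finite vertical slices elsewhere) is precisely the standard proof given in that reference. Your handling of the two subtleties—choosing $m$ maximal with $q_m \not\equiv 0$ and noting measurability of the closed set $\gS$—covers the only points where such a proof could go wrong, so nothing further is needed.
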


We note that the determinant of a matrix can be viewed as a polynomial function of its vectorized version.
Based on this insight, we proceed with our proof.
\begin{lemma}\label{lemma:random_matrix_nonsingular}
    Let $\rmX \in \sR^{D\times D}$ be a random matrix that follows arbitrary continuous distribution with support having non-zero Lebesgue measure on $\sR^{D\times D}$.
    Then, $\rmX$ is non-singular with probability 1.
\end{lemma}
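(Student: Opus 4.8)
The plan is to reduce the statement to Lemma~\ref{lemma:poly_lebesgue_zero}. First I would observe that $\det(\rmX)$, viewed as a function of the $D^2$ entries of $\rmX$ arranged into a vector $\vx \in \sR^{D^2}$, is a polynomial of degree $D$ in $\vx$ (the Leibniz formula writes it as a sum of signed products of $D$ distinct entries). So the singular matrices are exactly the zero set $\gS := \set{\vx \in \sR^{D^2} \mid \det(\vx) = 0}$.

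Next I would check that this determinant polynomial is not the zero polynomial. This is immediate because the identity matrix $\mI_D$ has determinant $1 \neq 0$, so there is at least one point at which the polynomial does not vanish. Hence Lemma~\ref{lemma:poly_lebesgue_zero} applies and $\gS$ has Lebesgue measure zero in $\sR^{D^2}$.

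Finally I would translate "measure zero" into "probability zero" using the hypothesis that $\rmX$ follows a continuous distribution whose support has non-zero Lebesgue measure on $\sR^{D\times D}$. The key point is that a continuous distribution is absolutely continuous with respect to Lebesgue measure (it admits a density), so any Lebesgue-null set receives probability zero under this distribution. Therefore $\Pr(\det(\rmX) = 0) = \Pr(\rmX \in \gS) = 0$, i.e. $\rmX$ is non-singular with probability $1$.

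The argument is almost entirely bookkeeping; the only place that needs care is the phrase "arbitrary continuous distribution," which I would interpret as meaning the distribution is absolutely continuous w.r.t.\ Lebesgue measure (equivalently, admits a probability density function) — this is exactly what makes the implication "Lebesgue-null $\Rightarrow$ probability-null" go through, and it is the one substantive (though standard) step. Everything else — the polynomial degree count, nonvanishing at $\mI_D$, and invoking Lemma~\ref{lemma:poly_lebesgue_zero} — is routine.
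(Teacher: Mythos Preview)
Your proposal is correct and follows essentially the same route as the paper: vectorize $\rmX$, view $\det$ as a nonzero polynomial in the entries, apply Lemma~\ref{lemma:poly_lebesgue_zero} to get a Lebesgue-null zero set, and then use absolute continuity (existence of a density) to conclude the event has probability zero. If anything, your write-up is slightly more careful, since you explicitly verify the polynomial is nonzero (via $\det(\mI_D)=1$), whereas the paper leaves this implicit.
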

\begin{proof}[Proof of Lemma~\ref{lemma:random_matrix_nonsingular}]
    The result is a direct consequence of  Lemma~\ref{lemma:poly_lebesgue_zero}. 
    Let $\rvx = \operatorname{vec}(\rmX)$.
    Then, $\rvx$ is a random vector following arbitrary continuous distribution with a support having non-zero Lebesgue measure on $\sR^{D\times D}$.
    
    First, we establish the relationship:
    \begin{align}
        \sP \pt{ \det(\rmX) = 0  } = \sP \pt{ p(\rvx) = \vzero}
    \end{align}
    for some polynomial function $p$. 
    We denote the support of random vector $\rvx$ by $\gX \subset \sR^{D^2}$, and the probability density function (PDF) of $\rvx$ by $q$. Then,
    \begin{align}
        \sP \pt{ p(\rvx) = \vzero}  = \int_{\gX} \1 \set{ p(\vx) = \vzero} q (\vx) \rd \vx = \int_{\gX \cap \set{ \vx: p(\vx) = \vzero}}  q (\vx) \rd \vx.
    \end{align}
    By Lemma~\ref{lemma:poly_lebesgue_zero}, the Lebesgue measure of $\set{ \vx: p(\vx) = \vzero}$ is zero. Hence,  
    \begin{align}
        \int_{\gX \cap \set{ \vx: p(\vx) = \vzero}}  q (\vx) \rd \vx = 0. 
    \end{align}
    By combining all the equations above, we conclude that $\sP(\det(\rmX) = 0) = 0$, which implies $\rmX$ is non-singular with probability 1. 
\end{proof}

\section{Proofs for Linear Model Approximation}\label{sec:proof_matrix_approx}
In this section, we present the results and corresponding proofs for the linear model approximation problem introduced in Sec.~\ref{sec:linear_model}. 
The \emph{deep linear model} is a common technique in theoretical deep learning research, which offers valuable insights into deep nonlinear models, and has been employed in many notable studies, including those by \citet{saxe2014exact,kawaguchi2016deep,lu2017depth,hardt2017identity} and \citet{pmlr-v80-laurent18a}.
We employ this toy model as a preliminary model, which serves as a foundation for extending our results to nonlinear models (\ie, FNN and TFN).

We first provide a slightly more detailed version of Lemma~\ref{lemma:matrix_approx} along with its proof. 
Then, we present a variant of it that allows for different LoRA-ranks for each low-rank adapter. 
The proof for this variant involves only a minor modification of the proof for Lemma~\ref{lemma:full_matrix_approx}.

\begin{lemma}\label{lemma:full_matrix_approx}[Detailed version of Lemma~\ref{lemma:matrix_approx}]
    Define error matrix $\mE := \tgW -  \prod_{l=1}^L \mW_l$, and denote its rank by $\re = \rk(\mE)$.
    For a given LoRA-rank $R \in [D]$, assume that all the weight matrices of the frozen model $(\mW_l)_{l=1}^L$, and $\prod_{l=1}^L \mW_l + \lr_{r} (\mE)$ are non-singular for all $r \leq R(L-1)$. 
    Then, the approximation error 
    \begin{equation}
        \min_{\dW_l: \rk(\dW_l) \leq R} \tnorm{\prod_{l=1}^L (\mW_l + \dW_l) - \tgW} =  \sigma_{RL+1}\underbrace{\pt{\tgW -  \prod_{l=1}^L \mW_l}}_{\text{Error matrix }\mE},
    \end{equation}
    and the optimal solution to the matrix approximation problem satisfies
   $\prod_{l=1}^L (\mW_l + \dW_l)  = \prod_{l=1}^L \mW_l + \lr_{RL \wedge \re} (\mE).$
    Therefore, when $R \geq \ceil{\frac{\re}{L}} $, we have $\prod_{l=1}^L (\mW_l + \dW_l) = \tgW$, implying $f \equiv \tgf$. 
\end{lemma}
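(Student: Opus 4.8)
The plan is to reduce the matrix-product approximation problem to the classical Eckart--Young--Mirsky theorem applied to the error matrix $\mE$, by showing that the set of achievable products $\{\prod_{l=1}^L(\mW_l+\dW_l) : \rk(\dW_l)\le R\}$ is, as far as the objective is concerned, exactly $\{\prod_{l=1}^L \mW_l + \mN : \rk(\mN)\le RL\}$, and then picking $\mN = \lr_{RL\wedge\re}(\mE)$. The upper bound on rank comes for free from the telescoping identity already stated in the proof sketch of Lemma~\ref{lemma:matrix_approx}, namely
\begin{equation}
    \sprod_{l=1}^L (\mW_l + \dW_l) - \sprod_{l=1}^L \mW_l =  \sum_{l=1}^L \pt{\sprod_{i=l+1}^L \mW_i} \dW_l\pt{\sprod_{i=1}^{l-1} (\mW_i + \dW_i)},
\end{equation}
which writes the difference as a sum of $L$ matrices each of rank $\le R$, hence the difference has rank $\le RL$; so for any admissible choice the objective $\tnorm{\prod_l(\mW_l+\dW_l)-\tgW}$ is at least $\tnorm{\mE - \mN}$ minimized over $\rk(\mN)\le RL$, which by Eckart--Young--Mirsky equals $\sigma_{RL+1}(\mE)$. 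That gives the ``$\ge$'' direction.

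For the ``$\le$'' direction — the substantive part — I would construct the $\dW_l$ explicitly to realize $\mN = \lr_{RL\wedge\re}(\mE)$ as the product difference. Write $\lr_{RL\wedge\re}(\mE) = \sum_{l=1}^L \mE_l$ where each $\mE_l$ has rank $\le R$ (split the top $RL\wedge\re$ singular triples of $\mE$ into $L$ groups of size $\le R$). Then, proceeding inductively from $l=1$ to $L$, set $\dW_l = \pt{\prod_{i=l+1}^L \mW_i}^{-1}\mE_l\pt{\prod_{i=1}^{l-1}(\mW_i+\dW_i)}^{-1}$, which forces $\pt{\prod_{i=l+1}^L \mW_i}\dW_l\pt{\prod_{i=1}^{l-1}(\mW_i+\dW_i)} = \mE_l$; summing over $l$ and using the telescoping identity yields $\prod_{l=1}^L(\mW_l+\dW_l) = \prod_{l=1}^L\mW_l + \lr_{RL\wedge\re}(\mE)$, so the objective equals $\tnorm{\mE - \lr_{RL\wedge\re}(\mE)} = \sigma_{RL\wedge\re+1}(\mE) = \sigma_{RL+1}(\mE)$ (the last equality because if $RL\ge\re$ both sides are $0$). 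The final claim about exact representation then follows: $R\ge\ceil{\re/L}$ gives $RL\ge\re$, so $\lr_{RL\wedge\re}(\mE)=\mE$ and the product equals $\tgW$.

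The main obstacle is well-definedness of the inductive construction: each $\dW_l$ involves the inverse of $\prod_{i=1}^{l-1}(\mW_i+\dW_i)$, which only makes sense if all partial products $\prod_{i=1}^{l-1}(\mW_i+\dW_i)$ are non-singular as we go, and the inverse of $\prod_{i=l+1}^L\mW_i$ needs the frozen matrices non-singular (given). I would handle this by a careful choice of the grouping $\mE = \sum_l \mE_l$ and possibly an ordering/rotation of the singular directions so that the assumed non-singularity of $\prod_{l=1}^L\mW_l + \lr_r(\mE)$ for all $r\le R(L-1)$ propagates to invertibility of every intermediate partial product; this is exactly step 4 of the sketch (``select appropriate $(\mE_l)_{l=1}^L$ such that $\mW_i+\dW_i$ are invertible''), and it is where the non-singularity hypothesis is consumed. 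The remaining details — rank bookkeeping for the $\mE_l$, verifying the telescoping identity by a short induction, and the edge cases $RL\ge\re$ — are routine.
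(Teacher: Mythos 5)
Your proposal follows essentially the same route as the paper's proof: the same telescoping decomposition and Eckart--Young--Mirsky relaxation for the lower bound, and the same inductive construction $\dW_l = \pt{\prod_{i=l+1}^L \mW_i}^{-1}\mE_l\pt{\prod_{i=1}^{l-1}(\mW_i+\dW_i)}^{-1}$ for the upper bound. The one step you defer---choosing the grouping $\sum_l \mE_l$ so the partial products stay invertible---is closed in the paper exactly as you anticipate: with $\mE_l = \mE'\mV\mI_{(R(l-1)+1)\wedge\rep:\,Rl\wedge\rep,\,D}\mV^\top$ (consecutive blocks of $R$ right-singular directions of $\mE$), a chain of rank-preserving multiplications shows $\rk(\mW_l+\dW_l)=\rk\pt{\prod_{i=1}^L\mW_i+\lr_{Rl\wedge\rep}(\mE)}$, which is full by the non-singularity hypothesis.
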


\begin{proof}[Proof of Lemma~\ref{lemma:full_matrix_approx}]
    
   Our goal is to find matrices $\dW_1, \ldots, \dW_L$ of rank $R$ or lower such that the product of the adapted matrices approximates the target matrix well, \ie, we aim to solve the following constrained optimization problem:
   \begin{align}
       \min_{\dW_l: \rk(\dW_l) \leq R} \tnorm{\prod_{l=1}^L (\mW_l + \dW_l) - \tgW}.
   \end{align}
    By subtracting $\prod_{l=1}^L \mW_l$ from both terms, the constrain optimization problem becomes
    \begin{equation}\label{eq:goal_lin_eq}
        \min_{\dW_l: \rk(\dW_l) \leq R} \tnorm{ \underbrace{\pt{\prod_{l=1}^L (\mW_l + \dW_l) - \prod_{l=1}^L \mW_l}}_{\coloneqq \mA } - \underbrace{\pt{\tgW - \prod_{l=1}^L \mW_l}}_{\coloneqq \mE} }.
    \end{equation}

    To perform analysis on \eqref{eq:goal_lin_eq}, we start with the analysis of $\mA$ as follows:
    \begin{align}
        \mA &= \prod_{l=1}^L (\dW_l + \mW_l) - \prod_{l=1}^L \mW_l \\
        &= \dW_L\prod_{l=1}^{L-1} (\dW_l + \mW_l)  + \mW_L\prod_{l=1}^{L-1} (\dW_l + \mW_l) - \prod_{l=1}^L \mW_l.
    \end{align}
    Here, we have separated the first term in the product $ \prod_{l=1}^L (\dW_l + \mW_l) $, breaking it into two parts: one involving $\dW_L$ and the other $\mW_L$. 
    We can further expand the part involving $\mW_L$:
    \begin{align}
        \mA =&  \dW_L \prod_{l=1}^{L-1} (\dW_l + \mW_l) \\
        & + \mW_L \left( \dW_{L-1} \prod_{l=1}^{L-2} (\dW_l + \mW_l) + \mW_{L-1} \prod_{l=1}^{L-2} (\dW_l + \mW_l) \right) - \prod_{l=1}^L \mW_l.
    \end{align}
    At this point, it becomes clear that this expression can be iteratively decomposed. 
    Following this pattern, we can express $\mA$ as:
    \begin{align}\label{eq:decompose_A}
        \mA =& \dW_L\prod_{l=1}^{L-1} (\dW_l + \mW_l)  + \mW_L\dW_{L-1}\prod_{l=1}^{L-2} (\dW_l + \mW_l)  \\
        & + \ldots + (\prod_{l=2}^L \mW_l)(\dW_1 + \mW_1) - \prod_{l=1}^{L} \mW_l \\
        = &  \sum_{l=1}^L \underbrace{\left[ (\prod_{i = l+1}^L \mW_i) \dW_l (\prod_{i=1}^{l-1} (\mW_i +\dW_i))\right]}_{\coloneqq \mA_l}.
    \end{align}
    In this final form, $\mA$ is decomposed as $\mA = \sum_{l=1}^L \mA_l$. 
    It is important to note that $\rk(\mA_l) \leq \rk(\dW_l) \leq R$. 
    Consequently, $ \rk (\mA) \leq \sum_{l=1}^L \rk(\mA_l) \leq RL$. 
    
    Then, the optimization problem \eqref{eq:goal_lin_eq} can be relaxed into a low-rank approximation problem 
    \begin{equation}\label{eq:low-rank_approx}
        \eqref{eq:goal_lin_eq} \geq \min_{\mA: \rk(\mA) \leq RL} \tnorm{\mA - \mE},
    \end{equation}
    where the optimal solution is $\mA = \lr_{RL \wedge \re} (\mE) \coloneqq \mE'$.
    Therefore, if we can identify rank-$R$ or lower matrices $(\dW_l)_{l=1}^L$ such that 
    \begin{equation}\label{eq:goal_matrix_approx_eq}
        \underbrace{\prod_{l=1}^L (\mW_l + \dW_l) - \prod_{l=1}^L \mW_l}_{\coloneqq \mA } = \underbrace{\lr_{RL \wedge \re} (\tgW - \prod_{l=1}^L \mW_l)}_{\coloneqq \mE'},
    \end{equation}
    then we effectively solve the matrix approximation problem as defined in~\eqref{eq:goal_lin_eq}.
    Moreover, it is straightforward to verify that \eqref{eq:goal_matrix_approx_eq} directly implies all statements in this lemma. 
    Therefore, our remaining proof focuses on proving \eqref{eq:goal_matrix_approx_eq}.
    
    Denote $\rep = RL \wedge \re$. 
    To derive the explicit form of $\mE'$,
    we first refer to the SVD of $\mE$  as 
    \begin{equation}\label{eq:lin_svd}
        \mE = \mU \mD \mV^\top, 
    \end{equation}
    where $\mU$ and $\mV$ are orthonormal matrices and the first $\re$ diagonal entries of $\mD$ are non-zero, with all remaining entries being zero. 
    Based on this, $\mE'$ is expressed as 
    \begin{equation}\label{eq:lin2_svd}
        \mE' = \mU \mD \mI_{1:RL,D} \mV^\top.
    \end{equation}

    Having already derived the decomposition $\mA = \sum_{l=1}^L \mA_l$, we next aim to decompose $\mE'$ as $\mE' = \sum_{l=1}^L \mE' \mQ_l$, where $\mQ_1, \ldots, \mQ_L \in \sR^{D\times D}$.
    The goal now shifts to identifying $\dW_l, \mQ_l$ such that $\mA_l = \mE' \mQ_l$ for each $l \in [L]$. 
    Achieving this would complete the proof of \eqref{eq:goal_matrix_approx_eq}. 
    
    Therefore, our goal becomes finding $\dW_1, \ldots, \dW_L$ with $\rk(\dW_l) \leq R$ for all $l \in [L]$ such that 
    \begin{equation}\label{eq:goal_matrix_approx}
         \mA_l = (\prod_{i = l+1}^L \mW_i) \dW_l (\prod_{i=1}^{l-1} (\mW_i +\dW_i)) = \mE' \mQ_l,\quad \text{for all } l \in [L].
    \end{equation}
    
    One sufficient condition for achieving \eqref{eq:goal_matrix_approx} is that the decomposed matrices $\mQ_1, \mQ_L$ and low-rank adapters $\dW_1, \ldots, \dW_L$ meet the following conditions:
    \begin{align}
        \sum_{l=1}^{L} \mE' \mQ_l &= \mE'\label{eq:lin_eq_c3}, \\
    \dW_l & = (\prod_{i=l+1}^L \mW_i)^{-1} \mE' \mQ_l (\prod_{i=1}^{l-1}(\mW_i + \dW_i) )^{-1}, \text{ for all } l \in [L] \label{lin:dw1}\\
        \rk(\dW_l) & \leq R , \text{ for all } l \in [L]  \label{eq:lin_eq_c1}, \\ 
        \rk(\mW_l + \dW_l) & = D, \text{ for all } l \in [L-1] \label{eq:lin_eq_c2}.
    \end{align}
    Here \eqref{eq:lin_eq_c3} describes the decomposition of $\mE'$, \eqref{lin:dw1} provides one simple solution to \eqref{eq:goal_matrix_approx} when \eqref{eq:lin_eq_c2} holds, and \eqref{eq:lin_eq_c1} is the rank constraint on the low-rank adapter. 
    In particular, the \eqref{eq:lin_eq_c2} is used to ensure the invertibility of $\prod_{i=1}^{l}(\mW_i + \dW_i) $ for $l \in [L-1]$.
    This condition is not necessary for $l = L$ as the inverse of $\mW_L + \dW_L$ is not required for computing any low-rank adapters.
    
    We will show that the matrices $(\mQ_l)_{l=1}^L $ defined by
    \begin{equation}\label{lin:mq}
        \mQ_l = \mV \mI_{(R(l-1)+1) \wedge \rep :Rl \wedge \rep ,D} \mV^\top,\quad \text{for all } l\in[L],
    \end{equation}
    and $\dW_l$ defined by \eqref{lin:dw1} for all $l \in [L]$ satisfies the all four conditions  \eqref{eq:lin_eq_c3}, \eqref{lin:dw1}, \eqref{eq:lin_eq_c1}, and \eqref{eq:lin_eq_c2}. 
    We note that the definition of $(\mQ_l)_{l=1}^L$ clearly satisfies condition \eqref{eq:lin_eq_c3}. 
    For the remaining conditions, namely \eqref{lin:dw1}, \eqref{eq:lin_eq_c1}, \eqref{eq:lin_eq_c2}, we proceed the proof by induction.
    
    \paragraph{When $l=1$.} 
    We begin by examining the three conditions \eqref{lin:dw1}, \eqref{eq:lin_eq_c1} and \eqref{eq:lin_eq_c2} under the base case $l=1$. 
    We first determine $\mQ_1$ and $\dW_1$ based on \eqref{lin:mq} and \eqref{lin:dw1}:
    \begin{equation}\label{eq:lin_l=1}
        \dW_1 = (\prod_{i=2}^L \mW_i)^{-1} \mE' \mQ_1,~ \mQ_1 = \mI_{1:R, D}. 
    \end{equation}
    By the choice of $\dW_1$, we satisfy the condition \eqref{lin:dw1}.
    Moreover, it directly follows that $\rk(\dW_1) \leq \rk (\mQ_1) = R$, thereby fulfilling the rank constraint in \eqref{eq:lin_eq_c1}. 

    Therefore, we just need to prove that $\mW_1 + \dW_1$ is full-rank, as required by condition \eqref{eq:lin_eq_c2}. 
    To compute $\rk(\mW_1 + \dW_1)$, we proceed as follows: 
    \begin{align}
        & \rk(\mW_1 + \dW_1) \\
        & \overset{\eqref{eq:lin_l=1}}{=} \rk(\mW_1 + (\prod_{i=2}^L \mW_i)^{-1} \mE' \mQ_1) \quad & \text{(Substituting for $\dW_1$)}\\
        & = \rk( (\prod_{i=1}^L \mW_i) +  \mE' \mQ_1) \quad & \text{(Left multiplying with invertible $ (\prod_{i=2}^L \mW_i)^{-1}$)} \\
        & = \rk( (\prod_{i=1}^L \mW_i) +  \lr_{R \wedge \rep} (\mE) ). \quad & \text{(Simplifying)}
    \end{align}

    Given the assumption that $\prod_{l=1}^L \mW_l + \lr_r(\mE)$ is full rank for all $r \leq R(L-1)$, $\rk(\mW_1 + \dW_1) = \rk( (\prod_{i=1}^L \mW_i) +  \lr_{R \wedge \rep} (\mE) )  = D$, satisfying the last condition \eqref{eq:lin_eq_c2}.

    \paragraph{When $ l > 1$.} Consider $l = 2,\ldots, L$.
    We assume that for $i \in [l-1]$, we have determined matrices $\mQ_i$ and $\dW_i$ based on \eqref{lin:mq} and \eqref{lin:dw1}, respectively, and 
    we assume that they satisfy the conditions \eqref{lin:dw1}, \eqref{eq:lin_eq_c1}, and \eqref{eq:lin_eq_c2}. 

    First, under the induction assumption that $\mW_{i} + \dW_{i}$ is invertible for all $i \in [l-1]$, to achieve $\mA_l = \mE' \mQ_l$, we set $\dW_l$ based on \eqref{lin:dw1}. 
    This definition ensures $\rk(\dW_l) \leq \rk (\mQ_l) = R$, thereby satisfying the condition \eqref{eq:lin_eq_c1}. 
    To prove that $\mW_l + \dW_l$ is full-rank (condition \eqref{eq:lin_eq_c2}), we focus on computing $\rk(\mW_l + \dW_l)$.
    We proceed as follows:
    \allowdisplaybreaks
    \begin{align}\label{eq:lin3_eq1}
        & \rk(\mW_l + \dW_l) \\
        & \overset{\eqref{lin:dw1}}{=} \rk(\mW_l + (\prod_{i=l+1}^L \mW_i)^{-1} \mE' \mQ_l (\prod_{i=1}^{l-1}(\mW_i + \dW_i )^{-1})) \hspace{2.6cm}  \text{(Substituting for $\dW_l$)}\\
        & = \rk(\mI_D + (\prod_{i=l}^L \mW_i)^{-1} \mE' \mQ_l (\prod_{i=1}^{l-1}(\mW_i + \dW_i ))^{-1})  \hspace{1.5cm}  \text{(Left multiplying invertible $\mW_l^{-1}$)} \\
        & = \rk\Big( \prod_{i=1}^{l-1}(\mW_i + \dW_i ) + (\prod_{i=l}^L \mW_i)^{-1} \mE' \mQ_l \Big) ~~~~~~   \text{(Right multiplying invertible $ \prod_{i=1}^{l-1}(\mW_i + \dW_i )$)} \\
        & = \rk\Big( (\mW_{l-1} +\dW_{l-1} ) \prod_{i=1}^{l-2} (\mW_i+ \dW_i) + (\prod_{i=l}^L \mW_i)^{-1} \mE' \mQ_l \Big)  \hspace{1.7cm}   \text{(Rearranging terms)} \\ 
        & \overset{\eqref{lin:dw1}}{=} \rk\Big( (\mW_{l-1} + (\prod_{i=l}^L \mW_i)^{-1} \mE' \mQ_{l-1} (\prod_{i=1}^{l-2}(\mW_i + \dW_i) )^{-1} ) \prod_{i=1}^{l-2} (\mW_i+ \dW_i) \\ 
        & \quad \quad \quad + (\prod_{i=l}^L \mW_i)^{-1} \mE' \mQ_l \Big) \hspace{6.1cm} \text{(Substituting for $\dW_{l-1}$)} \\
        & = \rk\Big( ( \prod_{i=l-1}^L \mW_i + \mE' \mQ_{l-1} (\prod_{i=1}^{l-2}(\mW_i + \dW_i) )^{-1} ) \prod_{i=1}^{l-2} (\mW_i+ \dW_i) \\
        & \quad \quad \quad  +  \mE' \mQ_l \Big) \hspace{7.8cm} \text{(Left multiplying $\prod_{i=l}^L \mW_i$)}\\
        & = \rk\left( ( \prod_{i=l-1}^L \mW_i \prod_{i=1}^{l-2} (\mW_i+ \dW_i) + \mE' \mQ_{l-1}   +  \mE' \mQ_l \right) \hspace{2.8cm} \text{(Rearranging terms)}\\
        & = \cdots \\
        & = \rk ( \prod_{i=1}^L \mW_i + \mE' (\sum_{i=1}^l \mQ_i)) \hspace{6.8cm} \text{(Taking similar steps)} \\
        & = \rk ( \prod_{i=1}^L \mW_i + \lr_{Rl \wedge \rep} (\mE)). \hspace{7.5cm} \text{(Simplifying)}
    \end{align}
    
    By the assumption that $\prod_{l=1}^L \mW_l + \lr_r(\mE)$ is full-rank for $r \leq R(L-1)$
    and consequently, $\rk(\mW_l + \dW_l) = \rk ( \prod_{i=1}^L \mW_i + \lr_{Rl \wedge \rep} (\mE)) = D$, satisfying the last condition \eqref{eq:lin_eq_c2}. 

    \paragraph{Conclusion of Inductive Proof.}
    Thus, by induction, we show that the definitions of $\pt{\dW_l}_{l=1}^L$ in \eqref{lin:dw1} and $\pt{\mQ_l}_{l=1}^L$ in \eqref{lin:mq} ensure that $\mA_l = \mE' \mQ_l$ for all $l \in [L]$.
    Summing over $l$ from $1$ to $L$ satisfies condition \eqref{eq:goal_matrix_approx_eq}, thereby completing the proof.
\end{proof}

The following lemma extends the results to a more general setting where different LoRA-ranks can be employed across layers. 

\begin{lemma}\label{lemma:matrix_approx_different_rank}
    Define error matrix $\mE := \tgW -  \prod_{l=1}^L \mW_l$, and denote its rank by $\re = \rk(\mE)$.
    For a sequence of LoRA-ranks for all layers $(R_l)_{l=1}^{L}$, assume that all the weight matrices of the frozen model $(\mW_l)_{l=1}^L$, and $\prod_{l=1}^L \mW_l + \lr_{r} (\mE)$ are non-singular for all $r \leq \sum_{l=1}^{L-1} R_l$. 
    Then, the approximation error 
    \begin{equation}
        \min_{\dW_l: \rk(\dW_l) \leq R_l} \tnorm{\prod_{l=1}^L (\mW_l + \dW_l) - \tgW} =  \sigma_{\sum_{l=1}^L R_l + 1}\underbrace{\pt{\tgW -  \prod_{l=1}^L \mW_l}}_{\text{Error matrix }\mE},
    \end{equation}
    and the optimal solution to the matrix approximation problem satisfies
   $\prod_{l=1}^L (\mW_l + \dW_l)  = \prod_{l=1}^L \mW_l + \lr_{(\sum_{l=1}^L R_l) \wedge \re} (\mE).$
    Therefore, when $\sum_{l=1}^L R_l \geq \re $, we have $\prod_{l=1}^L (\mW_l + \dW_l) = \tgW$, implying $f \equiv \tgf$. 
\end{lemma}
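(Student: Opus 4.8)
The plan is to replay the proof of Lemma~\ref{lemma:full_matrix_approx} essentially verbatim, replacing the single uniform budget $R$ by the per-layer budgets $R_l$ and every occurrence of the cumulative index $Rl$ by the partial sum $S_l := \sum_{i=1}^{l} R_i$ (with $S_0 := 0$, so $S_L = \sum_{l=1}^L R_l$). First I would subtract $\prod_{l=1}^L \mW_l$ from both arguments, so that the constrained problem becomes $\min_{\rk(\dW_l)\le R_l}\tnorm{\mA - \mE}$ with $\mA := \prod_{l=1}^L(\mW_l+\dW_l) - \prod_{l=1}^L\mW_l$ and $\mE$ the error matrix defined in the statement.

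For the lower bound I would use the same telescoping identity as in Lemma~\ref{lemma:full_matrix_approx}, writing $\mA = \sum_{l=1}^L \mA_l$ with $\mA_l = \pt{\prod_{i=l+1}^L \mW_i}\dW_l\pt{\prod_{i=1}^{l-1}(\mW_i+\dW_i)}$. Since $\rk(\mA_l)\le \rk(\dW_l)\le R_l$, we get $\rk(\mA)\le S_L$, so by the Eckart--Young--Mirsky characterization of best low-rank approximation, $\tnorm{\mA-\mE}\ge \sigma_{S_L+1}(\mE)$, with equality attained iff $\mA = \lr_{S_L\wedge\re}(\mE) =: \mE'$.

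The construction side is where the modified index bookkeeping lives. Writing the SVD $\mE = \mU\mD\mV^\top$ and setting $\rep := S_L\wedge \re$, I would define the block projectors $\mQ_l := \mV\,\mI_{(S_{l-1}+1)\wedge\rep\,:\,S_l\wedge\rep,\,D}\,\mV^\top$, which now cut the top $\rep$ singular directions into consecutive blocks of (possibly unequal) sizes $R_1,\dots,R_L$ and therefore satisfy $\sum_{l=1}^L \mE'\mQ_l = \mE'\mV\mI_{1:\rep,D}\mV^\top = \mE'$. Then I would set $\dW_l := \pt{\prod_{i=l+1}^L\mW_i}^{-1}\mE'\mQ_l\pt{\prod_{i=1}^{l-1}(\mW_i+\dW_i)}^{-1}$ recursively, so that $\rk(\dW_l)\le\rk(\mQ_l)\le R_l$ and $\mA_l = \mE'\mQ_l$ by design. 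It remains to check that $\mW_l+\dW_l$ is invertible for $l\in[L-1]$; I would run the identical induction, where the chain of rank-preserving left/right multiplications by the invertible matrices $\mW_i$ and $\mW_i+\dW_i$ collapses $\rk(\mW_l+\dW_l)$ down to $\rk\pt{\prod_{i=1}^L\mW_i + \lr_{S_l\wedge\rep}(\mE)}$, which equals $D$ by the non-singularity hypothesis precisely because $S_l\le S_{L-1}=\sum_{l=1}^{L-1}R_l$ whenever $l\le L-1$. Summing $\mA_l = \mE'\mQ_l$ over $l\in[L]$ then gives $\prod_{l=1}^L(\mW_l+\dW_l) = \prod_{l=1}^L\mW_l + \mE'$, which yields all three claims; in particular, when $S_L\ge\re$ we have $\mE'=\mE$ and hence $\prod_{l=1}^L(\mW_l+\dW_l) = \tgW$, i.e.\ $f\equiv\tgf$.

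Honestly, there is no new conceptual obstacle here — this is why the paper calls it a minor modification. The only genuine care required is bookkeeping: the block index ranges of $\mQ_l$ must be the partial sums $S_l$ of the (possibly unequal) $R_l$ so that the telescoping $\sum_l\mQ_l$ still recovers $\mI_{1:\rep,D}$, and one must observe that the invertibility induction only ever inverts $\mW_l+\dW_l$ for $l\le L-1$ (the factor $\mW_L+\dW_L$ is never inverted), so the threshold $\sum_{l=1}^{L-1}R_l$ in the hypothesis — rather than the full $\sum_{l=1}^{L}R_l$ — is exactly what the argument consumes. Everything else transfers unchanged from the uniform-rank proof.
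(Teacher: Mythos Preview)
Your proposal is correct and matches the paper's own proof essentially line for line: the paper also reruns the argument of Lemma~\ref{lemma:full_matrix_approx}, replacing the cumulative index $Rl$ by the partial sums $\sum_{i=1}^{l}R_i$ in the definition of the projectors $\mQ_l$, and then invoking the same invertibility induction. Your observation that the non-singularity hypothesis up to $\sum_{l=1}^{L-1}R_l$ is exactly what the induction consumes (since $\mW_L+\dW_L$ is never inverted) is precisely the point the paper's assumption encodes.
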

\begin{proof}[Proof of Lemma~\ref{lemma:matrix_approx_different_rank}]
    
    The proof follows the same steps of Lemma~\ref{lemma:full_matrix_approx} with only minor modifications.

    In the current setting, we target the following constrained optimization problem:
    \begin{align}
       \min_{\dW_l: \rk(\dW_l) \leq R_l} \tnorm{\prod_{l=1}^L (\mW_l + \dW_l) - \tgW},
    \end{align}
    where we allow each LoRA adapter $\dW_l$ can possess different LoRA-ranks $R_l$, \ie, $\rk(\dW_l) \leq R_l$, $l\in [L]$.
    Subtracting $\prod_{l=1}^L \mW_l$ from both terms leads us to a similar constrained optimization problem as \eqref{eq:goal_lin_eq}. The only distinction lies in the rank constraint:
    \begin{equation}\label{eq:goal_lin_eq_general}
        \min_{\dW_l: \rk(\dW_l) \leq R_l} \tnorm{ \underbrace{\pt{\prod_{l=1}^L (\mW_l + \dW_l) - \prod_{l=1}^L \mW_l}}_{\coloneqq \mA } - \underbrace{\pt{\tgW - \prod_{l=1}^L \mW_l}}_{\coloneqq \mE} }.
    \end{equation}
   
    Following the same steps, we decompose $\mA$ into \eqref{eq:decompose_A}. 
    Given that $\rk(\mA_l) \leq \rk(\dW_l) \leq R_l$, we deduce that $\rk(\mA) \leq \sum_{l=1}^L \rk(\mA_l) \leq \sum_{l=1}^L R_l$. 
    Consequently, the optimization problem above can be eased into a low-rank approximation problem analogous to \eqref{eq:low-rank_approx}:
    \begin{align}
       \eqref{eq:goal_lin_eq_general} \geq \min_{\mA: \rk(\mA) \leq \sum_{l=1}^L R_l} \tnorm{\mA - \mE}, 
    \end{align}
    where the optimal solution is $\mA = \lr_{(\sum_{l=1}^L R_l)\wedge \re} (\mE) \coloneqq \mE'$.
    Therefore, if we can identify the LoRA adapters $(\dW_l)_{l=1}^L$ with $\rk(\dW_l) \leq R_l$ such that 
    \begin{equation}
        \underbrace{\prod_{l=1}^L (\mW_l + \dW_l) - \prod_{l=1}^L \mW_l}_{\coloneqq \mA } = \underbrace{\lr_{(\sum_{l=1}^{L} R_l) \wedge \re} (\tgW - \prod_{l=1}^L \mW_l)}_{\coloneqq \mE'},
    \end{equation}
    the proof is completed. 

    The remaining part of the proof adheres to the steps outlined in the proof of Lemma~\ref{lemma:full_matrix_approx} deriving \eqref{eq:goal_matrix_approx_eq}. 
    The only difference is that we consider a different selection of $(\mQ_l){l=1}^L$ that satisfies \eqref{eq:lin_eq_c1} here:
    \begin{align}
    \mQ_l = \mV \mI_{ (\sum_{i=1}^{l-1} R_i) \wedge\rep: (\sum_{i=1}^l R_i)\wedge \rep , D} \mV^{\top}.
    \end{align}
    Applying the same steps with this change yields the desired outcomes.
\end{proof}

This lemma illustrates that in linear cases, the total number of parameters needed to achieve an exact approximation is constant, regardless of LoRA-rank assignment. 
It suggests that applying a LoRA-rank of $R$ per layer is equivalent to applying a LoRA-rank of $RL$ at the final layer.
As a result, fine-tuning only the last layer, which involves assigning a LoRA-rank of $D$ to the last layer, is equivalent to implementing LoRA where each adapter is constrained to have a rank of $D/L$. 
Both methods can achieve an exact approximation and maintain the same parameter efficiency.

\section{Proofs for FNN Approximation}\label{sec:proof_fnn}
In this section, we provide the full proof for deriving the main results outlined in Sec.~\ref{sec:result_fnn}.
For the sake of completeness, we restate our results from the main body before presenting the proof.

\subsection{Approximating One-Layer ReLU FNN via LoRA}\label{sec:1_layer_fnn}
We first provide a slightly more detailed result on the one-layer ReLU FNN approximation (Lemma~\ref{lemma:full_1_fnn_approx}) along with its corresponding proof.
Then, we present a variant of this lemma by allowing for different LoRA-ranks for each low-rank adapter. 
The proof for this variant involves only a minor modification of the proof for Lemma~\ref{lemma:full_1_fnn_approx}.


\begin{lemma}[Detailed version of Lemma~\ref{lemma:1_fnn_approx}]\label{lemma:full_1_fnn_approx}
    Define error matrix $\mE := \tgW_1 - \prod_{l=1}^L \mW_l$, with its rank represented by $\re = \rk(\mE)$.
    Consider a LoRA-rank $R \in [D]$. 
    Assume that the weight matrices $\mW_1, \ldots, \mW_L \in \sR^{D \times D}$ and $\prod_{l=1}^L \mW_l + \lr_{r} (\mE)$ for all $r \leq R(L-1)$ are non-singular. 
    Let $\rvx$ be a random input sampled from a distribution with bounded support $\gX$ and 
    let $\Sigma = \E \rvx \rvx^\top$. 
    Then, there exists rank-$R$ or lower matrices $\dW_1, \ldots, \dW_L \in \sR^{D\times D}$ and bias vectors $\udb_1, \ldots, \udb_L \in \sR^D$ such that for any input $\vx \in\gX$, 
    \begin{equation}
        f(\vx) - \tgf(\vx)  = \relu\left( \left(\lr_{RL \wedge \re} (\tgW_1 - \prod_{l=1}^L \mW_l) - (\tgW_1 - \prod_{l=1}^L \mW_l) \right)\vx \right).
    \end{equation}
    Therefore, when $R \geq \ceil{R_{\mE}/L}$, the adapted model exactly approximates the target model, \ie, $f(\vx) = \tgf (\vx)$ for all $\vx \in \gX$.

    Furthermore, let $\rvx$ be a random input sampled from a distribution with bounded support $\gX$ and 
    let $\Sigma = \E \rvx \rvx^\top$. 
    Then, the expected squared error is bounded as
    \begin{equation}
        \E \tnorm{ f(\rvx) - \tgf(\rvx)}^2 \leq \fnorm{\Sigma} \sigma_{RL \wedge \re +1}^2(\tgW_1 - \prod_{l=1}^L \mW_l).
    \end{equation}
\end{lemma}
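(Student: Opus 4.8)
The plan is to reduce the ReLU setting to the linear setting of Lemma~\ref{lemma:full_matrix_approx} by the two-step recipe already outlined in Sec.~\ref{sec:one_layer_fnn}: linearize the first $L-1$ layers of the adapted network, then align the weights of the last layer. First I would set all the low-rank adapters of the first $L-1$ layers so that the composed linear map of those layers equals $\lr_{R(L-1)}(\mE)$ added to $\prod_{l<L}\mW_l$ in the appropriate sense — more precisely, apply Lemma~\ref{lemma:full_matrix_approx} (with $L$ replaced by $L-1$, frozen matrices $\mW_1,\dots,\mW_{L-1}$, and target $\prod_{l=1}^L\mW_l + \lr_{RL\wedge\re}(\mE)$ times $\mW_L^{-1}$ on the left) to obtain rank-$R$-or-lower $\dW_1,\dots,\dW_{L-1}$ with the desired product. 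The non-singularity hypotheses of the present lemma are exactly what is needed to invoke that lemma. Then choose the bias vectors $\udb_1,\dots,\udb_{L-1}$ large enough, coordinatewise, that every ReLU in layers $1,\dots,L-1$ is in its active (identity) region for \emph{all} inputs $\vx\in\gX$; this is possible because $\gX$ is bounded and the pre-activations are continuous (hence bounded) functions of $\vx$. Under this choice the first $L-1$ layers act as the affine map $\vx \mapsto (\prod_{l=1}^{L-1}(\mW_l+\dW_l))\vx + \vc$ for a fixed offset vector $\vc$ computable from the biases.

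Next I would handle the last layer. The adapted model then computes $\relu\big((\mW_L+\dW_L)[(\prod_{l<L}(\mW_l+\dW_l))\vx + \vc] + \udb_L\big)$. Choosing $\dW_L$ according to Lemma~\ref{lemma:full_matrix_approx} so that $(\mW_L+\dW_L)\prod_{l<L}(\mW_l+\dW_l) = \prod_{l=1}^L\mW_l + \lr_{RL\wedge\re}(\mE)$, and setting $\udb_L := \tgb_1 - (\mW_L+\dW_L)\vc$ to cancel the propagated offset, the argument of the final ReLU becomes $\big(\prod_{l=1}^L\mW_l + \lr_{RL\wedge\re}(\mE)\big)\vx + \tgb_1$. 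Subtracting the target $\tgf(\vx) = \relu(\tgW_1\vx + \tgb_1) = \relu\big((\prod_{l=1}^L\mW_l + \mE)\vx + \tgb_1\big)$ and using $1$-Lipschitzness of $\relu$ gives, pointwise,
\begin{equation}
\tnorm{f(\vx)-\tgf(\vx)} \leq \tnorm{\big(\lr_{RL\wedge\re}(\mE) - \mE\big)\vx},
\end{equation}
which is the claimed identity once one observes $f(\vx)-\tgf(\vx) = \relu\big((\prod\mW_l+\lr(\mE))\vx+\tgb_1\big) - \relu\big((\prod\mW_l+\mE)\vx+\tgb_1\big)$; the cleanest presentation keeps the two ReLUs and states the difference exactly as in the lemma. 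When $R \geq \ceil{\re/L}$ we have $RL \geq \re$, so $\lr_{RL\wedge\re}(\mE) = \mE$ and the difference vanishes identically on $\gX$.

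For the expected-squared-error bound, square the pointwise inequality and take expectations: $\E\tnorm{f(\rvx)-\tgf(\rvx)}^2 \leq \E\tnorm{(\lr_{RL\wedge\re}(\mE)-\mE)\rvx}^2 = \trace\big(\mM^\top\mM\,\E\rvx\rvx^\top\big)$ where $\mM := \lr_{RL\wedge\re}(\mE)-\mE$, using \eqref{prp:2norm}. Since $\mM^\top\mM$ and $\Sigma$ are positive semidefinite, $\trace(\mM^\top\mM\,\Sigma) \leq \trace(\mM^\top\mM)\,\trace(\Sigma)$ would be too lossy; instead bound $\trace(\mM^\top\mM\,\Sigma) \leq \tnorm{\mM}^2\trace(\Sigma) = \tnorm{\mM}^2\fnorm{\Sigma}$ via von Neumann / the operator-norm inequality $\trace(\mA\mB)\le\tnorm{\mA}\trace(\mB)$ for PSD $\mB$, noting $\trace(\Sigma)=\fnorm{\Sigma}$ since $\Sigma$ is symmetric PSD (cf.~\eqref{prp:trace_fnorm}). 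Finally, by the Eckart--Young--Mirsky characterization recalled in the Notations section, $\tnorm{\mM} = \tnorm{\mE - \lr_{RL\wedge\re}(\mE)} = \sigma_{(RL\wedge\re)+1}(\mE)$, giving the stated bound $\fnorm{\Sigma}\,\sigma_{RL\wedge\re+1}^2(\mE)$.

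The main obstacle I anticipate is the bookkeeping in the linearization step: splitting the rank budget correctly between "the first $L-1$ layers realize a prescribed low-rank correction" (handled by Lemma~\ref{lemma:full_matrix_approx} with $L-1$ matrices) and "the last layer finishes the job," while simultaneously ensuring the biases chosen to activate the early ReLUs do not interfere — this is why the final-layer bias must absorb the full propagated offset $\vc$, and why one needs $\gX$ bounded. A secondary subtlety is checking that the non-singularity assumption as stated (covering $\prod_{l=1}^L\mW_l + \lr_r(\mE)$ for $r\le R(L-1)$) is precisely the hypothesis required when Lemma~\ref{lemma:full_matrix_approx} is applied to the $(L-1)$-fold product; this should align exactly, but it warrants an explicit verification in the full proof.
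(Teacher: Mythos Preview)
Your high-level plan---linearize the first $L-1$ layers via large biases, align the weight product via Lemma~\ref{lemma:full_matrix_approx}, absorb the propagated offset in $\udb_L$, then bound the error through $1$-Lipschitzness of $\relu$---is exactly the paper's route, and your error-bound paragraph matches the paper's computation.

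Where you diverge, and create your self-identified ``main obstacle,'' is in splitting the invocation of Lemma~\ref{lemma:full_matrix_approx} into a call on the first $L-1$ matrices followed by a separate treatment of $\dW_L$. This is both unnecessary and, as written, not quite correct: your stated target for the $(L-1)$-layer call is $\mW_L^{-1}\big(\prod_l\mW_l+\lr_{RL\wedge\re}(\mE)\big)$, whose associated error matrix $\mW_L^{-1}\lr_{RL\wedge\re}(\mE)$ has rank $RL\wedge\re$, which can exceed the rank budget $R(L-1)$ available to those $L-1$ layers---so that call will not hit its target exactly. Moreover, the non-singularity hypotheses you would need for an $(L-1)$-layer invocation (on $\prod_{l<L}\mW_l+\lr_r(\mW_L^{-1}\lr(\mE))$ for $r\le R(L-2)$) are \emph{not} the ones stated in the present lemma, so your claim that they ``align exactly'' fails for the split approach.

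The paper sidesteps all of this by applying Lemma~\ref{lemma:full_matrix_approx} \emph{once} to all $L$ matrices simultaneously, obtaining $\dW_1,\dots,\dW_L$ with $\prod_{l=1}^L(\mW_l+\dW_l)=\prod_l\mW_l+\lr_{RL\wedge\re}(\mE)$ in one shot. The non-singularity conditions of the present lemma are then literally those required by Lemma~\ref{lemma:full_matrix_approx}, and there is no rank-budget split to track. The linearization step is purely about choosing the biases $\udb_1,\dots,\udb_{L-1}$ (after the $\dW_l$'s are already fixed) to keep all pre-activations positive on the bounded $\gX$; it does not interact with the low-rank construction at all. Once you make this simplification, your ``main obstacle'' evaporates and the proof is a direct reduction to the linear case.
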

\begin{proof}[Proof of Lemma~\ref{lemma:full_1_fnn_approx}]
    This proof consists of three main steps:
    (i) linearize the first $L-1$ layers of the adapted model $f$ to reduce it to a single-layer FNN, 
    (ii) align the weight matrices and bias vectors of this simplified $f$ with those of the target model $\tgf$, 
    (iii) derive an upper bound of the error $\E \tnorm{ f(\rvx) - \tgf(\rvx)  }^2$ . 

    \paragraph{Linearization.}
    The main challenge here stems from the non-linearities introduced by the ReLU activation function. 
    To remove the non-linearities in the first $L-1$ layers of updated model $f$, since the input space $\gX$ is bounded, we can set all the entries of $\udb_1, \ldots, \udb_{L-1}$ sufficiently large, thereby activating all ReLUs in the first $L-1$ layers of $f$.
    Consequently, we have
    \begin{align}
        f(\vx) & = \relu ( (\mW_L + \dW_L) \vz_{L-1} + \udb_L ) \\
            & = \relu\left( (\mW_L + \dW_L) \relu( (\mW_{L-1} + \dW_{L-1}) \vz_{L-2} + \udb_{L-1}) + \udb_L \right) \\ 
            & = \relu\left( (\mW_L + \dW_L) ( (\mW_{L-1} + \dW_{L-1}) \vz_{L-2} + \udb_{L-1}) + \udb_L \right) \\
            & = \relu\left( (\mW_L + \dW_L) (\mW_{L-1} + \dW_{L-1}) \vz_{L-2} +  (\mW_L + \dW_L)\udb_{L-1} + \udb_L \right) \\
            & = \cdots \\ 
            & = \relu\left( \prod_{l=1}^L(\mW_l + \dW_l) \vx +  (\sum_{l=1}^{L-1} \prod_{i=l+1}^L (\mW_i + \dW_i) \udb_l) + \udb_L \right),
    \end{align}
    which is equivalent to a single-layer ReLU neural network with weight matrix $\prod_{l=1}^L(\mW_l + \dW_l)$ and bias vector $(\sum_{l=1}^{L-1} \prod_{i=l+1}^L (\mW_i + \dW_i) \udb_l) + \udb_L$.

    \paragraph{Parameter Alignment.}
    To match the updated model $f(\vx)$ and target model $\tgf (\vx)$, we proceed as follows. 
    For weight matrix, Lemma~\ref{lemma:full_matrix_approx} guarantees the existence of rank-$R$ or lower matrices $\dW_1, \ldots, \dW_L \in \sR^{D\times D}$ such that 
    \begin{equation}\label{eq:lemma_matrix_approx}
        \prod_{l=1}^L(\mW_l + \dW_l) = \prod_{l=1}^L \mW_l + \lr_{RL \wedge \re} (\tgW - \prod_{l=1}^L \mW_l).
    \end{equation}
    For the bias vector, we set $\udb_L = \tgb_1 - \sum_{l=1}^{L-1} \prod_{i=l+1}^L (\mW_i + \dW_i) \udb_l$ such that  $\sum_{l=1}^{L-1} \prod_{i=l+1}^L (\mW_i + \dW_i) \udb_l + \udb_L = \tgb_1$. 
    Therefore, we obtain 
    \begin{equation}
        f(\vx) - \tgf(\vx) = \relu\left( \left(\lr_{RL \wedge \re} (\tgW_1 - \prod_{l=1}^L \mW_l) - (\tgW_1 - \prod_{l=1}^L \mW_l) \right)\vx \right).
    \end{equation}

    \paragraph{Error Derivation.}
    We compute the expected squared error as follows:
    \allowdisplaybreaks
    \begin{align}
        & \E \tnorm{ f(\rvx) - \tgf(\rvx)  }^2  \\
        & \leq \E \tnorm{  \left(\lr_{RL \wedge \re} (\tgW_1 - \prod_{l=1}^L \mW_l) - (\tgW_1 - \prod_{l=1}^L \mW_l) \right)\rvx  }^2 \quad & \text{(ReLU is 1-Lipschitz)}\\
        & \overset{\eqref{prp:2norm_fnorm}}{\leq} \tnorm{\lr_{RL \wedge \re} (\tgW_1 - \prod_{l=1}^L \mW_l) - (\tgW_1 - \prod_{l=1}^L \mW_l) }^2 \E\tnorm{\rvx}^2 \\
        & = \fnorm{\Sigma} \sigma_{RL \wedge \re+1}^2(\tgW_1 - \prod_{l=1}^L \mW_l). \quad &\text{(By the definition of $\lr_{RL \wedge \re} (\cdot)$)}
    \end{align}
    This completes the proof. 
\end{proof}

Lemma~\ref{lemma:full_1_fnn_approx} is extended to cases where different LoRA-ranks can be used for different low-rank adapters, as detailed in the following lemma.
\begin{lemma}\label{lemma:1_fnn_approx_general}
    Define error matrix $\mE := \tgW_1 - \prod_{l=1}^L \mW_l$, and denote its rank by $\re = \rk(\mE)$.
    Consider a sequence of LoRA-ranks $(R_l)_{l=1}^L$.
    Assume that the weight matrices $\mW_1, \ldots, \mW_L \in \sR^{D \times D}$ and $\prod_{l=1}^L \mW_l + \lr_{r} (\mE)$ for all $r \leq \sum_{l=1}^{L-1} R_l$ are non-singular. 
    Then, there LoRA adapters $(\dW_l)_{l=1}^L$ satisfying the rank constraints $\rk(\dW_l)\leq R_l$ for all $l\in [L]$ and bias vectors $\udb_1, \ldots, \udb_L \in \sR^D$ such that for any input $\vx \in\gX$, 
    \begin{equation}
        f(\vx) - \tgf(\vx)  = \relu\left( \left(\lr_{(\sum_{l=1}^L R_l ) \wedge \re} (\tgW_1 - \prod_{l=1}^L \mW_l) - (\tgW_1 - \prod_{l=1}^L \mW_l) \right)\vx \right).
    \end{equation}
    Therefore, when $\sum_{l=1}^L R_l \geq R_{\mE}$, the adapted model exactly approximates the target model, \ie, $f(\vx) = \tgf(\vx)$ for all $ \vx \in \gX$.

    Furthermore, for a random input $\rvx$ drawn from a distribution supported on $\gX$, and with $\Sigma = \E \rvx \rvx^\top$, the expected squared error is bounded by:
    \begin{equation}
        \E \tnorm{ f(\rvx) - \tgf(\rvx)}^2 \leq \fnorm{\Sigma} \sigma_{(\sum_{l=1}^L R_l) \wedge \re +1}^2(\tgW_1 - \prod_{l=1}^L \mW_l).
    \end{equation}
\end{lemma}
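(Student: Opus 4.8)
The plan is to rerun the three-step argument that proves Lemma~\ref{lemma:full_1_fnn_approx}, changing only the matrix-approximation result invoked in the middle: where that proof uses the uniform-rank Lemma~\ref{lemma:full_matrix_approx}, here I would use its variable-rank counterpart Lemma~\ref{lemma:matrix_approx_different_rank}. \textbf{Step 1 (Linearization).} Since $\gX$ is bounded and the frozen weights are fixed, choose $\udb_1,\ldots,\udb_{L-1}$ with entries large enough that every ReLU in the first $L-1$ layers of the adapted network is active on all of $\gX$; this step never touches the rank constraints, so it transfers verbatim and collapses $f$ to a single affine map followed by one ReLU, with effective weight $\prod_{l=1}^L(\mW_l+\dW_l)$ and effective bias $\sum_{l=1}^{L-1}\big(\prod_{i=l+1}^L(\mW_i+\dW_i)\big)\udb_l + \udb_L$.

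\textbf{Step 2 (Parameter alignment).} Apply Lemma~\ref{lemma:matrix_approx_different_rank} to $\mE = \tgW_1 - \prod_{l=1}^L\mW_l$: its hypotheses --- non-singularity of $(\mW_l)_{l=1}^L$ and of $\prod_{l=1}^L\mW_l + \lr_r(\mE)$ for all $r \leq \sum_{l=1}^{L-1}R_l$ --- are exactly those assumed here, so it produces adapters $(\dW_l)_{l=1}^L$ with $\rk(\dW_l)\leq R_l$ satisfying $\prod_{l=1}^L(\mW_l+\dW_l) = \prod_{l=1}^L\mW_l + \lr_{(\sum_{l=1}^L R_l)\wedge\re}(\mE)$. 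Choosing $\udb_L := \tgb_1 - \sum_{l=1}^{L-1}\big(\prod_{i=l+1}^L(\mW_i+\dW_i)\big)\udb_l$ forces the effective bias to equal $\tgb_1$, matching $\tgf(\vx)=\relu(\tgW_1\vx+\tgb_1)$; proceeding as in the alignment step of Lemma~\ref{lemma:full_1_fnn_approx} then yields
\[
f(\vx) - \tgf(\vx) = \relu\!\left( \Big( \lr_{(\sum_{l=1}^L R_l)\wedge\re}(\mE) - \mE \Big)\vx \right),
\]
which is the claimed residual identity, and which is identically $0$ on $\gX$ once $\sum_{l=1}^L R_l \geq \re$, \ie\ $f \equiv \tgf$.

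\textbf{Step 3 (Error bound), and the main obstacle.} For the expected squared error, proceed verbatim as in the proof of Lemma~\ref{lemma:full_1_fnn_approx}: $\relu$ is $1$-Lipschitz and fixes the origin, $\tnorm{\mA\rvx}\leq\tnorm{\mA}\tnorm{\rvx}$ by \eqref{prp:2norm_fnorm}, $\E\tnorm{\rvx}^2$ is controlled by $\fnorm{\Sigma}$, and $\tnorm{\lr_r(\mE)-\mE} = \sigma_{r+1}(\mE)$, which together give $\E\tnorm{f(\rvx)-\tgf(\rvx)}^2 \leq \fnorm{\Sigma}\,\sigma_{(\sum_{l=1}^L R_l)\wedge\re+1}^2(\mE)$. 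I expect no real obstacle: all the substantive work --- decomposing the product perturbation into $L$ pieces of ranks $R_1,\ldots,R_L$ and choosing them so that every partial product $\prod_{i=1}^l(\mW_i+\dW_i)$ remains invertible --- already happens inside Lemma~\ref{lemma:matrix_approx_different_rank}, which this proof simply invokes. The only thing requiring care is bookkeeping: checking that the per-layer budgets $R_l$ (rather than a common $R$) are exactly what the cited lemma consumes, that the non-singularity range $r\leq\sum_{l=1}^{L-1}R_l$ matches, and that the boundary case $\sum_{l=1}^L R_l \geq \re$ is covered.
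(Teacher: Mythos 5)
Your proposal is correct and follows essentially the same route as the paper's own proof: linearize the first $L-1$ layers via large biases, invoke Lemma~\ref{lemma:matrix_approx_different_rank} in place of Lemma~\ref{lemma:full_matrix_approx} to align $\prod_{l=1}^L(\mW_l+\dW_l)$ with $\prod_{l=1}^L\mW_l + \lr_{(\sum_l R_l)\wedge\re}(\mE)$, match the bias to $\tgb_1$, and run the same Lipschitz/spectral-norm error derivation. The hypothesis bookkeeping you flag (the range $r \leq \sum_{l=1}^{L-1}R_l$ and the per-layer budgets) indeed matches the cited lemma exactly, so there is nothing further to add.
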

\begin{proof}[Proof of Lemma~\ref{lemma:1_fnn_approx_general}]
    This proof closely adheres to the steps detailed in the proof of Lemma~\ref{lemma:full_1_fnn_approx}.

    The primary change implemented here is that, when we draw the analogy to \eqref{eq:lemma_matrix_approx}, we apply Lemma~\ref{lemma:matrix_approx_different_rank} instead of Lemma~\ref{lemma:full_matrix_approx}.
    This results in
    \begin{align}
        \prod_{l=1}^L(\mW_l + \dW_l) = \prod_{l=1}^L \mW_l + \lr_{(\sum_{l=1}^L R_l) \wedge \re} (\tgW - \prod_{l=1}^L \mW_l).
    \end{align}
    
    Utilizing the steps from the proof of Lemma~\ref{lemma:full_1_fnn_approx} and integrating the modification specified above, we can establish the desired result.
\end{proof}

\subsection{Approximating Multi-Layer ReLU FNN via LoRA with Uniform Model Parition}\label{app:uniform_fnn}
In this part, we restate all the results considering uniform model partition from Sec.~\ref{sec:multi_fnn_approx}, along with their corresponding proofs, presented in the same order.

\setcounter{temp_theorem_counter}{\value{assumption}}  
\setcounter{assumption}{\numexpr\getrefnumber{assump:fnn}-1\relax}  
\begin{assumption}[Non-Singularity]
    For a fixed LoRA-rank $R \in [D]$,
    the weight matrices of the frozen model $(\mW_l)_{l=1}^L$ and matrices
    $\pt{\prod_{l\in \ueP_i} \mW_l} + \lr_{r} (\tgW_i - \prod_{l\in \ueP_i} \mW_l)$ are non-singular for all $r \leq R (\tdL - 1) $ and $i \in [\tgL]$. 
\end{assumption}
\setcounter{assumption}{\value{temp_theorem_counter}} 

\setcounter{temp_theorem_counter}{\value{lemma}} 
\setcounter{lemma}{\numexpr\getrefnumber{lemma:full_rank_assump}-1\relax}  
\begin{lemma}
    Let $(\tgW_l)_{l=1}^{\tgL}, (\mW_l)_{l=1}^L \in \sR^{D\times D}$ be matrices whose elements are drawn independently from arbitrary continuous distributions.
    Then, with probability 1, Assumption~\ref{assump:fnn} holds $\forall R \in [D]$.
\end{lemma}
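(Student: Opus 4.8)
The plan is to express Assumption~\ref{assump:fnn} as the intersection of finitely many non-singularity conditions and show each one holds almost surely. First I would note that $\lr_r(\mE) = \mE$ whenever $r \geq \rk(\mE)$, and every $D\times D$ matrix has rank at most $D$; hence, as $R$ ranges over $[D]$, the only matrices that actually need to be controlled are $\pt{\prod_{l\in\ueP_i}\mW_l} + \lr_r\!\pt{\tgW_i - \prod_{l\in\ueP_i}\mW_l}$ for $i\in[\tgL]$ and $r\in\set{0,1,\ldots,D}$, together with the $L$ matrices $\mW_1,\ldots,\mW_L$ themselves. By Lemma~\ref{lemma:random_matrix_nonsingular} each $\mW_l$ is non-singular with probability $1$, so on a probability-one event every partial product $A_i := \prod_{l\in\ueP_i}\mW_l$ is invertible as well; I restrict attention to this event for the remainder of the argument.

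Next, fix $i$ and $r$ and condition on $(\mW_l)_{l=1}^L$. Because $\tgW_i$ is independent of the $\mW_l$'s, its law is unchanged, so $\mE := \tgW_i - A_i$ has a density on $\sR^{D\times D}$ while $A_i$ is now a fixed invertible matrix. It then suffices to show that $B := \set{\mE\in\sR^{D\times D} : A_i + \lr_r(\mE)\text{ is singular}}$ is Lebesgue-null for every fixed invertible $A_i$; integrating the (conditionally zero) probabilities back over $(\mW_l)_{l=1}^L$ via Fubini, and taking a union over the finitely many pairs $(i,r)$ and the $L$ events $\set{\mW_l\text{ singular}}$, will then give the claim for all $R\in[D]$ simultaneously. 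The obstruction is that $\lr_r$ is built from the SVD and is \emph{not} polynomial in the entries of $\mE$, so Lemma~\ref{lemma:poly_lebesgue_zero} cannot be applied to $\det(A_i + \lr_r(\mE))$ directly.

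To get around this I would work on the open set $\Omega_r := \set{\mE : \sigma_r(\mE) > \sigma_{r+1}(\mE)}$. Its complement forces a repeated eigenvalue of $\mE\mE^\top$, i.e.\ the vanishing of the discriminant of the characteristic polynomial of $\mE\mE^\top$; this discriminant is a polynomial in the entries of $\mE$ that is not identically zero (it is nonzero at, say, a diagonal $\mE$ with distinct entries), so Lemma~\ref{lemma:poly_lebesgue_zero} shows $\sR^{D\times D}\setminus\Omega_r$ is null and it is enough to prove $B\cap\Omega_r$ is null. On $\Omega_r$ the eigenvalue gap of $\mE\mE^\top$ makes the spectral projection $\Pi_r(\mE)$ onto the top-$r$ left-singular subspace depend real-analytically on $\mE$ (standard analytic perturbation theory), and one checks $\lr_r(\mE) = \Pi_r(\mE)\,\mE$; hence $g(\mE) := \det\!\pt{A_i + \Pi_r(\mE)\,\mE}$ is real-analytic on $\Omega_r$. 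Finally I would argue $g$ does not vanish identically on any connected component of $\Omega_r$: both $\Omega_r$ and the map $\lr_r$ are invariant under positive scaling $\mE\mapsto t\mE$, so every component contains matrices of arbitrarily small norm, and $g(t\mE) = \det\!\pt{A_i + t\,\lr_r(\mE)} \to \det(A_i) \neq 0$ as $t\to 0^+$. Since a non-vanishing real-analytic function on a connected open set has a null zero set, $g^{-1}(0)\cap\Omega_r$ is Lebesgue-null, completing the proof that $B$ is null.

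I expect the delicate part to be exactly this treatment of the non-polynomial truncated SVD — establishing that $\mE\mapsto\lr_r(\mE) = \Pi_r(\mE)\,\mE$ is real-analytic on $\Omega_r$ and that the complement $\set{\sigma_r = \sigma_{r+1}}$ is null — since everything else is a routine assembly of Fubini, Lemma~\ref{lemma:poly_lebesgue_zero}, Lemma~\ref{lemma:random_matrix_nonsingular}, and a union bound over finitely many events.
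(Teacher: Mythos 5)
Your argument is correct, but it takes a genuinely different route from the paper's. The paper also reduces the claim to finitely many non-singularity events, but it conditions on the error matrix $\mE = \tgW_i - \prod_{l\in\ueP_i}\mW_l$ rather than on the frozen weights: once $\mE$ is fixed, $\lr_r(\mE)$ is just a constant matrix, the quantity of interest becomes (conditionally) a continuous random matrix plus a constant, and Lemma~\ref{lemma:random_matrix_nonsingular} finishes the job. This sidesteps entirely the non-polynomiality of $\lr_r$ that occupies most of your proof; the price is that one must verify that the conditional law of $\prod_{l\in\ueP_i}\rmW_l$ given $\mE$ is still absolutely continuous, which implicitly rests on the product of the frozen weights having a Lebesgue density. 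Your conditioning on $(\mW_l)_{l=1}^L$ is in some ways the more natural one — only $\tgW_i$ ever needs a density — but it forces you to confront the map $\mE\mapsto\lr_r(\mE)$ head on, and your treatment of it is sound: restricting to the co-null gap set $\set{\sigma_r(\mE)>\sigma_{r+1}(\mE)}$ via the discriminant of $\mE\mE^\top$ and Lemma~\ref{lemma:poly_lebesgue_zero}, using real-analyticity of the Riesz projection on that set, and the scaling argument $g(t\mE)\to\det\pt{\prod_{l\in\ueP_i}\mW_l}\neq 0$ to rule out identical vanishing on any connected component, after which the fact that a non-trivial real-analytic function on a connected open set has a Lebesgue-null zero set closes the argument. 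In short, both proofs work: the paper's is much shorter because its choice of conditioning dissolves the hard part, while yours correctly identifies and resolves the one genuinely delicate point, namely that Lemma~\ref{lemma:poly_lebesgue_zero} cannot be applied directly to $\det\pt{\prod_{l\in\ueP_i}\mW_l + \lr_r(\mE)}$ as a function of $\mE$.
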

\setcounter{lemma}{\value{temp_theorem_counter}} 
\begin{proof}[Proof of Lemma~\ref{lemma:full_rank_assump}]
    We first use Lemma~\ref{lemma:random_matrix_nonsingular} to establish that $\tgW_1, \ldots, \tgW_{\tgL}, \mW_1, \ldots, \mW_L$ are non-singular with probability 1. 
    The goal of the remaining proof is to demonstrate that $\pt{\prod_{l \in \ueP_i} \mW_l } + \lr_r (\tgW_i - \prod_{l\in \ueP_i} \mW_l )$ is full-rank with probability 1. 
    In this proof, we use $p_{\cdot}$ to denote the probability density function, where the subscript indicates the associated random variable. 

    Fix an arbitrary $i \in [\tgL]$ and $r \in [R]$. 
    Then probability of the $\pt{\prod_{l\in\ueP_i} \mW_l } + \lr_r \pt{\tgW_i - \prod_{l\in\ueP_i} \mW_l }$ being full-rank can be computed as 
    \begin{align}
        & \sP\set{ \det \pt{\pt{\prod_{l\in\ueP_i} \mW_l } + \lr_r \pt{\tgW_i - \prod_{l\in\ueP_i} \mW_l }} \neq 0 } \\
        & = \int_{\gE} \sP\set{ \det \pt{\pt{\prod_{l\in\ueP_i} \mW_l } + \lr_r(\mE )} \neq 0 \middle | \tgW_i - \prod_{l\in\ueP_i} \mW_l  = \mE} p_{\tgW_i - \prod_{l\in\ueP_i} \mW_l} (\mE) \rd \mE.
    \end{align}
    If the conditional random matrix $\pt{\prod_{l\in\ueP_i} \rmW_l} + \lr_r(\mE) \mid  \tgW_i - \prod_{l\in\ueP_i} \mW_l  = \mE$ has a continuous distribution with support of non-zero Lebesgue measure on $\sR^{D\times D}$, then 
    \begin{align}
        \sP\set{ \det \pt{\pt{\prod_{l\in\ueP_i} \mW_l } + \lr_r(\mE )} \neq 0 \middle | \tgW_i - \prod_{l\in\ueP_i} \mW_l  = \mE} = 1
    \end{align}
    ensuring $\pt{\prod_{l\in\ueP_i} \mW_l } + \lr_r \pt{\tgW_i - \prod_{l\in\ueP_i} \mW_l }$ is full-rank with probability 1.

    Consequently, the remaining part of the proof aims to show that the conditional random matrix $\pt{\prod_{l\in\ueP_i} \rmW_l} + \lr_r(\mE) \mid  \tgW_i - \prod_{l\in\ueP_i} \rmW_l  = \mE$ follows arbitrary continuous distribution with support having non-zero Lebesgue measure on $\sR^{D\times D}$.
    Denote $\rmW  = \prod_{l\in\ueP_i} \mW_l$.
    Now, consider the conditional distribution of $ \prod_{l\in\ueP_i} \rmW_l \mid  \tgW_i - \prod_{l\in\ueP_i} \rmW_l  = \mE$, which can be written as 
    \begin{align}
        p_{\rmW \mid \tgrW_i - \rmW = \mE} (\mW) = p_{\tgrW_i} (\mE + \mW).
    \end{align}
    Since $p_{\tgrW_i}$ is continuous with support of non-zero Lebesgue measure on $\sR^{D\times D}$, the same holds for $ \prod_{l\in\ueP_i} \rmW_l \mid  \tgW_i - \prod_{l\in\ueP_i} \rmW_l  = \mE$.
    Furthermore, adding a constant matrix $ \lr_{r} (\mE) $ to this conditional distribution preserves the desired properties, thus completing the proof.  
\end{proof}

\setcounter{temp_theorem_counter}{\value{theorem}} 
\setcounter{theorem}{\numexpr\getrefnumber{thm:fnn_exact}-1\relax}  
\begin{theorem}
    Under Assumption~\ref{assump:fnn}, there exists rank-$R$ or lower matrices $(\dW_l)_{l=1}^{L}$ with $\dW_l \in \sR^{D\times D}$ and bias vectors $(\udb_l)_{l=1}^L$ with $\udb_l \in \sR^D$ when the rank of the low-rank adapter $R \geq \ceil{\max_{i\in[\tgL]}\rk (\tgW_i - \prod_{l\in\ueP_i} \mW_l)/\tdL}$,  the low-rank adapted model $f$ can exactly approximate the target model $\tgf$, \ie, $f(\vx) = \tgf(\vx)$ for all input $\vx \in \gX$. 
\end{theorem}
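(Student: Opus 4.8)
The plan is to reduce the multi-layer statement to repeated use of the one-layer result, Lemma~\ref{lemma:1_fnn_approx}, through the uniform model partition $\uP = \set{\ueP_1,\ldots,\ueP_{\tgL}}$. I would write the target network as a composition of one-layer ReLU maps $\tgf = g_{\tgL}\circ\cdots\circ g_1$ with $g_i(\vu) := \relu(\tgW_i\vu + \tgb_i)$, and split the $L$ layers of the frozen network into the $\tgL$ consecutive blocks indexed by $\ueP_1,\ldots,\ueP_{\tgL}$, so that the adapted network factors as $f = h_{\tgL}\circ\cdots\circ h_1$, where $h_i$ is the adapted sub-network formed by the layers $l\in\ueP_i$. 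It then suffices to choose adapters $(\dW_l)_{l\in\ueP_i}$ of rank at most $R$ together with biases so that each $h_i$ realizes $g_i$ exactly on the set of inputs it actually receives.

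First I would propagate boundedness of the relevant input domains. Set $\gX_0 := \gX$ and $\gX_i := g_i(\gX_{i-1})$ for $i\in[\tgL]$; since each $g_i$ is affine followed by $\relu$, hence Lipschitz, and $\gX$ is bounded, every $\gX_i$ is bounded. For each $i$ I would then invoke Lemma~\ref{lemma:1_fnn_approx} with frozen weights $(\mW_l)_{l\in\ueP_i}$, target weight/bias $(\tgW_i,\tgb_i)$, bounded support $\gX_{i-1}$, and error matrix $\mE_i := \tgW_i - \prod_{l\in\ueP_i}\mW_l$. The hypotheses required by that lemma for block $i$ --- non-singularity of $(\mW_l)_{l\in\ueP_i}$ and of $\prod_{l\in\ueP_i}\mW_l + \lr_r(\mE_i)$ over the needed range of $r$ --- are exactly what Assumption~\ref{assump:fnn} supplies. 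Because $|\ueP_i|\ge\tdL$ and $R \ge \ceil{\max_{j\in[\tgL]}\rk(\mE_j)/\tdL} \ge \ceil{\rk(\mE_i)/|\ueP_i|}$, the exactness clause of Lemma~\ref{lemma:1_fnn_approx} applies and yields adapters $(\dW_l)_{l\in\ueP_i}$ of rank at most $R$ and biases $(\udb_l)_{l\in\ueP_i}$ with $h_i(\vu) = g_i(\vu)$ for all $\vu\in\gX_{i-1}$.

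Finally I would assemble the blocks by induction on $i$. On $\gX$ we have $h_1 = g_1$, so the input handed to $h_2$ equals $g_1(\vx)\in\gX_1$, on which $h_2 = g_2$, and inductively the input to $h_i$ always lands in $\gX_{i-1}$, where $h_i = g_i$. Composing gives $h_{\tgL}\circ\cdots\circ h_1 = g_{\tgL}\circ\cdots\circ g_1$ on $\gX$, i.e.\ $f(\vx) = \tgf(\vx)$ for all $\vx\in\gX$; collecting the block-wise adapters and biases produces the global construction claimed in the theorem.

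The main obstacle is the interface between consecutive blocks, where boundedness and exactness must be used in tandem. Inside $h_i$, the linearization step (taking large biases on the first $|\ueP_i|-1$ layers so that every ReLU there fires) is legitimate only because $\gX_{i-1}$ is bounded and affine/ReLU images of bounded sets stay bounded; this is why boundedness must be carried forward explicitly, not merely assumed at the input. More delicately, the overall composition is \emph{exact} rather than only approximate precisely because $h_{i-1}$ matches $g_{i-1}$ exactly on $\gX_{i-2}$, so the input to $h_i$ lies exactly in the set $\gX_{i-1}$ on which $h_i$ was constructed --- a residual error at an intermediate block would propagate and be amplified through the remaining depth, so it is the exactness clause of Lemma~\ref{lemma:1_fnn_approx}, not its error bound, that is essential here. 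A minor bookkeeping point is that the last block $\ueP_{\tgL}$ may contain more than $\tdL$ layers; surplus layers only enlarge the attainable effective rank (their adapters may be taken to be $\mO$), so the threshold stated with $\tdL$ in the denominator remains sufficient and Assumption~\ref{assump:fnn} still furnishes the needed non-singularity.
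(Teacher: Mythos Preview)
Your proposal is correct and follows essentially the same approach as the paper: decompose both $\tgf$ and the adapted model $f$ into $\tgL$ sub-models along the uniform partition $\uP$, then apply Lemma~\ref{lemma:1_fnn_approx} to each block so that $h_i=g_i$ exactly, and compose. If anything, you are more careful than the paper's own proof about two points it leaves implicit --- explicitly propagating boundedness of the intermediate domains $\gX_{i-1}$ so that the linearization step in Lemma~\ref{lemma:1_fnn_approx} is legitimate at every block, and noting how to handle the surplus layers in $\ueP_{\tgL}$ --- but the strategy and the key lemma are the same.
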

\setcounter{theorem}{\value{temp_theorem_counter}} 
\begin{proof}[Proof of Theorem~\ref{thm:fnn_exact}]
    The key to this proof lies in a simple idea: for each layer $i \in [\tgL]$ in the target model, we can update $\tdL$ layers (\ie, $(i-1)\tdL +1$-th layer to $i\tdL$-th layer) in the frozen model to approximate it as guaranteed by Lemma~\ref{lemma:full_1_fnn_approx}.
    Hence, all layers of the target model can be approximated by the adapted model. 
    
    \paragraph{Model Decomposition.}
    We partition the adapted model $f$ into $\tgL$ sub-models, each defined as   
    \begin{equation}
        f_i(\cdot) = 
        \fnn_{\tgL, D} (\cdot; (\mW_l + \dW_l)_{l \in \ueP_i}, (\udb_l)_{l\in\ueP_i}),\quad i \in [\tgL].
    \end{equation}
    In a similar manner, we break down $\tgf$ into $\tgL$ sub-models, each is a one-layer FNN:  
    \begin{equation}
        \tgf_i(\cdot) = \fnn_{1, D}(\cdot; \tgW_i, \tgb_i),~i\in [\tgL].
    \end{equation}
    We can then express $f(\vx)$ and $\tgf(\vx)$ as compositions of their respective sub-models:
    \begin{align}
        f(\cdot)  = f_{\tgL} \circ \cdots f_{1} (\cdot), \quad \tgf(\cdot)  = \tgf_{\tgL} \circ \cdots \tgf_1 (\cdot).
    \end{align}
    To analyze the error $\E \tnorm{ f(\rvx) - 
    \tgf(\rvx) } = \E \tnorm{ f(\rvx) - \tgf(\rvx)}$, we consider the error caused by each submodel.    
    Let $\tr_i = \rk (\tgW_i - \prod_{l\in\ueP_i} \mW_l)$ denote the rank of the discrepancy between the target weight matrix and the frozen weight matrices, where $i \in [\tgL]$. 
    By Lemma~\ref{lemma:full_1_fnn_approx}, we can select $\dW_1, \ldots, \dW_L, \udb_1, \ldots, \udb_L$ such that 
    \begin{gather}
        f_i(\vz) - 
            \tgf_i(\vz)  = \relu\left( \left(\lr_{RL \wedge \tr_i} (\tgW_i - \prod_{l\in\ueP_i} \mW_l) - (\tgW_i - \prod_{l\in\ueP_i} \mW_l) \right)\vz \right), \label{eq:thm_submodel_diff}
            \\ 
        \E \tnorm{ f_i(\rvz) - \tgf_i(\rvz) }^2 \leq \fnorm{\E \rvz \rvz^\top }  \sigma_{RL \wedge \tr_i +1}^2(\tgW_i - \prod_{l=1}^L \mW_l). \label{eq:thm_submodel_error}
    \end{gather}
    Given these selected parameters, $f_i$ is functionally equivalent to a one-layer FNN:
    \begin{equation}
        f_i(\vz)  = \relu\left( \left(\lr_{RL \wedge \tr_i} (\tgW_i - \prod_{l\in\ueP_i} \mW_l) + \prod_{l\in\ueP_i} \mW_l \right)\vz \right).
    \end{equation}
    Clearly, when $R \geq \max_i \ceil{\frac{\tr_i}{\tdL}}$, it follows that $f_i= g_i$ for all $i \in [\tgL]$, which implies $f = g$.
\end{proof}

\setcounter{temp_theorem_counter}{\value{theorem}} 
\setcounter{theorem}{\numexpr\getrefnumber{corollary:fnn_exact}-1\relax}  
\begin{corollary}
    Assume that the elements of matrices $(\tgW_l)_{l=1}^{\tgL}, (\mW_l)_{l=1}^L$ are independently drawn from arbitrary continuous distributions.
    When $R \geq D/\tdL$, there exists rank-$R$ or lower matrices $\dW_1, \ldots, \dW_L \in \sR^{D\times D}$ and bias vectors $\udb_1, \ldots, \udb_L \in \sR^D$ such that low-rank adapted model $f$ can functionally cover the target model $\tgf$ on $\gX$, \ie, $f(\vx) = \tgf(\vx)$ for all input $\vx \in \gX$, with probability 1. 
\end{corollary}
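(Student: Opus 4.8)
The plan is to derive the corollary as a direct combination of Lemma~\ref{lemma:full_rank_assump} and Theorem~\ref{thm:fnn_exact}, together with one elementary probabilistic observation: under random weights each per-layer discrepancy matrix $\tgW_i - \prod_{l\in\ueP_i}\mW_l$ is nonsingular almost surely, so its rank equals $D$, and hence the data-dependent rank threshold appearing in Theorem~\ref{thm:fnn_exact} collapses to $\ceil{D/\tdL}$. Once that is in place, the statement is immediate because $R \geq D/\tdL$ forces $R \geq \ceil{D/\tdL}$ by integrality of $R$.

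Concretely, I would proceed as follows. First, invoke Lemma~\ref{lemma:full_rank_assump}: since the entries of $(\tgW_l)_{l=1}^{\tgL}$ and $(\mW_l)_{l=1}^L$ are independent draws from continuous distributions, with probability $1$ Assumption~\ref{assump:fnn} holds for all $R \in [D]$ simultaneously. Next, fix $i \in [\tgL]$ and condition on the frozen weights $(\mW_l)_{l\in\ueP_i}$, so that $\prod_{l\in\ueP_i}\mW_l$ becomes a fixed matrix; since $\tgW_i$ is independent of the frozen weights and continuously distributed with support of positive Lebesgue measure, the shifted matrix $\tgW_i - \prod_{l\in\ueP_i}\mW_l$ is again continuously distributed with support of positive Lebesgue measure, hence nonsingular with probability $1$ by Lemma~\ref{lemma:random_matrix_nonsingular}. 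Removing the conditioning and taking a union bound over the finitely many indices $i \in [\tgL]$ yields that, with probability $1$, $\rk(\tgW_i - \prod_{l\in\ueP_i}\mW_l) = D$ for every $i$. On the intersection of these two probability-$1$ events, the hypothesis $R \geq D/\tdL$ and the integrality of $R$ give $R \geq \ceil{D/\tdL} = \ceil{\max_{i\in[\tgL]}\rk(\tgW_i - \prod_{l\in\ueP_i}\mW_l)/\tdL}$, which is exactly the hypothesis required by Theorem~\ref{thm:fnn_exact}; that theorem then furnishes rank-$R$-or-lower matrices $\dW_1,\ldots,\dW_L$ and biases $\udb_1,\ldots,\udb_L$ with $f(\vx) = \tgf(\vx)$ for all $\vx \in \gX$, completing the argument.

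The only step that deserves care — and it is a mild one — is establishing that $\tgW_i - \prod_{l\in\ueP_i}\mW_l$ inherits a continuous distribution with full-measure support, so that Lemma~\ref{lemma:random_matrix_nonsingular} is genuinely applicable to the error matrix rather than merely to a single unshifted random matrix; the independence of $\tgW_i$ from the frozen weights is precisely what makes the conditioning argument go through. Everything else is a routine citation of the two results already proved, so I expect no substantive obstacle here.
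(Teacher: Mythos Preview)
Your proposal is correct and mirrors the paper's own proof: both combine Lemma~\ref{lemma:full_rank_assump} with Theorem~\ref{thm:fnn_exact}, and both reduce the rank threshold to $\ceil{D/\tdL}$ by conditioning on the frozen weights, using the independence of $\tgW_i$, and invoking Lemma~\ref{lemma:random_matrix_nonsingular} to show that each error matrix $\tgW_i - \prod_{l\in\ueP_i}\mW_l$ is nonsingular almost surely. The only addition you make explicit that the paper leaves implicit is the integrality step $R \geq D/\tdL \Rightarrow R \geq \ceil{D/\tdL}$, which is harmless and indeed clarifying.
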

\setcounter{theorem}{\value{temp_theorem_counter}} 
\begin{proof}[Proof of Corollary~\ref{corollary:fnn_exact}]
    To prove the statement, we start by noting that combining Lemma~\ref{lemma:full_rank_assump} and Theorem~\ref{thm:fnn_exact} directly gives us $f(\vx) = \tgf(\vx)$ on $\gX$ when $R \geq \max_{i\in[\tgL]}\ceil{\rk (\tgW_i - \prod_{l\in\ueP_i} \mW_l)/\tdL}$.
    Therefore, the only thing left is to show that $\rk (\tgrW_i - \prod_{l\in\ueP_i} \rmW_l) = D$ for $i \in [\tgL]$ with probability 1.
    In this proof, we use $p_{\cdot}$ to denote the probability density function, where the subscript indicates the associated random variable. 
    
    To establish this, consider the following probability expression: 
    \begin{align}
        & \sP\set{ \det\pt{\tgrW_i - \prod_{l\in\ueP_i} \rmW_l} \neq 0 } \\
        & = \int \sP \set{ \det\pt{\tgrW_i - \mW} \neq 0  \middle|  \prod_{l\in\ueP_i} \rmW_l = \mW}p_{ \prod_{l\in\ueP_i} \rmW_l} (\mW) \rd \mW.
    \end{align}
    Since $\tgrW$ is independent of $\prod_{l\in\ueP_i} \rmW_l $, we have 
    \begin{equation}
        \sP \set{ \det\pt{\tgrW_i - \mW} \neq 0  \middle|  \prod_{l\in\ueP_i} \rmW_l = \mW} = \sP \set{ \det\pt{\tgrW_i - \mW} \neq 0} \overset{\text{Lemma}~\ref{lemma:random_matrix_nonsingular}}{=\joinrel=\joinrel=\joinrel=\joinrel=}  1.
    \end{equation}
    Therefore, we conclude that $\sP\set{ \det\pt{\tgrW_i - \prod_{l\in\ueP_i} \rmW_l} \neq 0 } = 1$, which completes the proof.
\end{proof}

\setcounter{temp_theorem_counter}{\value{theorem}} 
\setcounter{theorem}{\numexpr\getrefnumber{thm:fnn_approx}-1\relax}  
\begin{theorem}
    Define the approximation error of $i$-th layer as $E_i  =  \sigma_{R\tdL+1} (\tgW_i - \prod_{l\in \ueP_i} \mW_l) $, and the magnitude of the parameters and the input as  $\beta: = \max_{i\in [\tgL]}\left(\sqrt{\fnorm{\Sigma}} \prod_{j=1}^{i} \fnorm{\tgW_{j}}  + \sum_{j=1}^{i} \prod_{k=j+1}^{i-1} \fnorm{\tgW_k} \tnorm{\tgb_j}\right) \bigvee \sqrt{\fnorm{\Sigma}}$.
    
    Under Assumption~\ref{assump:fnn}, there exists rank-$R$ or lower matrices $(\dW_l)_{l=1}^{L}$ with $\dW_l \in \sR^{D\times D}$ and bias vectors $(\udb_l)_{l=1}^L$ with $\udb_l \in \sR^D$ such that for input $\rvx \in \gX$ with $\E \rvx \rvx^{\top} = \Sigma$, 
    \begin{equation}\label{inq:bound_fnn}
         \E \tnorm{ f(\rvx) - \tgf(\rvx)}  \leq \beta \sum_{i=1}^\tgL \max_{k \in [\tgL]} \left( \fnorm{\tgW_{k}} + E_k  \right) ^{\tgL-i} E_i .
    \end{equation}
\end{theorem}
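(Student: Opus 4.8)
The plan is to build on the submodel decomposition already used in the proof of Theorem~\ref{thm:fnn_exact} and then propagate the per-block errors through the composition; the one delicate point will be to arrange each error split so that only \emph{target-model} intermediate norms — the quantities that $\beta$ is built to bound — appear on the right-hand side.

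\emph{Step 1 (collapse each block to one layer).} As in Theorem~\ref{thm:fnn_exact}, write $\tgf=\tgf_\tgL\circ\cdots\circ\tgf_1$ with $\tgf_i(\cdot)=\fnn_{1,D}(\cdot;\tgW_i,\tgb_i)$, and split the adapted network along the uniform partition as $f=f_\tgL\circ\cdots\circ f_1$, where $f_i$ consists of the $\tdL$ layers $l\in\ueP_i$. Because $\gX$ is bounded, all intermediate activations of $f$ stay in compact sets, so, going through $i=1,\dots,\tgL$ in order, I would apply Lemma~\ref{lemma:full_1_fnn_approx} to each pair $(f_i,\tgf_i)$ — the block has $\tdL$ frozen layers, so the attainable rank is $R\tdL$, and Assumption~\ref{assump:fnn} is exactly the non-singularity it needs. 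This yields adapters of rank $\le R$ and biases $\udb_l$ (the first $\tdL-1$ per block taken large enough to linearize it over the compact set of inputs it can actually receive, the last aligned to $\tgb_i$) so that $f_i$ is functionally one layer, $f_i(\vz)=\relu(\tgeW_i\vz+\tgb_i)$ with $\tgeW_i:=\prod_{l\in\ueP_i}\mW_l+\lr_{R\tdL\wedge\tr_i}(\tgW_i-\prod_{l\in\ueP_i}\mW_l)$ and $\tr_i:=\rk(\tgW_i-\prod_{l\in\ueP_i}\mW_l)$. By optimality of the best rank-$r$ approximation, $\tnorm{\tgeW_i-\tgW_i}=\sigma_{R\tdL+1}(\tgW_i-\prod_{l\in\ueP_i}\mW_l)=E_i$, hence $\tnorm{\tgeW_i}\le\fnorm{\tgW_i}+E_i$.

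\emph{Step 2 (one-step recursion).} Let $\tgrvz_i$ be the output of $\tgf_i\circ\cdots\circ\tgf_1$ on $\rvx$ (so $\tgrvz_0=\rvx$), let $\udrvz_i$ be the output of $f_i\circ\cdots\circ f_1$ on $\rvx$, and set $\delta_i:=\tnorm{\udrvz_i-\tgrvz_i}$, $\delta_0=0$. Since $\relu$ is $1$-Lipschitz and $\tgb_i$ cancels, $\delta_i\le\tnorm{\tgeW_i\udrvz_{i-1}-\tgW_i\tgrvz_{i-1}}$; the crucial move is to add and subtract $\tgeW_i\tgrvz_{i-1}$ (rather than $\tgW_i\udrvz_{i-1}$), which gives
\begin{equation}
\delta_i\le\tnorm{\tgeW_i}\,\delta_{i-1}+\tnorm{\tgeW_i-\tgW_i}\,\tnorm{\tgrvz_{i-1}}\le(\fnorm{\tgW_i}+E_i)\,\delta_{i-1}+E_i\,\tnorm{\tgrvz_{i-1}}.
\end{equation}
Now the adapted-model iterate enters only through $\delta_{i-1}$, while the freshly injected error rides on the target iterate $\tgrvz_{i-1}$.

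\emph{Step 3 (unroll and take expectations).} Unrolling from $\delta_0=0$ gives $\delta_\tgL\le\sum_{i=1}^\tgL(\prod_{k=i+1}^\tgL(\fnorm{\tgW_k}+E_k))\,E_i\,\tnorm{\tgrvz_{i-1}}\le\sum_{i=1}^\tgL(\max_{k\in[\tgL]}(\fnorm{\tgW_k}+E_k))^{\tgL-i}E_i\,\tnorm{\tgrvz_{i-1}}$. It remains to bound $\E\tnorm{\tgrvz_{i-1}}$: from $\tgrvz_j=\relu(\tgW_j\tgrvz_{j-1}+\tgb_j)$ and $\relu(\vzero)=\vzero$ one gets $\tnorm{\tgrvz_j}\le\fnorm{\tgW_j}\tnorm{\tgrvz_{j-1}}+\tnorm{\tgb_j}$, and unrolling this while using $\E\tnorm{\rvx}\le\sqrt{\E\tnorm{\rvx}^2}=\sqrt{\fnorm{\Sigma}}$ shows $\E\tnorm{\tgrvz_j}\le\beta$ for all $j\in\{0,\dots,\tgL-1\}$ — this is precisely the quantity $\beta$ is defined to upper bound, the $\vee\sqrt{\fnorm{\Sigma}}$ term covering $j=0$. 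Substituting into the expectation of the unrolled bound yields $\E\tnorm{f(\rvx)-\tgf(\rvx)}=\E\delta_\tgL\le\beta\sum_{i=1}^\tgL\max_{k\in[\tgL]}(\fnorm{\tgW_k}+E_k)^{\tgL-i}E_i$, as claimed.

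\emph{Where the difficulty sits.} There are no hard inequalities; the whole argument hinges on the bookkeeping in Step 2. A naive triangle inequality would leave $\tnorm{\udrvz_{i-1}}$ on the right and force control of the adapted model's activation growth, which is dominated by the (deliberately large, because of linearization) biases $\udb_l$ and would not close against $\beta$. Routing the split through $\tgeW_i\tgrvz_{i-1}$ together with the estimate $\tnorm{\tgeW_i}\le\fnorm{\tgW_i}+E_i$ from Step 1 is what keeps every surviving norm a target-model quantity, and it is exactly this choice that produces the $(\fnorm{\tgW_k}+E_k)^{\tgL-i}$ amplification factors.
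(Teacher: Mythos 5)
Your proposal is correct and follows essentially the same route as the paper's proof: the same uniform-partition block decomposition via Lemma~\ref{lemma:full_1_fnn_approx}, the same error split into a propagated term $\tnorm{\tgeW_i}\delta_{i-1}$ and a fresh term $E_i\tnorm{\tgrvz_{i-1}}$ (the paper's $A_i+B_i$), and the same unrolling with $\E\tnorm{\tgrvz_j}\le\beta$. Your use of the spectral norm $\tnorm{\tgeW_i-\tgW_i}=E_i$ to get $\tnorm{\tgeW_i}\le\fnorm{\tgW_i}+E_i$ is in fact slightly cleaner than the paper's Frobenius-norm tail estimate, but the argument is otherwise identical.
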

\setcounter{theorem}{\value{temp_theorem_counter}} 
\begin{proof}[Proof of Theorem~\ref{thm:fnn_approx}]
    This proof is a continuation of the proof of Theorem~\ref{thm:fnn_exact}.
    In this proof, we will consider a more general case, without enforcing any constraints on the rank of the adapters $R$. 
    We use $\udW_{i}$ to denote the corresponding weight matrix, \ie, $\udW_i = \lr_{RL \wedge \tr_i} (\tgW_1 - \prod_{l\in\ueP_i} \mW_l) + \prod_{l\in\ueP_i} \mW_l$.

    \paragraph{Error Decomposition.}
    For submodel $i = 2, \ldots, \tgL$, we calculate the expected error of the composition of the first $i$ sub-models, 
    \begin{align}\label{eq:thm_error_de}
         & \E\tnorm{\udrvz_i - \tgrvz_i }  = \E\tnorm{f_i(\udrvz_{i-1}) - \tgf_i (\tgrvz_{i-1})} \\
         & = \E\tnorm{\left(f_i(\udrvz_{i-1}) - f_i(\tgrvz_{i-1})\right) +  \left(f_i(\tgrvz_{i-1}) - \tgf_i (\tgrvz_{i-1})\right)} \quad & \text{(Rearranging terms)} \\ 
         & \leq \underbrace{\E\tnorm{f_i(\udrvz_{i-1}) - f_i(\tgrvz_{i-1})}}_{A_i} + \underbrace{\E \tnorm{f_i(\tgrvz_{i-1}) - \tgf_i (\tgrvz_{i-1})}}_{B_i}.\quad & \text{(Applying triangle inequality)}
    \end{align}
    Here $A_i$ represents the error resulting from the discrepancy between the first $i-1$ submodels, while $B_i$ represents the error arising from the mismatch between the $i$-th submodel.

    \paragraph{Computing $A_i$.}
    We start by computing the error introduced by the first $i-1$ submodels, denoted by $A_i$: 
    \begin{align}
        A_i & = \E\tnorm{f_i(\udrvz_{i-1}) - f_i(\tgrvz_{i-1})} = \E\tnorm{ \relu (\udW_i (\udrvz_{i-1} - \tgrvz_{i-1}) ) } \\
        & \leq \E \tnorm{\udW_i (\udrvz_{i-1} - \tgrvz_{i-1}) } \hspace{5.9cm} \text{(ReLU is 1-Lipschitz)}\\
        & \overset{\eqref{prp:2norm_fnorm}}{\leq}  \fnorm{ \udW_i} \E \tnorm{\udrvz_{i-1} - \tgrvz_{i-1}}. \label{eq:ai_step}
    \end{align}
    Here, 
    \begin{align}
        \fnorm{ \udW_i}  & = \fnorm{\prod_{l\in\ueP_i} \mW_l + \lr_{R\tdL \wedge \tr_i}  (\tgW_i - \prod_{l\in\ueP_i} \mW_l)} \\
        & = \fnorm{\tgW_i  + \left(\prod_{l\in\ueP_i}  \mW_l - \tgW_i\right)  + \lr_{R\tdL \wedge \tr_i}   (\tgW_i - \prod_{l\in\ueP_i} \mW_l)} \hspace{0.2cm} \text{(Rearranging terms)} \\
        & \leq \fnorm{\tgW_i} + \fnorm{\left(\prod_{l\in\ueP_i}  \mW_l - \tgW_i\right)  + \lr_{R\tdL \wedge \tr_i}   (\tgW_i - \prod_{l\in\ueP_i} \mW_l)} \\ 
        & \hspace{8.1cm} \text{(Applying triangle inequality)}\\
        & = \fnorm{\tgW_{i}} + \sqrt{\sum_{j=R\tdL \wedge \tr_i + 1}^D \sigma_j^2 (\tgW_i - \prod_{l\in\ueP_i} \mW_l)} \label{eq:udW_fnorm} \\
        & \hspace{5.8cm} \text{(By the definition of $\tgW_i$ and $\lr_{R\tdL \wedge \tr_i + 1} (\cdot)$)} \\
        & \leq \max_{k \in [\tgL]} (\fnorm{ \tgW_k} + E_i) \coloneqq \alpha.
    \end{align}
    By combining \eqref{eq:ai_step} and \eqref{eq:udW_fnorm}, we get
    \begin{align}\label{eq:thm_ai}
        A_i &  \leq \max_{k \in [\tgL]} \pt{\fnorm{ \tgW_k} + E_i} \E \tnorm{\udrvz_{i-1} - \tgrvz_{i-1}}\leq \alpha  \E \tnorm{\udrvz_{i-1} - \tgrvz_{i-1}}. 
    \end{align}

    \paragraph{Computing $B_i$.}    
    We proceed to compute the error associated with the $i$-th submodel, which we denote as $B_i$.
    It can be evaluated as follows: 
    \allowdisplaybreaks
    \begin{align}\label{eq:thm_bi_compute}
        B_i & = \E \tnorm{f_i(\tgrvz_{i-1}) - \tgf_i (\tgrvz_{i-1})} \\
        & \overset{\eqref{eq:thm_submodel_diff}}{=} \E \tnorm{\relu \left( \left(\lr_{R\tdL \wedge \tr_i} (\tgW_i - \prod_{l\in\ueP_i} \mW_l) - (\tgW_i - \prod_{l\in\ueP_i} \mW_l) \right)  \tgrvz_{i-1} \right)} \\
        & \leq \E \tnorm{ \left(\lr_{R\tdL \wedge \tr_i} (\tgW_i - \prod_{l\in\ueP_i} \mW_l) - (\tgW_i - \prod_{l\in\ueP_i} \mW_l) \right)  \tgrvz_{i-1} } \hspace{1.5cm} \text{(ReLU is 1-Lipschitz)}\\
        & \overset{\eqref{prp:2norm_fnorm}}{\leq} \tnorm{\lr_{R\tdL \wedge \tr_i} (\tgW_i - \prod_{l\in\ueP_i} \mW_l) - (\tgW_i - \prod_{l\in\ueP_i} \mW_l)}  \E \tnorm{\tgrvz_{i-1} }  \\
        & = \sigma_{R\tdL \wedge \tr_i + 1} (\tgW_i - \prod_{l\in\ueP_i} \mW_l)  \E \tnorm{\tgrvz_{i-1} }.
    \end{align}
    We can further simplify $\E \tnorm{\tgrvz_{i-1}}$ as :
    \allowdisplaybreaks
    \begin{align}
         & \E \tnorm{\tgrvz_{i-1}} \\
         & = \E \tnorm{ \relu (\tgW_{i-1} \tgrvz_{i-2} +\tgb_{i-1}) } \\
         & = \E \tnorm{\tgW_{i-1} \tgrvz_{i-2} +\tgb_{i-1}} \quad & \text{(ReLU is 1-Lipschitz)}\\
         & \leq \fnorm{\tgW_{i-1}} \E \tnorm{\tgrvz_{i-2}} + \tnorm{\tgb_{i-1}} \quad & \text{(Applying triangle inequality and \eqref{prp:2norm_fnorm})}\\
         & \leq \fnorm{\tgW_{i-1}} \left( \fnorm{\tgW_{i-2}} \E \tnorm{\tgrvz_{i-3}} + \tnorm{\tgb_{i-2}} \right) + \tnorm{\tgb_{i-1}} \quad & \text{(Following the same steps)} \\
         & \leq \prod_{j=1}^{i-1} \fnorm{\tgW_{j}} \E \tnorm{\rvx} + \sum_{j=1}^{i-1} \prod_{k=j+1}^{i-1} \fnorm{\tgW_k} \tnorm{\tgb_j} \quad & \text{(Repeating the same steps)} \\
         & =  \sqrt{\fnorm{\Sigma}} \prod_{j=1}^{i-1} \fnorm{\tgW_{j}}  + \sum_{j=1}^{i-1} \prod_{k=j+1}^{i-1} \fnorm{\tgW_k} \tnorm{\tgb_j} \leq \beta.
    \end{align}

    Therefore, we obtain 
    \begin{equation}\label{eq:thm_bi}
        B_i \leq \beta  \sigma_{R\tdL \wedge \tr_i + 1} (\tgW_i - \prod_{l\in\ueP_i} \mW_l).
    \end{equation}

    \paragraph{Error Composition.}
    Having established upper bounds for $A_i$ and $B_i$, we next evaluate the expected error for the composition of the first $i$ adapted submodels. 
    \begin{align}
        &\E\tnorm{\udrvz_i - \tgrvz_i } \overset{\eqref{eq:thm_error_de}}{\leq} A_i + B_i  \overset{\eqref{eq:thm_ai}}{\leq} \alpha \E \tnorm{\udrvz_{i-1} - \tgrvz_{i-1}} +  B_i  \leq \alpha (\alpha \E \tnorm{\udrvz_{i-2} - \tgrvz_{i-2}} + B_{i-1}) + B_i \\
        & = \alpha^2 \E \tnorm{\udrvz_{i-2} - \tgrvz_{i-2}} + \alpha B_{i-1} + B_i  \leq \cdots \leq  \alpha^{i-1}\E \tnorm{\udrvz_{1} - \tgrvz_{1}}  + \sum_{k=2}^i \alpha^{i-k} B_k  .\label{eq:thm_error_i}
    \end{align}
    To compute the overall approximation error of $f$, which is the composite of all submodels, we have 
    \begin{align}
        & \E \tnorm{ f(\rvx) - \tgf(\rvx) } = \E \tnorm{ f(\rvx) - \tgf(\rvx)} = \E\tnorm{\udrvz_\tgL - \tgrvz_\tgL } \\
    & \overset{\eqref{eq:thm_error_i}}{\leq} \alpha^{\tgL - 1} \E \tnorm{\udrvz_{1} - \tgrvz_{1}}  + \sum_{i=2}^\tgL \alpha^{\tgL-i}  B_i \\
    & \overset{\eqref{eq:thm_submodel_error}}{\leq} \alpha^{\tgL-1} \beta   \sigma_{R\tdL \wedge \tr_i + 1} (\tgW_i - \prod_{l\in\ueP_i} \mW_l)  + \beta \sum_{i=2}^\tgL \alpha^{\tgL-i}   \sigma_{R\tdL \wedge \tr_i + 1} (\tgW_i - \prod_{l\in\ueP_i} \mW_l)\\
    & = \beta \sum_{i=1}^\tgL \alpha^{\tgL-i}   \sigma_{R\tdL \wedge \tr_i + 1} (\tgW_i - \prod_{l\in\ueP_i} \mW_l)\\ 
    & = \beta \sum_{i=1}^\tgL \alpha^{\tgL-i}   \sigma_{R\tdL + 1} (\tgW_i - \prod_{l\in\ueP_i} \mW_l). 
    \end{align}
    Substituting $\alpha$ with $\max_{k \in [\tgL]} (\fnorm{ \tgW_k} + E_i)  $ concludes the proof. 
\end{proof}

\subsection{Approximating Multi-Layer ReLU FNN via LoRA with General Model Parition}\label{app:general_partition}
Firstly, we provide the required non-singular assumption and the lemma demonstrating the mildness of this assumption for the general model partition cases after introducing necessary notations.


\begin{assumption}\label{assump:general_fnn}
    For the given LoRA-rank sequence $\pt{R_l}_{l=1}^L$ and partition $\gP$, 
    the weight matrices of the frozen model $\mW_1, \ldots, \mW_{L}$ and 
    $\pt{\prod_{l\in P_i} \mW_l} + \lr_{r} (\tgW_i - \prod_{l= \min P_i}^{\max P_i - 1} \mW_l)$ are non-singular for all $r \leq \sum_{l=\min P_i}^{\max P_i-1} R_l $ and $i \in [\tgL]$. 
\end{assumption}
Note that $\max P_i$ and $\min P_i$ here represent the maximum and minimum elements in the set $P_i$, respectively.

\begin{lemma}\label{lemma:full_rank_assump_general}
     Let $(\tgW_l)_{l=1}^{\tgL}, (\mW_l)_{l=1}^L \in \sR^{D\times D}$ be matrices whose elements are drawn independently from arbitrary continuous distributions.
    Then, with probability 1, Assumption~\ref{assump:general_fnn} holds for all $R \in [D]$.
\end{lemma}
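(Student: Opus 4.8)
The plan is to imitate the proof of Lemma~\ref{lemma:full_rank_assump} essentially verbatim, replacing the uniform blocks $\ueP_i$ by the general blocks $P_i$ and keeping track of one structural asymmetry present in Assumption~\ref{assump:general_fnn}: the product multiplying $\lr_r(\cdot)$ on the left runs over \emph{all} of $P_i$, whereas the product inside $\lr_r(\cdot)$ stops at $\max P_i-1$ (this mirrors the ``$L-1$'' in Lemma~\ref{lemma:matrix_approx_different_rank}). This asymmetry works in our favor: the last-block layer $\rmW_{\max P_i}$ then appears only in the outer product, and linearly — which is exactly the shape the conditioning argument needs, and is in fact cleaner than the uniform case, where that layer also sits inside $\lr_r(\cdot)$.

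First I would apply Lemma~\ref{lemma:random_matrix_nonsingular} to each of the finitely many matrices $\tgrW_1,\dots,\tgrW_{\tgL},\rmW_1,\dots,\rmW_L$ and take a union bound, so that with probability $1$ all of them are simultaneously non-singular; this settles the first clause of Assumption~\ref{assump:general_fnn} and in particular makes every partial product $\prod_{l\in S}\rmW_l$ over a block $S$ of consecutive indices invertible almost surely.

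Next, fix $i\in[\tgL]$ and an integer $r$ with $0\le r\le D(|P_i|-1)$; since $\sum_{l=\min P_i}^{\max P_i-1}R_l$ never exceeds this bound as the $R_l$ range over $[D]$, it suffices to handle these finitely many $r$ (this is how the statement ``for all $R\in[D]$'', and more generally for all admissible rank sequences, reduces to finitely many events). Write $\rmV:=\prod_{l=\min P_i}^{\max P_i-1}\rmW_l$, so that $\prod_{l\in P_i}\rmW_l=\rmW_{\max P_i}\rmV$ and the matrix of interest is $\rmW_{\max P_i}\rmV+\lr_r(\tgrW_i-\rmV)$. I would condition on every random matrix except $\rmW_{\max P_i}$; this is legitimate because $\max P_i$ lies in no other block and $\tgrW_i$ occurs in no other clause of the assumption, and because $\rmW_{\max P_i}$ is independent of everything being conditioned on and has a density supported on a full-dimensional set, so the conditioning does not alter its law. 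Under this conditioning $\rmV=\mV$ is a fixed invertible matrix (using the first step; $\mV=\mI$ in the degenerate singleton case) and $\lr_r(\tgrW_i-\mV)=:\mA$ is a fixed matrix, so the matrix reduces to $\rmW_{\max P_i}\mV+\mA$. Since $X\mapsto X\mV+\mA$ is an invertible affine self-map of $\sR^{D\times D}$, the conditional law of $\rmW_{\max P_i}\mV+\mA$ is again continuous with support of non-zero Lebesgue measure, and Lemma~\ref{lemma:random_matrix_nonsingular} yields conditional non-singularity with probability $1$; integrating over the conditioning gives unconditional probability $1$.

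Finally, a union bound over the finitely many pairs $(i,r)$, combined with the first step, shows that with probability $1$ none of the matrices named in Assumption~\ref{assump:general_fnn} is singular, simultaneously for every partition $\gP$ and every rank sequence $(R_l)_{l=1}^L$ with $R_l\in[D]$; in particular the assumption holds for all $R\in[D]$. There is no genuinely new obstacle here — the argument is the uniform-case proof with cosmetic changes — and the only point requiring care is the legitimacy of the conditioning step, which is inherited from (and slightly easier than) the corresponding step in the proof of Lemma~\ref{lemma:full_rank_assump}.
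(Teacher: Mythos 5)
Your proof is correct and follows essentially the same route as the paper, which simply instructs the reader to repeat the argument of Lemma~\ref{lemma:full_rank_assump} with the general partition in place of the uniform one. Your only deviation is the choice of conditioning — you fix everything except $\rmW_{\max P_i}$ and push its law through the invertible affine map $X \mapsto X\mV + \mA$, rather than conditioning on the error matrix and manipulating conditional densities as in the uniform-case proof — which is, if anything, a cleaner execution of the same step.
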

\begin{proof}[Proof of Lemma~\ref{lemma:full_rank_assump_general}]
    Following the same steps in the proof of Lemma~\ref{lemma:full_rank_assump} but replacing the uniform partition with the general partition completes the proof. 
\end{proof}

We now restate Theorem~\ref{thm:general_fnn} and provide its proof.

\setcounter{temp_theorem_counter}{\value{theorem}} 
\setcounter{theorem}{\numexpr\getrefnumber{thm:general_fnn}-1\relax}  
\begin{theorem}
    Consider a partition $\gP$ for the frozen model. 
    Let Assumption~\ref{assump:general_fnn} hold.
    If $ \sum_{l\in P_i} R_l  \geq \rk (\tgW_i - \prod_{l\in P_i} \mW_l)$ for all $i \in [\tgL]$,  there exists LoRA adapters $(\dW_l)_{l=1}^L$ with $\rk(\dW_l) \leq R_l$ and biases $(\udb_l)_{l=1}^L$ such that the adapted model $f$ can exactly approximate the target model.

    Moreover, define the approximation error of the $i$-th layer as $E_i =  \sigma_{\sum_{l\in P_i} R_l +1} (\tgW_i - \prod_{l \in P_i} \mW_l) $, and the magnitude of the parameters and the input as  $\beta: = \max_{i\in [\tgL]}\left(\sqrt{\fnorm{\Sigma}} \prod_{j=1}^{i} \fnorm{\tgW_{j}}  + \sum_{j=1}^{i} \prod_{k=j+1}^{i-1} \fnorm{\tgW_k} \tnorm{\tgb_j}\right) \bigvee \sqrt{\fnorm{\Sigma}}$.
    
   Then, there exists LoRA adapters $(\dW_l)_{l=1}^L$ with $\rk(\dW_l) \leq R_l$ and biases $(\udb_l)_{l=1}^L$ such that for any input $\rvx \in \gX$ with $\E \rvx \rvx^{\top} = \Sigma$, the approximation  error can be bounded as
    \begin{equation}
         \E \tnorm{ f(\rvx) - 
            \tgf(\rvx)}  \leq \beta \sum_{i=1}^\tgL \max_{k \in [\tgL]} \left( \fnorm{\tgW_{k}} + E_k  \right) ^{\tgL-i} E_i .
    \end{equation}
\end{theorem}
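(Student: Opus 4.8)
The plan is to reduce this general-partition statement to $\tgL$ independent one-layer approximation problems and then replay the error-composition argument of Theorem~\ref{thm:fnn_approx}, the only substantive change from the uniform-partition proofs being that I invoke the variable-rank one-layer result Lemma~\ref{lemma:1_fnn_approx_general} (which itself rests on Lemma~\ref{lemma:matrix_approx_different_rank}) in place of Lemma~\ref{lemma:full_1_fnn_approx}. Concretely, write $\tgf = \tgf_{\tgL}\circ\cdots\circ\tgf_1$ with one-layer submodels $\tgf_i := \fnn_{1,D}(\cdot;\tgW_i,\tgb_i)$, and split the adapted model along the partition as $f = f_{\tgL}\circ\cdots\circ f_1$ with $f_i := \fnn_{|P_i|,D}(\cdot;(\mW_l+\dW_l)_{l\in P_i},(\udb_l)_{l\in P_i})$, so the layers indexed by $P_i$ are reserved for approximating the $i$-th target layer. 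Set $\tr_i := \rk(\tgW_i - \prod_{l\in P_i}\mW_l)$, and observe $E_i = \sigma_{(\sum_{l\in P_i}R_l)+1}(\tgW_i - \prod_{l\in P_i}\mW_l) = \sigma_{(\sum_{l\in P_i}R_l)\wedge\tr_i+1}(\tgW_i - \prod_{l\in P_i}\mW_l)$, the quantity that emerges from the lemmas.

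I would construct the adapters block by block in increasing $i$. Because $\gX$ is bounded and each submodel is a composition of affine maps and ReLUs, the support of the input $\udrvz_{i-1}$ to block $i$ is bounded, which legitimizes the linearization step: choose the biases of the first $|P_i|-1$ layers of $f_i$ large enough to activate every ReLU there, collapsing $f_i$ to an affine-then-ReLU map with weight $\prod_{l\in P_i}(\mW_l+\dW_l)$. The non-singularity hypotheses in Assumption~\ref{assump:general_fnn} are exactly those needed to apply Lemma~\ref{lemma:matrix_approx_different_rank} inside block $P_i$ with rank budget $(R_l)_{l\in P_i}$; this produces rank-$\leq R_l$ adapters with $\prod_{l\in P_i}(\mW_l+\dW_l) = \prod_{l\in P_i}\mW_l + \lr_{(\sum_{l\in P_i}R_l)\wedge\tr_i}(\tgW_i-\prod_{l\in P_i}\mW_l)$, after which a suitable choice of the last bias in $P_i$ matches $\tgb_i$. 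Lemma~\ref{lemma:1_fnn_approx_general} then packages this as $f_i(\vz)-\tgf_i(\vz) = \relu\big((\lr_{(\sum_{l\in P_i}R_l)\wedge\tr_i}(\tgW_i-\prod_{l\in P_i}\mW_l) - (\tgW_i-\prod_{l\in P_i}\mW_l))\vz\big)$ together with $\E\tnorm{f_i(\rvz)-\tgf_i(\rvz)}^2 \leq \fnorm{\E\rvz\rvz^\top}\,\sigma^2_{(\sum_{l\in P_i}R_l)\wedge\tr_i+1}(\tgW_i-\prod_{l\in P_i}\mW_l)$.

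For the exact-approximation claim, the hypothesis $\sum_{l\in P_i}R_l\geq\rk(\tgW_i-\prod_{l\in P_i}\mW_l)$ makes $\lr_{(\sum_{l\in P_i}R_l)\wedge\tr_i}(\cdot)$ act as the identity on the error matrix, so $f_i=\tgf_i$ for every $i$ and hence $f=\tgf$ on $\gX$. For the error bound I would copy the composition argument of Theorem~\ref{thm:fnn_approx} with $\ueP_i$ replaced by $P_i$ throughout: bound $\E\tnorm{\udrvz_i-\tgrvz_i}\leq A_i+B_i$ with $A_i=\E\tnorm{f_i(\udrvz_{i-1})-f_i(\tgrvz_{i-1})}$ and $B_i=\E\tnorm{f_i(\tgrvz_{i-1})-\tgf_i(\tgrvz_{i-1})}$; use $1$-Lipschitzness of ReLU and \eqref{prp:2norm_fnorm} to get $A_i\leq\fnorm{\udW_i}\,\E\tnorm{\udrvz_{i-1}-\tgrvz_{i-1}}$ with $\fnorm{\udW_i}\leq\max_{k\in[\tgL]}(\fnorm{\tgW_k}+E_k)=:\alpha$ via the triangle inequality and $\fnorm{\cdot}=\sqrt{\sum\sigma^2}$; get $B_i\leq\beta E_i$ from the per-block error identity above and the layer-by-layer bound $\E\tnorm{\tgrvz_{i-1}}\leq\beta$ (identical to the computation in Theorem~\ref{thm:fnn_approx}); then unroll $\E\tnorm{\udrvz_i-\tgrvz_i}\leq\alpha\,\E\tnorm{\udrvz_{i-1}-\tgrvz_{i-1}}+B_i$ from $i=1$ to $\tgL$ to obtain $\E\tnorm{f(\rvx)-\tgf(\rvx)}\leq\beta\sum_{i=1}^{\tgL}\alpha^{\tgL-i}E_i$, and finally substitute $\alpha=\max_{k}(\fnorm{\tgW_k}+E_k)$.

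I do not expect a genuine obstacle here, since the whole argument is a re-parametrization of the uniform-partition proofs (and Assumption~\ref{assump:general_fnn} is mild by Lemma~\ref{lemma:full_rank_assump_general}); the delicate bookkeeping is twofold. First, one must verify that Assumption~\ref{assump:general_fnn}'s non-singularity list lines up precisely with the hypotheses of Lemma~\ref{lemma:matrix_approx_different_rank} applied to the length-$|P_i|$ product over block $P_i$ with rank budget $\sum_{l\in P_i}R_l$ — in particular that invertibility of $\mW_l+\dW_l$ is demanded only for the first $|P_i|-1$ layers of the block, which matches the truncation $\prod_{l=\min P_i}^{\max P_i-1}$ appearing in the assumption. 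Second, one must make the block-by-block construction order explicit so that the large-bias linearization inside $P_i$ is valid; this is fine because boundedness of $\gX$ propagates through the already-fixed submodels $f_1,\ldots,f_{i-1}$ to give bounded support for $\udrvz_{i-1}$.
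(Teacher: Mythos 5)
Your proposal is correct and follows exactly the route the paper takes: the paper's own proof of this theorem is a one-line reduction that reruns the arguments of Theorems~\ref{thm:fnn_exact} and~\ref{thm:fnn_approx} with $\uP$ replaced by $\gP$ and Lemma~\ref{lemma:1_fnn_approx_general} substituted for Lemma~\ref{lemma:1_fnn_approx}, which is precisely the decomposition, per-block construction, and error-composition argument you spell out. Your two bookkeeping remarks (matching Assumption~\ref{assump:general_fnn} to the hypotheses of Lemma~\ref{lemma:matrix_approx_different_rank}, and the boundedness needed for the large-bias linearization) are exactly the points the paper leaves implicit, so your write-up is if anything more complete than the original.
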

\setcounter{theorem}{\value{temp_theorem_counter}} 
\begin{proof}[Proof of Theorem~\ref{thm:general_fnn}]
    This proof follows the same steps as the proofs of Theorem~\ref{thm:fnn_exact} and Theorem~\ref{thm:fnn_approx}, substituting the uniform partition $\uP$ with the general partition $\gP$ and applying Lemma~\ref{lemma:1_fnn_approx_general} in place of Lemma~\ref{lemma:1_fnn_approx} to derive the desired outcome. 
\end{proof}

\subsection{Approximating Multi-Layer ReLU FNN via Final Layers Tuning}
We now aim to examine another commonly used model adaptation method, the final layers tuning, within the same theoretical framework. 

The main limitation of this method, as compared to LoRA, is that while LoRA can update all layers, the tuning of final layers keeps the initial layers frozen. 
Consequently, a clear limitation arises when the initial layers of the frozen model $f$ are less discriminative than the target model $\tgf$. 
That is, if there exist two input vectors $\vx_1, \vx_2 \in \sR^{D\times D}$ such that the output of the initial layers of the frozen model $f_0$ is the same, but the output of the target model $\tgf$ is different, then no matter how the final layers are tuned, it is impossible for the adapted model $f$ to exactly approximate the target model $\tgf$. 

To formalize this, we observe that for the first layer of the frozen model, the outputs of the inputs in the non-activation region are always zero. 
In other words, when $\vx_1, \vx_2 \in  \set{\vx: \mW_1 \vx +\vb_1 \leq \vzero}$, we have $\relu(\mW_1 \vx_1 + \vb_1) = \relu(\mW_1 \vx_2 + \vb_1) = 0$.
Therefore, no matter how the subsequent layers are tuned, we still have $f(\vx_1) = f(\vx_2)$. 
When we fix the first $l-1$ layers, the non-activation region becomes $\set{\vx: \mW_2 (\mW_1 \vx + \vb_1) + \vb_2 \leq \vzero}$.
Similarly, we define the non-active region of the first $l$ layer in the frozen model as $I_l = \set{\vx: \prod_{i=1}^l \mW_i \vx + \sum_{i=1}^l \prod_{j=i+1}^{l} \mW_j \vb_i \leq \vzero }$.
Correspondingly, we define $\tgI_l =  \set{\vx: \prod_{i=1}^l \tgW_i \vx + \sum_{i=1}^l \prod_{j=i+1}^{l} \tgW_j \tgb_i \leq \vzero }$.

The following lemma is provided based on these definitions. 
\begin{lemma}\label{lemma:pre_last_layers}
    If $l \in [L-1]$ such that  $I_l \setminus \bigcup_{i=1}^{\tgL} \tgI_i \neq \emptyset $ and the weight matrices of the target model $(\tgW_i)_{i=1}^L$ are non-singular, then for any tuning of the last $L-l$ layers, $f \neq \tgf$.
\end{lemma}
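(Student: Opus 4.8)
The plan is to argue by contradiction: suppose some choice of the last $L-l$ layers makes $f=\tgf$, and derive a contradiction from the fact that $f$ is forced to be constant on $I_l$ while $\tgf$ is not. Since layers $1,\dots,l$ of the adapted model coincide with those of the frozen model and are never modified, I would first write $f=h\circ g$, where $g:=\fnn_{l,D}(\cdot;(\mW_i)_{i=1}^l,(\vb_i)_{i=1}^l)$ is fixed and $h$ is the $(L-l)$-layer ReLU FNN produced by the tuning; thus $f(\vx)$ depends on $\vx$ only through $g(\vx)$, so $g(\vx_1)=g(\vx_2)$ implies $f(\vx_1)=f(\vx_2)$. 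The first key step is that $g$ maps all of $I_l$ to a single point: on $I_l$ the first $l$ frozen layers are in their non-activation regime (for $l=1$ the first layer is identically $\vzero$ on $I_1$, and in general the non-activated upstream layers feed a fixed input into every layer of $g$), so $g\equiv\vc_g$ there and hence $f\equiv h(\vc_g)=:\vc$ on all of $I_l$.

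The second, and more substantive, step is to show $\tgf$ is \emph{not} constant on $I_l$, which then contradicts $f=\tgf$. By hypothesis $I_l\setminus\bigcup_{i=1}^{\tgL}\tgI_i\neq\emptyset$; since each $\tgI_i$ is closed, this difference is relatively open in the polyhedron $I_l$, so I can pick $\vx_0$ in it together with a relatively-open neighborhood $U\subseteq I_l$ of $\vx_0$ still disjoint from every $\tgI_i$. On $U$ no ReLU of the target network is in its clipped configuration, so $\tgf$ restricted to $U$ agrees with the affine map $\vx\mapsto\bigl(\prod_{i=1}^{\tgL}\tgW_i\bigr)\vx+\vd$ for an appropriate $\vd$; because every $\tgW_i$ is non-singular, the linear part $\prod_{i=1}^{\tgL}\tgW_i$ is non-singular, hence this map is injective and in particular non-constant on the non-degenerate set $U$. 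Therefore $\tgf$ takes at least two distinct values on $U\subseteq I_l$, contradicting $\tgf\equiv\vc$ on $I_l$ from the first step; this proves that no tuning of the last $L-l$ layers achieves $f=\tgf$.

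I expect the two steps above to require care in exactly the places where ReLU piecewise-linearity interacts with the polyhedral descriptions. For the first step, a composition of ReLU layers is \emph{not} the ReLU of the linearized composition, so one must argue precisely which inputs drive the first $l$ frozen layers fully into the zero regime and that this regime makes $g$ genuinely constant — cleanest for $l=1$, and for larger $l$ handled by restricting to the sub-region of $I_l$ on which the upstream layers are linear. For the second step, the delicate point is extracting an honest, non-degenerate relatively-open subset of $I_l$ on which the target's activation pattern is constant and its Jacobian equals the non-singular product $\prod_{i=1}^{\tgL}\tgW_i$; this is where the hypothesis that $I_l$ genuinely meets the complement of $\bigcup_i\tgI_i$ and the non-singularity of the $\tgW_i$ are essential, and it is the main obstacle.
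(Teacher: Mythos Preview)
Your proposal is correct and follows essentially the same approach as the paper: both argue that $f$ is forced to be constant on $I_l$ (since the frozen first $l$ layers collapse $I_l$ to a single output), while $\tgf$ is non-constant on $I_l\setminus\bigcup_i\tgI_i$ because there it agrees with an affine map whose linear part $\prod_{i=1}^{\tgL}\tgW_i$ is non-singular. The only cosmetic difference is in how two distinct points of $I_l\setminus\bigcup_i\tgI_i$ are produced: you use that the complement of the closed set $\bigcup_i\tgI_i$ is open and hence intersects $I_l$ in a non-degenerate relatively-open set, whereas the paper picks $\vx_1\in I_l\setminus\bigcup_i\tgI_i$, takes a sequence in $I_l$ converging to $\vx_1$, and uses closedness of $\bigcup_i\tgI_i$ to conclude some sequence element $\vx_2$ also lies outside $\bigcup_i\tgI_i$. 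Your explicit flagging of the two delicate points (why $g$ is genuinely constant on $I_l$ for $l>1$, and why the target's activation pattern is fully unclipped on $U$) is actually more careful than the paper, which asserts both without further comment.
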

\begin{proof}[Proof of Lemma~\ref{lemma:pre_last_layers}]
    For the simplicity of the presentation, we let $\tgI = \bigcup_{i=1}^{\tgL} \tgI_i$ to denote the non-activation region of the target model. 
    Then, the condition $I_l \setminus \bigcup_{i=1}^{\tgL} \tgI_i \neq \emptyset$ can be written as $I_l \setminus \tgI \neq \emptyset$.
    Clearly, both $\tgI$ and $I_l$ are closed convex sets. 

    \paragraph{Condition $I_l \setminus \protect\tgI \neq \emptyset$.}
    The condition $I_l \setminus \tgI \neq \emptyset$ indicates that there exists a region in $I_l$ where the ReLUs are deactivated in the $l$-th layer of the frozen model, but activated in the entire target model. 
    Therefore, for any  $\vx_1, \vx_2 \in I_l \setminus \tgI$, we have $f(\vx_1) = f(\vx_2)$ regardless of how the final $l+1$ layers are tuned. 
    If these $\vx_1, \vx_2 \in I_l \setminus \tgI$ satisfies $\tgf(\vx_1) \neq \tgf(\vx_2)$, this proof is completed. 
    The remaining proof is showing the existence of such $\vx_1, \vx_2$.

    \paragraph{Existence of $\vx_1, \vx_2$.}
    Firstly, we show that there exists two $\vx_1, \vx_2 \in I_l \setminus \tgI$ such that $\vx_1 \neq \vx_2$. 
    Let $\vx_1 \in I_l \setminus \tgI$. 
    Since $I_l$ is a closed set,  there exists a sequence $(\vz_i)_{i=1}^{\infty}$ where $ \vz_i\in I_l$ and $\vz_i \neq \vx_1$ satisfying $\lim_{i \to \infty} \vz_i = \vx_1$. 
    Note that at least one element $\vz_i$ must not belong to $\tgI$, otherwise $\vx_1$ would be in $\tgI$ due to the closed property of $\tgI$, contradicting the selection of $\vx_1$.
    Let $\vx_2 = \vz_i$. 
    Therefore, we have two distinct $\vx_1, \vx_2 \in I_l \setminus \tgI$ with $\vx_1 \neq \vx_2$. 

    Then, given $\vx_1, \vx_2 \in I_l \setminus \tgI$ such that $\vx_1 \neq \vx_2$, both $\vx_1, \vx_2$ activate all the ReLUs in the target model.
    Since $\vx_1, \vx_2 \not\in \tgI$ and the weight matrices of the target model $(\tgW_l)_{l=1}^L$ all are non-singular, we have 
    \begin{align}
        \tgf(\vx_1) - \tgf(\vx_2) = \tgW_{\tgL} \cdots \tgW_1 (\vx_1 - \vx_2) \neq \vzero,
    \end{align}
    implying $\tgf(\vx_1) \neq \tgf(\vx_2)$.
    Meanwhile, since $\vx_1, \vx_2 \in I_l$, the output of the initial $l$ layers of the frozen model are equal, thus we have $f(\vx_1) = f(\vx_2)$ no matter how we tune the last $L-l$ layers. 
    This completes the proof. 
\end{proof}

The following lemma reduces the assumptions to the assumption of randomly generated models.
This assumption aligns with that of Corollary~\ref{corollary:fnn_exact}, thereby facilitating a more effective comparison between the expressive power of LoRA and the adaptation of the final layers.
\begin{wrapfigure}{r}{0.25\textwidth}
    \centering
    \vspace{-.2in}
    \includegraphics[width=0.25\textwidth]{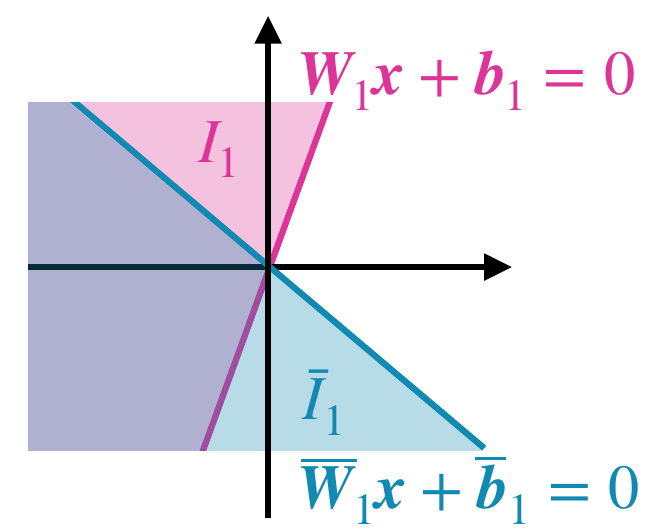}
    \caption{An example of $I_1$ and $\protect\tgI_1$ when $D= 2$.}
    \label{fig:demo_proof_final_layers}
    \vspace{-.5in}
\end{wrapfigure}
\setcounter{temp_theorem_counter}{\value{lemma}} 
\setcounter{lemma}{\numexpr\getrefnumber{lemma:last_layers}-1\relax}  
\begin{lemma}
    Let $D \geq 2$ and $\tgf$ be a one-layer target FNN. 
    Assume that the elements of weight matrices $(\mW_l)_{l=1}^L$ are independently drawn from arbitrary continuous distributions. 
    With probability 1, for any tuning of the last $L-1$ layers, $f\neq\tgf$. 
\end{lemma}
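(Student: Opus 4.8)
The adapted model has its first layer frozen, so for \emph{any} tuning of layers $2,\dots,L$ we may write $f=g\circ\phi$, where $\phi(\vx)=\relu(\mW_1\vx+\vb_1)$ is the frozen first layer and $g=\fnn_{L-1,D}(\cdot)$ is the composite of the tuned layers. The whole point will be that $\phi$ \emph{collapses dimension}: on the half-space $\{(\mW_1\vx+\vb_1)_k\le 0\}$ its $k$-th output coordinate is identically $0$. By Lemma~\ref{lemma:random_matrix_nonsingular}, with probability $1$ the matrix $\mW_1$ is non-singular, so $\vx\mapsto\mW_1\vx+\vb_1$ is a bijection of $\sR^D$; I read ``$\tgf$ is a one-layer FNN'' as $\tgW_1\neq\vzero$ (if $\tgW_1=\vzero$ then $\tgf$ is constant and a constant tail reproduces it, so the statement genuinely needs this). \textbf{Step 1 (the tail is forced on the positive orthant).} On $A_1:=\{\vx:\mW_1\vx+\vb_1\ge\vzero\}$ we have $\phi(\vx)=\mW_1\vx+\vb_1$, and $\phi$ maps $A_1$ bijectively onto $\sR^D_{\ge 0}$ with inverse $\vy\mapsto\mW_1^{-1}(\vy-\vb_1)$; hence $g\circ\phi=\tgf$ on $A_1$ forces $g(\vy)=\relu(\mM\vy+\vc)$ for all $\vy\in\sR^D_{\ge 0}$, where $\mM:=\tgW_1\mW_1^{-1}$ and $\vc:=\tgb_1-\mM\vb_1$.

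\textbf{Step 2 (the single-neuron-off cells force rigidity of $\mM$).} Fix $k\in[D]$ and let $C_k:=\{\vx:(\mW_1\vx+\vb_1)_k<0,\ (\mW_1\vx+\vb_1)_j>0\ \forall j\neq k\}$, which is non-empty and full-dimensional since $\vx\mapsto\mW_1\vx+\vb_1$ is a bijection and $D\ge 2$. On $C_k$, writing $\vw:=\mW_1\vx+\vb_1$ and $\vw^{+}$ for $\vw$ with its $k$-th coordinate set to $0$, we have $\phi(\vx)=\vw^{+}\in\sR^D_{\ge 0}$ and $\tgW_1\vx+\tgb_1=\mM\vw+\vc=(\mM\vw^{+}+\vc)+w_k\,(\text{$k$-th column of }\mM)$, with $w_k<0$ and the remaining positive coordinates of $\vw^{+}$ ranging freely. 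Substituting Step 1, the requirement $g\circ\phi=\tgf$ on $C_k$ becomes $\relu(\mM\vw^{+}+\vc)=\relu\!\big((\mM\vw^{+}+\vc)+w_k\,(\text{$k$-th column of }\mM)\big)$ for all admissible $(\vw^{+},w_k)$. Reading this coordinate by coordinate: a negative entry $\mM_{ik}<0$ makes the right side blow up as $w_k\to-\infty$, impossible; and if $\mM_{ik}>0$, constancy of the left side in $w_k$ forces $(\mM\vw^{+}+\vc)_i\le 0$ for every admissible $\vw^{+}$, which—on letting the free positive coordinates of $\vw^{+}$ grow—forces $\mM_{ij}=0$ for all $j\neq k$. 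Ranging $k$ over $[D]$ we conclude $\mM\ge\vzero$ and every row of $\mM$ has at most one nonzero entry.

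\textbf{Step 3 (the rigid event is negligible).} Since $\tgW_1\neq\vzero$, fix a nonzero row of $\tgW_1$, say the $i_0$-th; then the $i_0$-th row of $\mM$ is $(\text{$i_0$-th row of }\tgW_1)\,\mW_1^{-1}\neq\vzero$ (as $\mW_1^{-1}$ is invertible), so by Step 2 it is a nonzero multiple of some $\ve_k^{\top}$. But the $i_0$-th row of $\tgW_1$ is a fixed nonzero covector, so $(\text{$i_0$-th row of }\tgW_1)\,\mW_1^{-1}$ runs over \emph{every} nonzero covector as $\mW_1$ runs over the non-singular matrices; hence, after multiplying through by $\det(\mW_1)$, the condition ``this row is proportional to $\ve_k^{\top}$'' is the vanishing of a polynomial in the entries of $\mW_1$ that is not identically zero (for $D\ge 2$ being proportional to a fixed basis covector is a proper linear condition). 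By Lemma~\ref{lemma:poly_lebesgue_zero} each such locus is Lebesgue-null, hence probability-zero under any continuous distribution, and taking the union over $k\in[D]$ the set of $\mW_1$ on which some tuning achieves $f=\tgf$ is null. Therefore $f\neq\tgf$ for every tuning of the last $L-1$ layers with probability $1$.

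\textbf{Main obstacle.} The heart of the proof is Step 2: squeezing the rigidity of $\mM$ (``at most one nonzero entry per row'') out of the matching condition on the partial-activation cells $C_k$, via the collapse of a half-space onto a coordinate hyperplane together with the unboundedness of the collapsed direction. Steps 1 and 3 are routine. It is worth noting that the tempting shortcut of invoking Lemma~\ref{lemma:pre_last_layers} with $l=1$—reducing everything to ``$I_1\setminus\tgI_1\neq\emptyset$ with probability $1$''—is not by itself sufficient for arbitrary continuous distributions, since $I_1\subseteq\tgI_1$ can occur with positive probability; this is precisely why the cell-by-cell analysis above is needed.
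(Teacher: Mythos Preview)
Your proof is correct and takes a genuinely different---and more robust---route than the paper's. The paper proves Lemma~\ref{lemma:last_layers} by invoking Lemma~\ref{lemma:pre_last_layers} with $l=1$ and claiming that $I_1\setminus\tgI_1\neq\emptyset$ with probability $1$, on the grounds that this fails ``only when $\tgW_1=\mW_1$''. As you correctly flag in your closing remark, that containment claim is false in general: with $D=2$, $\vb_1=\tgb_1=\vzero$, $\tgW_1=\mI$, and $\mW_1=\bigl(\begin{smallmatrix}1&-1/2\\-1/2&1\end{smallmatrix}\bigr)$ one checks $I_1\subsetneq\tgI_1$, and this inclusion persists on an open neighborhood of $\mW_1$---so under a continuous distribution supported there, $I_1\setminus\tgI_1=\emptyset$ occurs with positive probability. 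The paper's reduction to Lemma~\ref{lemma:pre_last_layers} also tacitly imports the hypothesis that $\tgW_1$ is non-singular, which Lemma~\ref{lemma:last_layers} does not assume. Your argument sidesteps both issues: Step~1 pins down $g$ on the nonnegative orthant, Step~2 tests the single-neuron-off cells $C_k$ to force the rigid structure ``$\mM=\tgW_1\mW_1^{-1}$ is nonnegative with at most one nonzero entry per row'', and Step~3 shows this is a measure-zero event needing only $\tgW_1\neq\vzero$. What your approach buys is a proof that actually works for \emph{arbitrary} continuous weight distributions and for singular (but nonzero) $\tgW_1$; what the paper's approach would buy, were its geometric claim true, is brevity. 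One cosmetic point: in Step~2, the analysis of a single cell $C_k$ only yields $\mM_{ij}\le 0$ for $j\neq k$ (when $\mM_{ik}>0$); the equality $\mM_{ij}=0$ comes only after combining with the nonnegativity $\mM\ge\vzero$ obtained by ranging $k$ over $[D]$. Your conclusion is right, but the sentence ordering slightly obscures which cell delivers which inequality.
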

\setcounter{lemma}{\value{temp_theorem_counter}} 
\begin{proof}[Proof of Lemma~\ref{lemma:last_layers}]
    If we can show that $I_1 \setminus \tgI_1 \neq \emptyset$, by Lemma~\ref{lemma:pre_last_layers}, we obtain the desired results. 
    Therefore, the remaining proof aims to show that $I_1 \setminus \tgI_1 \neq \emptyset$ with probability 1.
    
    Note that $I_1 \setminus \tgI_1 = \emptyset$ holds only when $\tgW_1 = \mW_1$ (not that this is necessary condition not sufficient condition), as demonstrated in Figure~\ref{fig:demo_proof_final_layers}.
    However, since the elements of matrices $\mW_1$ are independently drawn from arbitrary continuous distributions, we have $\sP( \mW_1 \neq \tgW_1) = 1$ for all $l \in [L]$. 
    Therefore, $I_1 \setminus \tgI_1 = \emptyset$ holds with probability 1.
    By Lemma~\ref{lemma:pre_last_layers}, we complete the proof. 
\end{proof}

\section{Proofs for TFN Approximation}\label{app:tf}
In this section, we not only provide the proof for the results outlined in Sec.~\ref{sec:result_tfn}, but also introduce the problem setting for TFNs with single-head attention layers and present the corresponding results.

\subsection{Approximating Transformer Network with Single-Head Attention Layers}\label{app:single_tf}
In this part, we outline the problem setting to investigate the expressive power of LoRA in TFNs that utilize single-head attention layers. 
The primary distinction between this setting and that of TFNs with multi-head attention layers lies in the weight matrices. 
Specifically, the $\mW_{Ol}^h$ matrices for combining different attention heads are absent in this case. 
Despite this difference, the derived results are consistent, albeit under slightly modified assumptions regarding the weight matrices and a different LoRA adaptation strategy. 

We start by introducing necessary notations. 
For an input matrix $\mX \in \sR^{D\times N}$, where $D$ is the dimension of the token embeddings and $N$ is the number of tokens, the $l$-th Transformer block using single-head self-attention can be expressed as: 
\begin{gather}
    \attn_l(\mZ_{l-1}) = \mW_{Vl} \mZ_{l-1} \cdot \softmax \left( (\mW_{Kl} \mZ_{l-1})^\top \mW_{Ql} \mZ_{l-1} \right),\\
    \mZ_l := \mW_{2l} \cdot \relu (\mW_{1l} \cdot \attn_l (\mZ_{l-1}) + \vb_{1l} \vone_N^\top ) +\vb_{2l} \vone_N^\top,
\end{gather}
where the weight matrices $\mW_{Kl}, \mW_{Ql}, \mW_{Vl}, \mW_{1l}, \mW_{2l} \in \sR^{D\times D}$, bias vectors $\vb_{1l}, \vb_{2l \in \sR^D}$, $\mZ_{l}$ is the output of $l$-th transformer block, with $\mZ_0 = \mX$.  
The output of the first $L$ Transformer blocks are subsequently fed into the output layer.
This produces the final output of the TFN, given by $\softmax (\mW_o \mZ_L ),$
where $\mW_o \in \sR^{D\times D}$ represents the weight matrix of the output layer. 

For single-head self-attention layers, the target model $\tgf$, frozen model $f$, and the adapted model $f$ can be formally represented as:
\begin{align}
    \text{Target TFN} \quad &  g  = \tfn_{L,D}\pt{\cdot;  \pt{( \tgW_{Vl}, \tgW_{Kl}, \tgW_{Ql} , \tgW_{2l}, \tgW_{1l})_{l=1}^L, \tgW_o}, (\tgb_{1l}, \tgb_{2l})_{l=1}^L },\\ 
    \text{Frozen TFN} \quad & f_0  = \tfn_{L,D}\pt{\cdot; \pt{(\mW_{Vl}, \mW_{Kl}, \mW_{Ql}, \mW_{2l}, \mW_{1l})_{l=1}^L, \mW_o}, (\vb_{1l}, \vb_{2l})_{l=1}^L}, \\
    \text{Adapted TFN} \quad & f  = \tfn_{L,D}\Big(\cdot; \big((\mW_{Vl} + \dW_{Vl}, \mW_{Kl} + \dW_{Kl},  \mW_{Ql} + \dW_{Ql}, \\
     & \hspace{1cm} \mW_{2l} + \dW_{2l}, \mW_{1l} + \dW_{1l})_{l=1}^L, \mW_o + \dW_o \big), (\udb_{1l}, \udb_{2l})_{l=1}^L\Big). 
\end{align}
Here, $\tgW_{Kl}, \tgW_{Ql}, \tgW_{Vl}$ are the weight matrices for generating key, query, and values in the $l$-th transformer block of the target TFN; 
$\tgW_{1l}, \tgW_{2l}$ and $\tgb_{1l}, \tgb_{2l}$ serve as the weight matrices and bias vectors, respectively, for the feedforward layer in the same block;
$\tgW_o$ is the weight matrix for the output layer. 
For the frozen TFN, the same roles are played by $\mW_{Kl}, \mW_{Ql}, \mW_{Vl}$, $\mW_{1l}, \mW_{2l}$, and $\vb_{1l}, \vb_{2l}$ for all $l \in [L]$ and $\mW_o$. 
For the adapted model, low-rank adapters $\dW_{Kl}, \dW_{Ql}, \dW_{Vl}, \dW_{1l}, \dW_{2l}, \dW_o$ with a rank constraint $R \in [D]$ are added to each weight matrix, and the bias vectors are updated to $\udb_{1l}, \udb_{2l}$ for all $l \in [L]$.

Given the problem setting outlined above, we give the non-singularity assumption for TFNs with single-head attention layers. 
\begin{assumption}[Non-Singularity]\label{assump:single_tfn}
    All the weight matrices of both the target model and the frozen model, as well as the following matrices for all $r\in [D]$, 
    \begin{gather}
        \mW_{Kl}^\top \mW_{Ql} +  \lr_r \pt{\tgW_{Kl}^\top \tgW_{Ql} - \mW_{Kl}^\top \mW_{Ql}}, \text{ where } l = 1, \label{eq:single_3}\\ 
        \mW_{Kl} \mW_{Ql} + \lr_r \pt{\mW_{2,l-1}^{-1 \top} \tgW_{2, l-1}^\top \tgW_{Kl}^\top \tgW_{Ql} \tgW_{2, l-1} \mW_{2,l-1}^{-1} - \mW_{Kl} \mW_{Ql} }, \text{ for } l \in [L]\setminus\set{1}, \label{eq:single_4}\\
        \mW_{1l} \mW_{Vl} + \lr_r\pt{ \tgW_{1l} \tgW_{Vl} - \mW_{1l} \mW_{Vl}}, \text{ for } l= 1, \label{eq:single_5} \\
        \mW_{1l} \mW_{Vl} + \lr_r\pt{ \tgW_{1l} \tgW_{Vl} \tgW_{2,l-1} \mW_{2,l-1}^{-1} - \mW_{1l} \mW_{Vl}}, \text{ for all } l\in [L]\setminus\set{1}, \\
        \mW_{o} \mW_{2L} + \lr_r( \tgW_{o}\tgW_{2L} - \mW_o \mW_{2L}),  \label{eq:single_6} 
    \end{gather}
    are non-singular.
\end{assumption}

\begin{lemma}\label{lemma:full_rank_assump_single_tf}
    Let the elements of all weight matrices in target model $\tgf$ and the frozen model $f$ be independently sampled from continuous distributions. 
    Then, Assumption~\ref{assump:single_tfn} holds with probability 1.
\end{lemma}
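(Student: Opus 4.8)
The plan is to follow the proof of Lemma~\ref{lemma:full_rank_assump} almost verbatim, handling each matrix family in Assumption~\ref{assump:single_tfn} in turn and then taking a union bound. First, every weight matrix of $\tgf$ and $f_0$ has i.i.d.\ entries from a continuous distribution, so by Lemma~\ref{lemma:random_matrix_nonsingular} each is non-singular with probability $1$; intersecting these finitely many probability-$1$ events, we may work on the event that all weight matrices are invertible, which in particular makes the factors $\mW_{2,l-1}^{-1}$ appearing in \eqref{eq:single_4}--\eqref{eq:single_6} well-defined.

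Next I would observe that each of the matrices in \eqref{eq:single_3}--\eqref{eq:single_6} has the form $\mA + \lr_r(\mB - \mA)$, where $\mA$ is a product of frozen weight matrices (respectively $\mW_{K1}^\top\mW_{Q1}$, $\mW_{Kl}\mW_{Ql}$, $\mW_{1l}\mW_{Vl}$, and $\mW_o\mW_{2L}$), while $\mB$ is a product of target weight matrices, possibly multiplied by frozen weight matrices that are \emph{disjoint} from those appearing in $\mA$ --- for instance, in \eqref{eq:single_4} one has $\mB = \mW_{2,l-1}^{-1 \top}\tgW_{2,l-1}^\top\tgW_{Kl}^\top\tgW_{Ql}\tgW_{2,l-1}\mW_{2,l-1}^{-1}$, which involves $\mW_{2,l-1}$ but neither $\mW_{Kl}$ nor $\mW_{Ql}$. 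Consequently $\mA$ and $\mB$ are independent. Moreover, since transposition is a linear bijection and inversion is a diffeomorphism of the invertible matrices, and a product of independent, continuously distributed, almost surely non-singular matrices is again continuously distributed with support of positive Lebesgue measure on $\sR^{D\times D}$, both $\mA$ and $\mB$ inherit a continuous distribution with positive-measure support.

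With these two facts, the conditioning argument from the proof of Lemma~\ref{lemma:full_rank_assump} carries over directly: conditioning on the error matrix $\mB - \mA = \mE$, by independence the conditional law of $\mA$ given $\mE$ is continuous with positive-measure support, and since $\lr_r(\mE)$ is then a constant matrix, $\mA + \lr_r(\mE)$ is also continuously distributed with positive-measure support, so Lemma~\ref{lemma:random_matrix_nonsingular} yields $\det(\mA + \lr_r(\mE)) \neq 0$ with conditional probability $1$; integrating over $\mE$ shows $\mA + \lr_r(\mB - \mA)$ is non-singular with probability $1$. Since Assumption~\ref{assump:single_tfn} asks only for the finitely many weight matrices together with, for each $l \in [L]$ and each $r \in [D]$, a fixed finite list of such matrices, a union bound over these finitely many events completes the proof.

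I expect the main obstacle to be the bookkeeping in the second step: for each of \eqref{eq:single_3}--\eqref{eq:single_6} one must correctly identify the frozen ``base'' matrix $\mA$, verify that it is independent of all the remaining factors appearing in $\mB$ (which is slightly delicate in the $l > 1$ cases where frozen and target matrices are interleaved), and confirm that every transpose, inverse, and product that occurs still produces a continuous distribution with positive-measure support. The conditioning argument and the union bound are then routine, being identical in structure to the proof of Lemma~\ref{lemma:full_rank_assump}.
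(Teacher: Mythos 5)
Your proposal is correct and follows essentially the same route as the paper, which simply replicates the conditioning argument from the proof of Lemma~\ref{lemma:full_rank_assump} for each matrix family in Assumption~\ref{assump:single_tfn}. Your additional bookkeeping — identifying the frozen base $\mA$ and the independent factor $\mB$ in each of \eqref{eq:single_3}--\eqref{eq:single_6}, and checking that products, transposes, and inverses of a.s.\ invertible continuously distributed matrices remain continuously distributed — is exactly the verification the paper leaves implicit.
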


\begin{proof}[Proof of Lemma~\ref{lemma:full_rank_assump_single_tf}]
    The results can be obtained by replicating the same steps outlined in the proof of Lemma~\ref{lemma:full_rank_assump}.
\end{proof}

\begin{theorem}\label{thm:exact_tfn_singlehead}
    Consider the rank of the adapter weight matrices $R \in [D]$. 
    Let Assumption~\ref{assump:single_tfn} hold. 
    Define the rank-based functionality gap $G_i$ to $i$-th transformer block ($i\in[L]$) or output layer ($i=L+1$) as 
    \begin{align}
        G_i = \left\{\hspace{-.1cm}
        \begin{array}{l}            
            \max_h \pt{\rk(\tgW_{Ki}^{h \top} \tgW_{Qi}^h - \mW_{Ki}^{h \top} \mW_{Qi}^h )}  \vee \max_{h} \pt{\rk(\tgW_{1i}\tgW_{Vi}^h - \mW_{1i} \mW_{Vi}^h)},i=1, \\
            \max_h \pt{\rk(\tgW_{2,i-1}^{\top } \tgW_{Ki}^{h \top}  \tgW_{Qi}^h \tgW_{2,i-1}  -\mW_{2,i-1}^{ \top} \mW_{Ki}^{h \top} \mW_{Qi}^h \mW_{2,i-1})} \\
            \hspace{2.6cm} \vee \max_h \pt{\rk(\tgW_{1i} \tgW_{Vi}^h \tgW_{2,i-1}  - \mW_{1i} \mW_{Vi}^h \mW_{2,i-1})}, \hspace{.5cm} 2 \leq i \leq L,  \\
            \rk(\tgW_{o} \tgW_{2L} - \mW_{o} \mW_{2L}), \hspace{7.2cm} i = L+1.
        \end{array}
        \right.
    \end{align}

    If $R \geq \max_{i\in[L+1]} \ceil{\frac{G_i}{2}}$, there exists rank-$R$ or lower weight matrices for low-rank adapters $(\dW_{Kl}, \dW_{Ql}, \dW_{Vl}, \dW_{1l})_{l=1}^L, \dW_{2L}, \dW_o $ with other low-rank adapters set to $\mO$,
    and updated bias vectors: $(\udb_{1l}, \udb_{2l})_{l=1}^L$,
    such that for any $\mX \in \sR^{D\times N}$, the adapted model $f$ exactly approximates $\tgf$, \ie, $f(\mX) = \tgf(\mX)$, with probability 1. 
\end{theorem}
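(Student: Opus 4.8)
I would adapt the frozen model one Transformer block at a time, maintaining throughout the induction the invariant that the adapted hidden state is an invertible linear image of the target hidden state, namely $\udZ_l = \mW_{2l}\tgW_{2l}^{-1}\tgZ_l$ for $l\in[L-1]$ (with $\udZ_0=\tgZ_0=\mX$). This particular conjugation is \emph{forced}: since the adapters $\dW_{2l}$ are set to $\mO$ for $l<L$, the matrix $\mW_{2l}$ stays frozen, which is precisely why the ``twisted'' targets in \eqref{eq:single_4}--\eqref{eq:single_5} (with their $\mW_{2,l-1}^{-1}$ factors) show up. As in the multi-head case, the two nonlinearities inside each block --- the softmax in the attention sublayer and the ReLU in the feed-forward sublayer --- are handled by first forcing the \emph{pre-softmax} attention score matrices of the two models to coincide, and then forcing the \emph{pre-ReLU} feed-forward activations to coincide; once those two matches hold, every downstream quantity of the block agrees exactly.

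For an interior block $l$, assume the invariant holds at level $l-1$. The adapted pre-softmax score matrix is $\udZ_{l-1}^\top(\mW_{Kl}+\dW_{Kl})^\top(\mW_{Ql}+\dW_{Ql})\udZ_{l-1}$, and substituting the invariant shows it equals the target score matrix $\tgZ_{l-1}^\top\tgW_{Kl}^\top\tgW_{Ql}\tgZ_{l-1}$ as soon as
\[
(\mW_{Kl}+\dW_{Kl})^\top(\mW_{Ql}+\dW_{Ql}) = \mW_{2,l-1}^{-\top}\tgW_{2,l-1}^\top\tgW_{Kl}^\top\tgW_{Ql}\tgW_{2,l-1}\mW_{2,l-1}^{-1}
\]
(for $l=1$ the $\mW_{2,0}$ factors drop and the right side is $\tgW_{K1}^\top\tgW_{Q1}$). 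This is a product of two matrices each perturbed by a rank-$\le R$ update ($\dW_{Kl}^\top$ and $\dW_{Ql}$), so Lemma~\ref{lemma:full_matrix_approx} applied with $L=2$ produces such adapters provided $2R$ is at least the rank of the difference of the two sides --- and that rank equals the first term of $G_l$ --- using the non-singularity hypotheses \eqref{eq:single_3}--\eqref{eq:single_4}. Having matched the softmax matrix $S_l$, I would run the identical argument on the product $\mW_{1l}\mW_{Vl}$ (the pre-ReLU activation is $(\mW_{1l}+\dW_{1l})(\mW_{Vl}+\dW_{Vl})\udZ_{l-1}S_l+\udb_{1l}\vone_N^\top$), obtaining $\dW_{1l},\dW_{Vl}$ of rank $\le R$ with $(\mW_{1l}+\dW_{1l})(\mW_{Vl}+\dW_{Vl})=\tgW_{1l}\tgW_{Vl}\tgW_{2,l-1}\mW_{2,l-1}^{-1}$ once $2R$ beats the rank that is the second term of $G_l$, via \eqref{eq:single_5}. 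Then setting $\udb_{1l}=\tgb_{1l}$ makes the pre-ReLU activations identical, hence the post-ReLU activations $\mY_l$ identical (the $\mW_{2,l-1}^{-1}\mW_{2,l-1}$ in the invariant cancels cleanly), and choosing $\udb_{2l}=\mW_{2l}\tgW_{2l}^{-1}\tgb_{2l}$ (recall $\dW_{2l}=\mO$) re-establishes the invariant at level $l$.

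The last block and the output layer are treated the same way. Carrying the argument through the pre-ReLU step at $l=L$ produces a common post-ReLU matrix $\mY_L$; now $\dW_{2L}$ and $\dW_o$ are also available, so I would invoke Lemma~\ref{lemma:full_matrix_approx} once more (with $L=2$) on the product $\mW_o\mW_{2L}$ to get rank-$\le R$ adapters with $(\mW_o+\dW_o)(\mW_{2L}+\dW_{2L})=\tgW_o\tgW_{2L}$ whenever $2R\ge\rk(\tgW_o\tgW_{2L}-\mW_o\mW_{2L})=G_{L+1}$, using \eqref{eq:single_6}. Since $\tgW_o\tgW_{2L}$ is invertible and the construction inside Lemma~\ref{lemma:full_matrix_approx} keeps $\mW_{2L}+\dW_{2L}$ invertible, $\mW_o+\dW_o$ is invertible, so the choice $\udb_{2L}=(\mW_o+\dW_o)^{-1}\tgW_o\tgb_{2L}$ forces $(\mW_o+\dW_o)\udZ_L=\tgW_o\tgZ_L$ exactly, whence $f(\mX)=\tgf(\mX)$ for every $\mX$. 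Taking $R\ge\max_{i\in[L+1]}\ceil{\frac{G_i}{2}}$ meets all rank conditions simultaneously, and combining with Lemma~\ref{lemma:full_rank_assump_single_tf} yields the probability-$1$ statement for randomly drawn weights.

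The difficulty here is careful bookkeeping rather than a single hard idea. The points that need care are: (i) identifying the correct linear-conjugation invariant linking $\udZ_l$ and $\tgZ_l$, and tracking how the frozen $\mW_{2l}$'s ``twist'' the intermediate target matrices; (ii) checking that matching each of the three bilinear combinations as an \emph{exact} matrix equality --- not merely up to the conjugation or up to softmax-invariant $\vone_N^\top$ shifts --- is exactly what is needed for $f=\tgf$ on \emph{all} inputs (the no-shift form of the invariant is chosen precisely so this holds); and (iii) verifying that every factor whose inverse is used (the frozen and target $\mW_{2l}$'s, as well as $\mW_{2L}+\dW_{2L}$ and $\mW_o+\dW_o$) is non-singular, which is precisely what Assumption~\ref{assump:single_tfn} together with the invertibility guarantees inside Lemma~\ref{lemma:full_matrix_approx} supply. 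The one genuinely new feature relative to the FNN analysis is the factor $\tfrac12$ in the rank threshold, which is the ``effective rank $=LR$ with $L=2$'' phenomenon of Lemma~\ref{lemma:full_matrix_approx} applied to each of the three two-factor products $\mW_{Kl}^\top\mW_{Ql}$, $\mW_{1l}\mW_{Vl}$, and $\mW_o\mW_{2L}$.
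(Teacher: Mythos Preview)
Your proposal is correct and follows essentially the same route as the paper's proof. The paper phrases the inductive invariant as $\udH_l=\tgH_l$ (equality of post-ReLU hidden states) rather than your $\udZ_l=\mW_{2l}\tgW_{2l}^{-1}\tgZ_l$, but these are equivalent once the bias choice $\udb_{2l}=\mW_{2l}\tgW_{2l}^{-1}\tgb_{2l}$ is made; the three two-factor matrix identities you isolate, the invocations of Lemma~\ref{lemma:full_matrix_approx} with $L=2$, and the output-layer treatment (including the invertibility of $\mW_o+\dW_o$) all match the paper exactly.
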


\begin{proof}[Proof of Theorem~\ref{thm:exact_tfn_singlehead}]
    Let $\tgH_l \in \sR^{D\times N}$ and $\tgZ_l \in \sR^{D\times N}$ denote the intermediate and final outputs of the $l$-th transformer block in the target model $\tgf$, respectively.
    Specifically, $\tgH_l$ represents the output from the first feedforward layer in the $l$-th transformer block. 
    They are defined as 
    \begin{align}\label{def:tghz_tfn_single}
        \tgH_l & = \relu\Bigg(\tgW_{1l} \tgW_{Vl} \tgZ_{l-1} \cdot \softmax \left( \tgZ_{l-1}^\top \tgW_{Kl}^{ \top}  \tgW_{Ql}  \tgZ_{l-1} \right) + \tgb_{1l} \vone_N^\top \Bigg), \\ 
        \tgZ_l & = \tgW_{2l} \tgH_l + \tgb_{2l} \vone_N^\top,
    \end{align}
    where $l \in [L]$. 
    For the adapted model $f$, we introduce $\udH_l$ and $\udZ_l$ to denote the corresponding intermediate output of the first feedforward layer and the final output of the $l$-th transformer block for the adapted model, respectively: 
    \begin{align}\label{def:udhz_tfn_single}
        \udH_l & = \relu\Bigg((\mW_{1l} + \dW_{1l})  (\mW_{Vl} +\dW_{Vl}) \cdot \udZ_{l-1}  \\
        & \quad \quad \quad \quad \quad \cdot \softmax \left( \udZ_{l-1}^\top (\mW_{Kl} + \dW_{Kl})^\top  (\mW_{Ql} + \dW_{Ql}) \udZ_{l-1} \right) + \udb_{1l} \vone_N^\top \Bigg),  \\ 
        \udZ_l & = (\mW_{2l} + \dW_{2l}) \udH_l + \udb_{2l} \vone_N^\top,
    \end{align}
    where $l \in [L]$. We note that $\tgZ_0 = \udZ_0 = \mX$.
    
    In this proof, we set $\dW_{2l} = \mO$ for all $l \in [L]$. 
    Our goal is to show that adding low-rank adapters to self-attention layers and the first feedforward layers in all transformer blocks enables the adapted model $f$ to be functionally equivalent to the target model $\tgf$ of the same dimensions.
    We start by inductively constructing the adapter weight matrices $(\dW_{1l}, \dW_{Vl}, \dW_{Kl}, \dW_{Ql}, \udb_{1l}, \udb_{2l})_{l=1}^{L}$ such that $\udH_l = \tgH_l$ for all $l \in [L]$.
    We then select the low-rank adapters for $\mW_{2L}$ and the $\mW_o$ to approximate the output of the target model.
    For unmentioned low-rank adapters, we set them as $\mO$. 

    \paragraph{When $l = 1$.}
    To achieve $\udH_l$ with $\tgH_l$ for all $\mX$, the following conditions must be satisfied:
    \begin{align}
        & \text{Bias Vector:}~  \udb_{1l} = \tgb_{1l}, \label{eq:tfn_bias_1_single} \\ 
        & \text{Query and Key:}~ (\mW_{Kl} + \dW_{Kl})^\top  (\mW_{Ql} + \dW_{Ql}) = \tgW_{Kl}^{ \top}  \tgW_{Ql} \label{eq:tfn_softmax_1_single} \\ 
        & \text{Value and First Feedforward Layer:}~ (\mW_{1l} + \dW_{1l}) (\mW_{Vl} +\dW_{Vl}) = \tgW_{1l} \tgW_{Vl}. \label{eq:tfn_out_softmax_1_single}\\
    \end{align}
    To achieve this, we set $\udb_{1l} = \tgb_{1l}$ to achieve \eqref{eq:tfn_bias_1}, and select rank-$R$ or lower matrices $\dW_{Kl}, \dW_{Ql}, \dW_{1l}, \dW_{Vl}$ as suggested by Lemma~\ref{lemma:full_matrix_approx}.
    This ensures $\udH_l = \tgH_l$ for $l=1$.

    \paragraph{When $l > 1$.} 
    Now we focus on the cases where $l = 2, \ldots, L$. 
    Assume the induction hypothesis holds for $l-1$, which is $\udH_{l-1} = \tgH_{l-1}$.
    This implies
    \begin{align}
        \tgH_{l-1} = \tgW_{2,l-1}^{-1}(\tgZ_{l-1} - \tgb_{2,l-1}\vone_N^\top) = \mW_{2,l-1}^{-1}(\udZ_{l-1} - \udb_{2,l-1}\vone_N^\top) = \udH_{l-1}.
    \end{align}
    Using this assumption, we express $\udZ_{l-1}$ in terms of $\tgZ_{l-1}$:
    \begin{align}
        \udZ_{l-1} = \mW_{2,l-1} \tgW_{2,l-1}^{-1}(\tgZ_{l-1} - \tgb_{2,l-1}\vone_N^\top) + \udb_{2,l-1}\vone_N^\top.
    \end{align}
    Let $\udb_{2,l-1} = \mW_{2,l-1} \tgW_{2,l-1}^{-1}\tgb_{2,l-1}$, then we have 
    \begin{align}\label{eq:single-head_z}
        \udZ_{l-1} = \mW_{2,l-1} \tgW_{2,l-1}^{-1}\tgZ_{l-1} .
    \end{align}
    To achieve $\udH_l = \tgH_l$,
    we express both $\udH_l$ and $\tgH_l$ in terms of $\tgZ_{l-1}$: 
    \allowdisplaybreaks
    \begin{align}
        \tgH_l & = \relu  \Big( \tgW_{1l} \tgW_{Vl} \cdot \tgZ_{l-1} \cdot \softmax \left( \tgZ_{l-1}^\top \tgW_{Kl}^{\top}  \tgW_{Ql} \tgZ_{l-1} \right) + \tgb_{1l} \vone_N^\top \Big) \label{eq:tfn_tgh_single} \\
        \udH_l & = \relu \Big( (\mW_{1l} + \dW_{1l}) (\mW_{Vl} +\dW_{Vl}) \cdot \udZ_{l-1}  \\
        & \quad \quad \quad \quad \quad \cdot \softmax \left( \udZ_{l-1}^\top (\mW_{Kl} + \dW_{Kl})^\top  (\mW_{Ql} + \dW_{Ql}) \udZ_{l-1} \right) + \udb_{1l} \vone_N^\top \Big),  \\
        & \overset{\eqref{eq:single-head_z}}{=} \relu\Bigg( (\mW_{1l} + \dW_{1l}) (\mW_{Vl}+\dW_{Vl}) \cdot  \mW_{2,l-1} \tgW_{2,l-1}^{-1}\tgZ_{l-1}  \\
        & \quad \quad \quad \quad \quad \cdot \softmax \Big(  \tgZ_{l-1}^\top \tgW_{2,l-1}^{-1 \top } \mW_{2,l-1}^\top (\mW_{Kl} + \dW_{Kl})^\top  \\
        & \hspace{5cm} (\mW_{Ql} + \dW_{Ql})  \mW_{2,l-1} \tgW_{2,l-1}^{-1}\tgZ_{l-1} \Big) + \udb_{1l} \vone_N^\top \Bigg). \label{eq:tfn_udh_single}
    \end{align}
    Therefore, we need to align the following three components:
    \begin{align}
        & \text{Bias Vector: }  \udb_{1l} = \tgb_{1l}, \label{eq:tfn_bias_l_single}\\ 
         & \text{Query and Key: }   (\mW_{Kl} + \dW_{Kl})^\top (\mW_{Ql} + \dW_{Ql})    = \mW_{2,l-1}^{-1 \top} \tgW_{2,l-1}^{\top } \tgW_{Kl}^{ \top}  \tgW_{Ql} \tgW_{2,l-1} \mW_{2,l-1}^{-1},  \\ 
         & \text{Value and First Feedforward Layer: }  (\mW_{1l} + \dW_{1l}) (\mW_{Vl} +\dW_{Vl}) = \tgW_{1l}  \tgW_{Vl} \tgW_{2,l-1} \mW_{2,l-1}^{-1}.  
    \end{align}
    By setting $\udb_{1l}$ based on \eqref{eq:tfn_bias_l} and adjusting $\dW_{Kl}, \dW_{Ql}, \dW_{1l}, \dW_{Vl}$ based on  Lemma~\ref{lemma:full_matrix_approx}, we satisfy all three conditions above, thereby obtaining $\udH_l = \tgH_l$ for $l \in [L]\setminus\set{1}$. 

    \paragraph{Output Layer Analysis.}
    By the induction method, we have established $\udH_l = \tgH_l$ for all $l \in [L]$. 
    We will complete the proof by showing that $\tgf(\mX) = f(\mX)$ for all $\mX \in \gX$.
    
    The final output distribution of the target TFN $\tgf$ can be written as
    \begin{align}
        \tgf(\mX) = \softmax(\tgW_o \tgZ_{L}) =  \softmax\left( \tgW_o \left( \tgW_{2L} \tgH_L + \tgb_{2L} \vone_N^\top  \right) \right). 
    \end{align}
    We can similarly formulate the final output distribution of the adapted model $f$ :
    \begin{align}
        f(\mX) & = \softmax( (\mW_o + \dW_o) \udZ_{L}) \\
        & = \softmax\left( (\mW_o + \dW_o) \left( (\mW_{2L} + \dW_{2L}) \udH_L + \udb_{2L} \vone_N^\top  \right) \right) , 
    \end{align}
    To align these two expressions, we select $\dW_{2L}$ and $\dW_{o}$ based on Lemma~\ref{lemma:full_matrix_approx}, and let $\udb_{2L}  =  (\mW_o + \dW_o)^{-1} \tgW_o \tgb_{2L},$
    where $\mW_o + \dW_o$ is invertible as shown in the proof of Lemma~\ref{lemma:full_matrix_approx}.
    Thus, the proof is complete. 
\end{proof}

The following corollary identifies the specific LoRA-rank required to achieve exact representation for random model cases in the current setting.
\begin{corollary}\label{theorem:random_tfn_approx_single} 
    Assume that the elements of all the weight matrices of both the target TFN and the frozen TFN are independently drawn from arbitrary continuous distributions. 
    If $R \geq \ceil{\frac{D}{2}}$, adding low-rank adapters of rank at most $R$ to weight matrices in $(\dW_{Kl}, \dW_{Ql}, \dW_{Vl}, \dW_{1l})_{l=1}^L, \dW_{2L}, \dW_o $ and tuning the bias vectors, enables the adapted model $f$ to exactly approximate the target model $\tgf$, \ie, $f(\mX) = \tgf(\mX)$ for all $\mX \in \sR^{D\times N}$.
\end{corollary}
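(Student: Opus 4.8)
The plan is to derive this corollary from Theorem~\ref{thm:exact_tfn_singlehead} and Lemma~\ref{lemma:full_rank_assump_single_tf}, mirroring the way Corollary~\ref{corollary:fnn_exact} was obtained from Theorem~\ref{thm:fnn_exact} and Lemma~\ref{lemma:full_rank_assump}. First I would invoke Lemma~\ref{lemma:full_rank_assump_single_tf}: since every weight matrix of both $\tgf$ and $f_0$ has entries drawn independently from continuous distributions, Assumption~\ref{assump:single_tfn} holds with probability~$1$. On this probability-$1$ event, Theorem~\ref{thm:exact_tfn_singlehead} is applicable, so it remains only to check that the hypothesis $R \geq \max_{i\in[L+1]}\ceil{G_i/2}$ of that theorem is met whenever $R \geq \ceil{D/2}$.

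Next I would bound the rank-based functionality gaps $G_i$ deterministically. Every matrix appearing inside a rank in the definition of $G_i$ — namely the query--key products such as $\tgW_{Ki}^\top \tgW_{Qi}$, $\mW_{Ki}^\top \mW_{Qi}$ and their conjugated/rescaled variants $\mW_{2,i-1}^{-1\top}\tgW_{2,i-1}^\top \tgW_{Ki}^\top \tgW_{Qi}\tgW_{2,i-1}\mW_{2,i-1}^{-1}$, the value--feedforward products $\tgW_{1i}\tgW_{Vi}$, $\mW_{1i}\mW_{Vi}$ and their rescaled variants, and the output-layer term $\tgW_o\tgW_{2L}-\mW_o\mW_{2L}$ — is a $D\times D$ matrix, since in the single-head problem setting all weight matrices and the feedforward hidden width lie in $\sR^{D\times D}$. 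Hence $\rk(\cdot)\le D$ for each such matrix, so $G_i \le D$ for all $i\in[L+1]$, and therefore $\max_{i\in[L+1]}\ceil{G_i/2} \le \ceil{D/2}$. No randomness is needed for this step; it is a uniform worst-case bound on the ranks.

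Combining the two steps: on the probability-$1$ event where Assumption~\ref{assump:single_tfn} holds, the condition $R \ge \ceil{D/2} \ge \max_{i\in[L+1]}\ceil{G_i/2}$ is satisfied, so Theorem~\ref{thm:exact_tfn_singlehead} yields rank-$R$-or-lower adapters $(\dW_{Kl},\dW_{Ql},\dW_{Vl},\dW_{1l})_{l=1}^L$, $\dW_{2L}$, $\dW_o$ (with all remaining adapters set to $\mO$) and updated biases $(\udb_{1l},\udb_{2l})_{l=1}^L$ for which $f(\mX)=\tgf(\mX)$ for every $\mX\in\sR^{D\times N}$; since this holds on a probability-$1$ event, the claim follows with probability $1$. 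There is no substantial obstacle here — the only point requiring care is verifying that each matrix under a rank in the $G_i$ formula (including the output-layer term $G_{L+1}$) is genuinely square of size $D$, so that the trivial bound $G_i\le D$ applies uniformly.
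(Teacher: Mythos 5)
Your proposal is correct and follows essentially the same route as the paper: combine Lemma~\ref{lemma:full_rank_assump_single_tf} (so Assumption~\ref{assump:single_tfn} holds with probability $1$) with Theorem~\ref{thm:exact_tfn_singlehead}, after checking that $\max_{i}\ceil{G_i/2}\leq\ceil{D/2}$. The only cosmetic difference is that you use the deterministic worst-case bound $G_i\leq D$ (each matrix under a rank is $D\times D$), whereas the paper notes that with probability $1$ the relevant differences are in fact full rank so $\max_i G_i = D$; since only an upper bound is needed, both are equally valid.
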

\begin{proof}[Proof of Corollary~\ref{theorem:random_tfn_approx_single}]
    By combining Lemma~\ref{lemma:full_rank_assump_single_tf} and Theorem~\ref{thm:exact_tfn_singlehead}, and following the same steps in the proof of Corollary~\ref{corollary:fnn_exact} which yields $\max_i G_i = D$, we can obtain the desired outcome.
\end{proof}

\subsection{Approximating Transformer Network with Multi-Head Attention Layers}\label{sec:proof_tf_multihead}
In this section, we first provide the explicit formulation of TFN with multi-head attention layers.

Consider an input matrix $\mX \in \sR^{D\times N}$, where $D$ is the dimension of the token embeddings and $N$ is the number of tokens.
The output of the $l$-th transformer block is denoted as $\mZ_l$, which can be computed as follows:
\begin{gather}
    \attn_l(\mZ_{l-1}) := \sum_{h=1}^H \mW_{Ol}^h \mW_{Vl}^h \mZ_{l-1} \cdot \softmax \left( (\mW_{Kl}^h \mZ_{l-1})^\top \mW_{Ql}^h \mZ_{l-1} \right), \\ 
    \mZ_l := \mW_{2l} \cdot \relu (\mW_{1l} \cdot \attn_l (\mZ_{l-1}) + \vb_{1l} \vone_N^\top ) +\vb_{2l} \vone_N^\top,
\end{gather}
where we define $\mZ_0 = \mX$. 
Here, $H$ is the number of attention heads.
The weight matrices for each head $h\in[H]$ in the $l$-th transformer block are $\mW_{Ol}^h, \mW_{Vl}^h, \mW_{Kl}^h, \mW_{Ql}^h \in \sR^{D\times D}$.
The softmax operator $\softmax(\cdot)$ is applied column-wise to the matrix.
Further, $\mW_{2l}, \mW_{1l} \in \sR^{D\times D}$ are the weight matrices and $\vb_{1l}, \vb_{2l} \in \sR^D $ are the bias vectors in the feedforward layers.

A Transformer network, denoted as $\tfn_{L,D}$, is a composition of $L$ Transformer blocks, followed by an softmax output layer $\softmax(\mW_o ~\cdot)$, where $\mW_o \in \sR^{D\times D}$.
The final output of the TFN is given by $ \softmax (\mW_o \mZ_L )$.
To study the expressive power of LoRA within TFNs featuring multi-head attention layers, we next specify the parameters of the target model $\tgf$, frozen model $f_0$, and the adapted model $f$, each with $L$ transformer blocks and a dimension $D$.

To study the expressive power of LoRA within TFNs featuring multi-head attention layers, we next specify the parameters of the target model $\tgf$, frozen model $f_0$, and the adapted model $f$, each with $L$ transformer blocks and a dimension $D$.
For ease of presentation, we drop the subscript in $\tfn_{L,D}$, referring to it simply as $\tfn$. 
Given a specified rank $R \in [D]$ for LoRA, these models are defined as follows:
\scalebox{0.95}{
\begin{minipage}{\textwidth}
\begin{align}
    \text{Target TFN} ~ \tgf  &= \tfn\pt{\cdot;  \pt{((\tgW_{Ol}^h, \tgW_{Vl}^h, \tgW_{Kl}^h, \tgW_{Ql}^h)_{h=1}^H , \tgW_{2l}, \tgW_{1l})_{l=1}^L, \tgW_o}, (\tgb_{1l}, \tgb_{2l})_{l=1}^L },\\ 
    \text{Frozen TFN} ~ f_0  &= \tfn \pt{\cdot; \pt{((\mW_{Ol}^h, \mW_{Vl}^h, \mW_{Kl}^h, \mW_{Ql}^h)_{h=1}^H, \mW_{2l}, \mW_{1l})_{l=1}^L, \mW_o}, (\vb_{1l}, \vb_{2l})_{l=1}^L}, \\
    \text{Adapted TFN} ~ f  &= \tfn\Big(\cdot; \big(((\mW_{Ol}^h + \dW_{Ol}^h, \mW_{Vl}^h + \dW_{Vl}^h, \mW_{Kl}^h + \dW_{Kl}^h, \mW_{Ql}^h + \dW_{Ql}^h)_{h=1}^H ,\\ 
    & \hspace{1cm} \mW_{2l} + \dW_{2l}, \mW_{1l} + \dW_{1l})_{l=1}^L, \mW_o + \dW_o\big), (\udb_{1l}, \udb_{2l})_{l=1}^L\Big), 
\end{align}
\end{minipage}
}\\
where the weight matrices $\in \sR^{D\times D}$, and the bias vectors $\in \sR^{D}$.
Moreover, the weight matrices of the low-rank adapters $\dW_{Ol}^h, \dW_{Vl}^h, \dW_{Kl}^h, \dW_{Ql}^h, \dW_{2l}, \dW_{1l}$ for all $h\in[H]$ and $l\in[L]$ are of rank $R$ or lower.

We next introduce non-singularity Assumption~\ref{assump:tfn} for TFN with multi-head attention layers scenarios, which is then validated by Lemma~\ref{lemma:full_rank_assump_multi_tf}. 
We then provide proof of our main results for TFNs — Theorem~\ref{thm:exact_tfn_multihead}.
Additionally, we introduce a supplementary theorem that amalgamates results for TFNs with both single-head and multi-head attention layers when the weight matrices are randomly initialized.
This is articulated in Corollary~\ref{theorem:random_tfn_approx}.

\begin{assumption}[Non-Singularity]\label{assump:tfn}
    For a fixed $R \in [D]$,
    all the weight matrices of both the target model and the frozen model 
    and the following matrices for all $r \in [R]$,
    \scalebox{0.88}{
    \begin{minipage}{\textwidth}
    \begin{gather}
        \mW_{Kl}^{h \top} \mW_{Ql}^h  +\lr_r\pt{ \tgW_{Kl}^{h \top} \tgW_{Ql}^h - \mW_{Kl}^{h \top} \mW_{Ql}^h}, \text{ for all } h \in [H] \text{ and }l = 1, \label{eq:multi_3}\\
        \mW_{Kl}^{h \top} \mW_{Ql}^h  +\lr_r \pt{ \mW_{2,l-1}^{-1 \top} \tgW_{2,l-1}^{\top } \tgW_{Kl}^{h \top}  \tgW_{Ql}^h \tgW_{2,l-1} \mW_{2,l-1}^{-1} -\mW_{Kl}^{h \top} \mW_{Ql}^h },\text{ for all } h \in [H] \text{ and } l \in [L]\setminus\set{1}, \label{eq:multi_4}\\
        \mW_{Ol}^h \mW_{Vl}^h + \lr_r \pt{\mW_{1l}^{-1} \tgW_{1l} \tgW_{Ol}^h \tgW_{Vl}^h - \mW_{Ol}^h \mW_{Vl}^h}, \text{ for all } h \in [H] \text{ and } l = 1, \label{eq:multi_5} \\
        \mW_{Ol}^h \mW_{Vl}^h + \lr_r \pt{\mW_{1l}^{-1} \tgW_{1l} \tgW_{Ol}^h \tgW_{Vl}^h \tgW_{2,l-1}\mW_{2,l-1}^{-1}  - \mW_{Ol}^h \mW_{Vl}^h},\text{ for all } h \in [H] \text{ and } l \in [L]\setminus\set{1}, \\ 
        \mW_{o} \mW_{2L} + \lr_r (\tgW_{o} \tgW_{2L} - \mW_{o} \mW_{2L}), \label{eq:multi_6}
    \end{gather}
    \end{minipage}
    }\\
    are non-singular.
\end{assumption}

\begin{lemma}\label{lemma:full_rank_assump_multi_tf}
    Let the elements of all weight matrices in the target model $\tgf$ and frozen model $f_0$ be independently sampled from continuous distributions. 
    Then, Assumption~\ref{assump:tfn} holds with probability 1. 
\end{lemma}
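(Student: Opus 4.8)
The plan is to follow the proofs of Lemma~\ref{lemma:full_rank_assump} and Lemma~\ref{lemma:full_rank_assump_single_tf} essentially verbatim, handling each of the finitely many matrices appearing in Assumption~\ref{assump:tfn} in turn and closing each case with Lemma~\ref{lemma:random_matrix_nonsingular}. First I would dispose of the easy part: every weight matrix of the target model $\tgf$ and the frozen model $f_0$ has i.i.d.\ entries from a continuous distribution, so by Lemma~\ref{lemma:random_matrix_nonsingular} each is non-singular with probability $1$, and since there are only finitely many of them, with probability $1$ all of them are simultaneously non-singular; in particular all inverses $\mW_{2,l-1}^{-1}$ and $\mW_{1l}^{-1}$ occurring in \eqref{eq:multi_3}--\eqref{eq:multi_6} are then well defined. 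I condition on this probability-$1$ event for the rest of the argument.

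For the second part I would treat the five families \eqref{eq:multi_3}--\eqref{eq:multi_6} (together with the unlabeled $l>1$ counterpart of \eqref{eq:multi_5}) by a single device. Each matrix in question has the form $\mB + \lr_r(\mE)$ where $\mB$ is a product of (already non-singular) frozen weight matrices and $\mE = \mA - \mB$ with $\mA$ a product of target and frozen weights. The structural fact to verify family-by-family is that, after conditioning on all the \emph{surrounding} weight matrices, the base term $\mB$ becomes a bijective affine image $\mC\,\rmW\,\mC'$ of a single ``free'' continuous weight matrix $\rmW$ --- the query matrix $\mW_{Ql}^h$ for the attention-score terms \eqref{eq:multi_3}--\eqref{eq:multi_4}, the value matrix $\mW_{Vl}^h$ for the value/feedforward terms \eqref{eq:multi_5} and its $l>1$ counterpart, and $\mW_o$ for \eqref{eq:multi_6} --- with $\mC,\mC'$ fixed and invertible, and that simultaneously $\mA$ is a bijective image of an independent target-side continuous weight matrix ($\tgW_{Ql}^h$, $\tgW_{Vl}^h$, or $\tgW_o$ respectively). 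For instance, for \eqref{eq:multi_4} I would condition on $\mW_{Kl}^h,\mW_{2,l-1},\tgW_{2,l-1},\tgW_{Kl}^h$, leaving $\mW_{Ql}^h$ and $\tgW_{Ql}^h$ free and independent. Given this, the conditional joint law of $(\mA,\mB)$ is absolutely continuous, hence so is the conditional law of $\mB$ given the (then fixed) value of $\mE = \mA-\mB$ for a.e.\ value of $\mE$; adding the now-constant matrix $\lr_r(\mE)$ preserves absolute continuity, so Lemma~\ref{lemma:random_matrix_nonsingular} yields that $\mB+\lr_r(\mE)$ is non-singular conditionally with probability $1$. Integrating out $\mE$ and the surrounding matrices, then taking a union bound over the finitely many indices $h\in[H]$, $l\in[L]$, $r\in[R]$ and over the five families, gives Assumption~\ref{assump:tfn} with probability $1$.

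I expect the argument to be bookkeeping rather than conceptually new, so the only place needing care is the family-by-family isolation of a ``free'' continuous factor, which is precisely why Step~1 must precede Step~2: the isolation relies on absorbing objects like $\mW_{2,l-1}^{-1}$, transposes, and long target-weight products into the fixed invertible prefactors $\mC,\mC'$, which requires their non-singularity. The most involved instance is the two-sided multiplication by $\mW_{2,l-1}^{-1}$ and $\mW_{2,l-1}^{-1\,\top}$ in \eqref{eq:multi_4} (and in the $l>1$ counterpart of \eqref{eq:multi_5}), but it still fits the pattern because after conditioning $\tgW_{Ql}^h$ (resp.\ $\tgW_{Vl}^h$) is sandwiched between two fixed invertible matrices while $\mW_{Ql}^h$ (resp.\ $\mW_{Vl}^h$) factors out of $\mB$; this is the main --- though still routine --- obstacle.
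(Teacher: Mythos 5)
Your proposal is correct and follows the same route as the paper: the paper's proof of this lemma simply says to replicate the steps of Lemma~\ref{lemma:full_rank_assump} (first establish non-singularity of all individual weight matrices via Lemma~\ref{lemma:random_matrix_nonsingular}, then for each matrix of the form $\mB + \lr_r(\mA-\mB)$ condition on the error matrix and argue the conditional law of the base term remains absolutely continuous). Your family-by-family isolation of a free factor ($\mW_{Ql}^h$, $\mW_{Vl}^h$, $\mW_o$ and their target-side counterparts) is exactly the bookkeeping the paper leaves implicit.
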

\begin{proof}[Proof of Lemma~\ref{lemma:full_rank_assump_multi_tf}]
    The results can be obtained by replicating the same steps outlined in the proof of Lemma~\ref{lemma:full_rank_assump}.
\end{proof}

For the reader's reference, we restate Theorem~\ref{thm:exact_tfn_multihead} here integrated with the explicit formulation of the rank-based functionality gap $G_i$.

\setcounter{temp_theorem_counter}{\value{theorem}}  
\setcounter{theorem}{\numexpr\getrefnumber{thm:exact_tfn_multihead}-1\relax}  
\begin{theorem}
    Consider a given LoRA-rank $R \in [D]$.
    Let Assumption~\ref{assump:tfn} hold. 
    Define the rank-based functionality gap $G_i$ to $i$-th transformer block ($i \in [L]$) or output layer ($i=L+1$) as \\
    \scalebox{0.88}{
    \begin{minipage}{\textwidth}
        \begin{align}\label{eq:rank_based_gap_gi}
            G_i = \left\{\hspace{-.1cm}
            \begin{array}{l}            
                \max_h \pt{\rk(\tgW_{Ki}^{h \top} \tgW_{Qi}^h - \mW_{Ki}^{h \top} \mW_{Qi}^h )}  \vee \max_{h} \pt{\rk(\tgW_{1i} \tgW_{Oi}^h \tgW_{Vi}^h - \mW_{1i} \mW_{Oi}^h \mW_{Vi}^h)},i=1, \\
                \max_h \pt{\rk(\tgW_{2,i-1}^{\top } \tgW_{Ki}^{h \top}  \tgW_{Qi}^h \tgW_{2,i-1}  -\mW_{2,i-1}^{ \top} \mW_{Ki}^{h \top} \mW_{Qi}^h \mW_{2,i-1})} \\
                \hspace{2.3cm} \vee \max_h \pt{\rk(\tgW_{1i} \tgW_{Oi}^h \tgW_{Vi}^h \tgW_{2,i-1}  - \mW_{1i} \mW_{Oi}^h \mW_{Vi}^h \mW_{2,i-1})}, \hspace{.8cm} 2 \leq i \leq L,  \\
                \rk(\tgW_{o} \tgW_{2L} - \mW_{o} \mW_{2L}), \hspace{8.7cm} i = L+1.
            \end{array}
            \right.
        \end{align}
    \end{minipage}
    }\\
    If $R \geq \max_{i\in[L+1]} \ceil{\frac{G_i}{2}}$, then there exists 
    low-rank adapters with rank lower than $R \in [D]$ $((\dW_{Kl}^h, \dW_{Ql}^h, \dW_{Vl}^h, \dW_{Ol}^h)_{h=1}^H)_{l=1}^L, \dW_{2L}, \dW_o$ with other low-rank adapters set to $\mO$, 
    and updated bias vectors $(\udb_{1l}, \udb_{2l})_{l=1}^L$,
    such that for any $\mX \in \sR^{D\times N}$, the adapted model $f$ exactly approximates target model $\tgf$, \ie, $f(\mX) = \tgf(\mX)$.
\end{theorem}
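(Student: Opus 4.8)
The plan is to follow the proof of Theorem~\ref{thm:exact_tfn_singlehead} almost verbatim, upgrading it in two places: the attention sum now runs over $H$ heads, and each head carries an output-projection matrix $\mW_{Ol}^h$ that takes over the role played by $\mW_{1l}$ in the single-head argument. As there, I would fix $\dW_{1l}=\mO$ for all $l$, $\dW_{2l}=\mO$ for $l\in[L-1]$, write $\tgH_l,\tgZ_l$ (resp.\ $\udH_l,\udZ_l$) for the output of the first feedforward layer and the output of the $l$-th block in the target (resp.\ adapted) model, with $\tgZ_0=\udZ_0=\mX$, and show inductively that the attention adapters and bias updates can be chosen so that $\udH_l=\tgH_l$ for every $l\in[L]$; a final matching step at the output layer then finishes the proof.

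For the base case $l=1$, since $\udZ_0=\tgZ_0=\mX$, getting $\udH_1=\tgH_1$ reduces head by head to $\udb_{11}=\tgb_{11}$ and the two matrix identities $(\mW_{K1}^h+\dW_{K1}^h)^\top(\mW_{Q1}^h+\dW_{Q1}^h)=\tgW_{K1}^{h\top}\tgW_{Q1}^h$ and $(\mW_{O1}^h+\dW_{O1}^h)(\mW_{V1}^h+\dW_{V1}^h)=\mW_{11}^{-1}\tgW_{11}\tgW_{O1}^h\tgW_{V1}^h$ (the latter using that $\mW_{11}$ is frozen and, by Assumption~\ref{assump:tfn}, invertible). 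Each is a \emph{product of two matrices each carrying a rank-$\le R$ adapter}, so Lemma~\ref{lemma:full_matrix_approx} applies with $L=2$: the non-singularity conditions \eqref{eq:multi_3} and \eqref{eq:multi_5} let the adapted products equal the best rank-$2R$ approximations of the respective error matrices, which coincide with the targets exactly once $2R$ exceeds their ranks; invertibility of $\mW_{11}$ identifies those ranks with the two terms defining $G_1$, both $\le 2R$ under the hypothesis $R\ge\max_i\lceil G_i/2\rceil$.

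For the inductive step $l>1$, assuming $\udH_{l-1}=\tgH_{l-1}$ and $\dW_{2,l-1}=\mO$, choosing $\udb_{2,l-1}=\mW_{2,l-1}\tgW_{2,l-1}^{-1}\tgb_{2,l-1}$ forces $\udZ_{l-1}=\mW_{2,l-1}\tgW_{2,l-1}^{-1}\tgZ_{l-1}$. Substituting this into $\udH_l$, the affine reparametrisation passes through the bilinear softmax argument and the per-head value paths, so $\udH_l=\tgH_l$ reduces per head to $\udb_{1l}=\tgb_{1l}$, $(\mW_{Kl}^h+\dW_{Kl}^h)^\top(\mW_{Ql}^h+\dW_{Ql}^h)=\mW_{2,l-1}^{-1\top}\tgW_{2,l-1}^\top\tgW_{Kl}^{h\top}\tgW_{Ql}^h\tgW_{2,l-1}\mW_{2,l-1}^{-1}$ and $(\mW_{Ol}^h+\dW_{Ol}^h)(\mW_{Vl}^h+\dW_{Vl}^h)=\mW_{1l}^{-1}\tgW_{1l}\tgW_{Ol}^h\tgW_{Vl}^h\tgW_{2,l-1}\mW_{2,l-1}^{-1}$. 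These are again $L=2$ instances of Lemma~\ref{lemma:full_matrix_approx}; invertibility of $\mW_{2,l-1}$ and $\mW_{1l}$ shows the error matrices have exactly the ranks in the $2\le i\le L$ branch of $G_i$ (with $i=l$), each $\le 2R$, and conditions \eqref{eq:multi_4} together with the $l>1$ value-path line of Assumption~\ref{assump:tfn} supply the non-singularity the lemma needs. This yields $\udH_L=\tgH_L$.

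Finally, with $\udH_L=\tgH_L$ one has $\tgf(\mX)=\softmax\big(\tgW_o(\tgW_{2L}\tgH_L+\tgb_{2L}\vone_N^\top)\big)$ and $f(\mX)=\softmax\big((\mW_o+\dW_o)((\mW_{2L}+\dW_{2L})\tgH_L+\udb_{2L}\vone_N^\top)\big)$, so it suffices to get $(\mW_o+\dW_o)(\mW_{2L}+\dW_{2L})=\tgW_o\tgW_{2L}$ and then set $\udb_{2L}=(\mW_o+\dW_o)^{-1}\tgW_o\tgb_{2L}$. The first is one more $L=2$ use of Lemma~\ref{lemma:full_matrix_approx} (product $\mW_o\mW_{2L}$, adapters $\dW_o,\dW_{2L}$), valid under \eqref{eq:multi_6} once $2R\ge\rk(\tgW_o\tgW_{2L}-\mW_o\mW_{2L})=G_{L+1}$; its construction makes $\mW_{2L}+\dW_{2L}$ non-singular, and since the realised product $\tgW_o\tgW_{2L}$ is non-singular, so is $\mW_o+\dW_o$, making $\udb_{2L}$ well-defined. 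Maximising the rank requirements over $i\in[L+1]$ gives precisely $R\ge\max_i\lceil G_i/2\rceil$, and $f(\mX)=\tgf(\mX)$ for all $\mX$. I expect the only real work to be the bookkeeping of the reparametrisation $\tgZ_{l-1}\mapsto\mW_{2,l-1}\tgW_{2,l-1}^{-1}\tgZ_{l-1}$ as it simultaneously threads the softmax (where $\tgZ_{l-1}$ sits on both sides of a bilinear form) and the per-head value paths, and checking that every inverse invoked is one of the matrices Assumption~\ref{assump:tfn} declares non-singular.
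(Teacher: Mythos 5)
Your proposal is correct and follows essentially the same route as the paper's proof: induction on blocks to enforce $\udH_l=\tgH_l$, the reparametrisation $\udZ_{l-1}=\mW_{2,l-1}\tgW_{2,l-1}^{-1}\tgZ_{l-1}$ via the choice of $\udb_{2,l-1}$, per-head matching of the key--query and value--output-projection products as two-factor instances of Lemma~\ref{lemma:full_matrix_approx} under the corresponding non-singularity conditions of Assumption~\ref{assump:tfn}, and a final two-factor matching of $\mW_o\mW_{2L}$ at the output layer. The rank bookkeeping (pulling out the invertible factors $\mW_{1l}^{-1}$ and $\mW_{2,l-1}^{\pm 1}$ to identify the error-matrix ranks with the $G_i$) is exactly what the paper does.
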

\setcounter{theorem}{\value{temp_theorem_counter}} 
\begin{proof}[Proof of Theorem~\ref{thm:exact_tfn_multihead}]
    The key idea of this proof is the same as the proof of Theorem~\ref{thm:exact_tfn_singlehead}: our first step is to ensure that, for each transformer block, the output from the first feedforward layer in the target model matches that in the adapted model. 
    Once this is established, we select an appropriate output layer weight matrix to complete the proof.

    Similar to the proof of Theorem~\ref{thm:exact_tfn_singlehead}, we define $\tgH_l \in \sR^{D\times N}$ and $\tgZ_l \in \sR^{D\times N}$ as the intermediate and final outputs of the $l$-th transformer block in the target model $\tgf$, respectively.
    In particular, $\tgH_l$ corresponds to the output of the first feedforward layer in the $l$-th transformer block. 
    They are formulated as 
    \begin{align}\label{def:tghz_tfn}
        \tgH_l & = \relu\Bigg(\tgW_{1l}  \left( \sum_{h=1}^H \tgW_{Ol}^h  \tgW_{Vl}^h \cdot \tgZ_{l-1} \cdot \softmax \left( \tgZ_{l-1}^\top \tgW_{Kl}^{h \top}  \tgW_{Ql}^h  \tgZ_{l-1} \right) \right) + \tgb_{1l} \vone_N^\top \Bigg), \\ 
        \tgZ_l & = \tgW_{2l} \tgH_l + \tgb_{2l} \vone_N^\top.
    \end{align}
    For the adapted model $f$, we introduce $\udH_l$ and $\udZ_l$ accordingly to denote the intermediate output of the first feedforward layer and the final output of the $l$-th transformer block for the adapted model, respectively: 
    \begin{align}\label{def:udhz_tfn}
        \udH_l & = \relu\Bigg(\mW_{1l}  \Big( \sum_{h=1}^H (\mW_{Ol}^h + \dW_{Ol}^h) (\mW_{Vl}^h +\dW_{Vl}^h) \cdot \udZ_{l-1}  \\
        & \quad \quad \quad \quad \quad \cdot \softmax \left( \udZ_{l-1}^\top (\mW_{Kl}^{h} + \dW_{Kl}^h)^\top  (\mW_{Ql}^h + \dW_{Ql}^h) \udZ_{l-1} \right) \Big) + \udb_{1l} \vone_N^\top \Bigg),  \\ 
        \udZ_l & = \mW_{2l} \udH_l + \udb_{2l} \vone_N^\top.
    \end{align}
    Note that $\tgZ_0 = \udZ_0 = \mX$.

    We aim to demonstrate that adding low-rank adapters to the weight matrices allows the adapted TFN $f$ to be functionally equivalent to the target TFN of identical dimensions. 
    We will initiate our proof by inductively constructing the adapter weight matrices $((\dW_{Ol}^h, \dW_{Vl}^h, \dW_{Kl}^h, \dW_{Ql}^h)_{h=1}^H, \udb_{1l}, \udb_{2l})_{l=1}^{L}$ such that $\udH_l = \tgH_l$ for all $l \in [L]$, and then select the $\dW_{2L}$ and the low-rank adapter for the output layer $\dW_o$ to approximate the output of the target model.
    For unmentioned low-rank adapters, we set them as $\mO$. 

    \paragraph{When $l = 1$.}
    To achieve $\udH_l$ with $\tgH_l$ for all $\mX$, we must satisfy the following conditions:
    \begin{align}
        & \text{Bias Vector:}~  \udb_{1l} = \tgb_{1l}, \label{eq:tfn_bias_1} \\ 
        & \text{Query and Key:}~ (\mW_{Kl}^{h} + \dW_{Kl}^h)^\top  (\mW_{Ql}^h + \dW_{Ql}^h) = \tgW_{Kl}^{h \top}  \tgW_{Ql}^h, \label{eq:tfn_softmax_1} \\ 
        & \text{Value and Output Projection:}~ (\mW_{Ol}^h + \dW_{Ol}^h) (\mW_{Vl}^h +\dW_{Vl}^h) =\mW_{1l}^{-1} \tgW_{1l} \tgW_{Ol}^h  \tgW_{Vl}^h. \label{eq:tfn_out_softmax_1}\\
    \end{align}
    To achieve this, we set $\udb_{1l} = \tgb_{1l}$ to achieve \eqref{eq:tfn_bias_1}, and select rank-$R$ or lower matrices $\dW_{Kl}^h, \dW_{Ql}^h, \dW_{Ol}^h, \dW_{Vl}^h$ for all $h \in [H]$ as suggested by Lemma~\ref{lemma:full_matrix_approx}.
    This ensures $\udH_l = \tgH_l$ for $l=1$.

    \paragraph{When $l > 1$.} 
    Now we focus on the cases where $l = 2, \ldots, L$. 
    Assume the induction hypothesis holds for $l-1$, which is $\udH_{l-1} = \tgH_{l-1}$.
    Following the same steps in the proof of Theorem~\ref{thm:exact_tfn_singlehead},
    we let $\udb_{2,l-1} = \mW_{2,l-1} \tgW_{2,l-1}^{-1}\tgb_{2,l-1}$, thereby obtaining,
    \begin{align}\label{eq:multi-head_z}
        \udZ_{l-1} = \mW_{2,l-1} \tgW_{2,l-1}^{-1}\tgZ_{l-1} .
    \end{align}
    To achieve $\udH_l = \tgH_l$,
    we express both $\udH_l$ and $\tgH_l$ in terms of $\tgZ_{l-1}$: 
    \allowdisplaybreaks
    \begin{align}
        \tgH_l & = \relu\Big(\tgW_{1l}  \Big( \sum_{h=1}^H \tgW_{Ol}^h  \tgW_{Vl}^h \cdot \tgZ_{l-1} \cdot \softmax \left( \tgZ_{l-1}^\top \tgW_{Kl}^{h \top}  \tgW_{Ql}^h  \tgZ_{l-1} \right) \Big) + \tgb_{1l} \vone_N^\top \Big) \label{eq:tfn_tgh} \\
        \udH_l & = \relu\Big(\mW_{1l}  \Big( \sum_{h=1}^H (\mW_{Ol}^h + \dW_{Ol}^h) (\mW_{Vl}^h +\dW_{Vl}^h) \cdot \udZ_{l-1}  \\
        & \quad \quad \quad \quad \quad \cdot \softmax \left( \udZ_{l-1}^\top (\mW_{Kl}^{h} + \dW_{Kl}^h)^\top  (\mW_{Ql}^h + \dW_{Ql}^h) \udZ_{l-1} \right)\Big) + \udb_{1l} \vone_N^\top \Big),  \\
        & \overset{\eqref{eq:multi-head_z}}{=} \relu\Bigg(\mW_{1l}  \Big( \sum_{h=1}^H (\mW_{Ol}^h + \dW_{Ol}^h) (\mW_{Vl}^h +\dW_{Vl}^h) \cdot  \mW_{2,l-1} \tgW_{2,l-1}^{-1}\tgZ_{l-1}  \\
        & \quad \quad \quad \quad \quad \cdot \softmax \Big(  \tgZ_{l-1}^\top \tgW_{2,l-1}^{-1 \top } \mW_{2,l-1}^\top (\mW_{Kl}^{h} + \dW_{Kl}^h)^\top  \\
        & \hspace{5cm} (\mW_{Ql}^h + \dW_{Ql}^h)  \mW_{2,l-1} \tgW_{2,l-1}^{-1}\tgZ_{l-1} \Big)\Big) + \udb_{1l} \vone_N^\top \Bigg). \label{eq:tfn_udh}
    \end{align}
    Therefore, we need to align the following three components:
    \begin{align}
        & \text{Bias Vector: }  \udb_{1l} = \tgb_{1l}, \label{eq:tfn_bias_l}\\ 
         & \text{Query and Key: }   (\mW_{Kl}^{h} + \dW_{Kl}^h)^\top (\mW_{Ql}^h + \dW_{Ql}^h)    = \mW_{2,l-1}^{-1 \top} \tgW_{2,l-1}^{\top } \tgW_{Kl}^{h \top}  \tgW_{Ql}^h \tgW_{2,l-1} \mW_{2,l-1}^{-1},  \\ 
         & \text{Value and Output Projection: }  \\
         & \hspace{3.5cm}(\mW_{Ol}^h + \dW_{Ol}^h) (\mW_{Vl}^h +\dW_{Vl}^h) = \mW_{1l}^{-1} \tgW_{1l} \tgW_{Ol}^h  \tgW_{Vl}^h \tgW_{2,l-1} \mW_{2,l-1}^{-1}.  
    \end{align}
    By setting $\udb_{1l}$ based on \eqref{eq:tfn_bias_l} and adjusting $\dW_{Kl}^h, \dW_{Ql}^h, \dW_{Ol}^h, \dW_{Vl}^h$ for all $h \in [H]$ based on  Lemma~\ref{lemma:full_matrix_approx}, we satisfy all three conditions above, thereby obtaining $\udH_l = \tgH_l$ for $l \in [L]\setminus\set{1}$. 

    \paragraph{Output Layer Analysis.}
    By applying the induction method, we have established $\udH_l = \tgH_l$ for all $l \in [L]$. 
    Lastly, we choose the $\dW_o$, $\dW_{2L}$ and the bias vector $\udb_{2L}$ using the same approach as in the proof of Theorem~\ref{thm:exact_tfn_singlehead}.
    This concludes the proof. 
\end{proof}

The following corollary identifies the specific LoRA-rank required to achieve exact representation for random model cases in the current setting.
\begin{corollary}\label{theorem:random_tfn_approx} 
    Assume that the elements of all the weight matrices of both the target TFN and the frozen TFN are independently drawn from arbitrary continuous distributions. 
    If $R \geq \ceil{\frac{D}{2}}$, adding low-rank adapters of rank at most $R$ to weight matrices in $((\dW_{Kl}^h, \dW_{Ql}^h, \dW_{Vl}^h, \dW_{Ol}^h)_{h=1}^H)_{l=1}^L, \dW_{2L}, \dW_o$  and tuning the bias vectors, enables the adapted model $f$ to exactly approximate the target model $\tgf$, \ie, $f(\mX) = \tgf(\mX)$ for all $\mX \in \sR^{D\times N}$.
\end{corollary}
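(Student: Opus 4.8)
The plan is to deduce the corollary from Theorem~\ref{thm:exact_tfn_multihead} by verifying its two hypotheses in the random-initialization regime. Lemma~\ref{lemma:full_rank_assump_multi_tf} already guarantees that Assumption~\ref{assump:tfn} holds with probability $1$, so the only thing left is to control the rank-based functionality gaps $G_i$ of \eqref{eq:rank_based_gap_gi}, and I claim that with probability $1$ one has $G_i = D$ for every $i \in [L+1]$. Since each $G_i$ is a maximum of ranks of $D \times D$ matrices, $G_i \le D$ automatically, so it suffices to show that every difference $\mA - \mB$ sitting inside a $\rk(\cdot)$ in \eqref{eq:rank_based_gap_gi} is non-singular almost surely — here $\mA$ is a product of target weight matrices and $\mB$ the corresponding product of frozen weight matrices. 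Granting this, $\max_{i\in[L+1]}\ceil{G_i/2} = \ceil{D/2}$, and Theorem~\ref{thm:exact_tfn_multihead} then supplies, for every $R \ge \ceil{D/2}$, low-rank adapters $((\dW_{Kl}^h,\dW_{Ql}^h,\dW_{Vl}^h,\dW_{Ol}^h)_{h=1}^H)_{l=1}^L$, $\dW_{2L}$, $\dW_o$ of rank at most $R$ (all other adapters zero) and updated biases with $f(\mX) = \tgf(\mX)$ for all $\mX \in \sR^{D\times N}$, as claimed.

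The non-singularity of each $\mA - \mB$ I would prove with the conditioning argument used in the proof of Corollary~\ref{corollary:fnn_exact}, adapted to products of several matrices. In the target product $\mA$ there is one factor — call it $\mV$ — that occurs exactly once and on which $\mA$ depends linearly; when a factor such as $\tgW_{2,i-1}$ appears twice, as in $\tgW_{2,i-1}^\top \tgW_{Ki}^{h\top}\tgW_{Qi}^h\tgW_{2,i-1} = (\tgW_{Ki}^h\tgW_{2,i-1})^\top \tgW_{Qi}^h \tgW_{2,i-1}$, one simply takes the genuinely free factor ($\tgW_{Qi}^h$ here) for $\mV$. Now condition on every weight matrix of both models except $\mV$: by Lemma~\ref{lemma:random_matrix_nonsingular} each of the finitely many matrices we need to invert (the remaining factors of $\mA$) is invertible with probability $1$, so on this event $\mV \mapsto \mA$ is an invertible linear self-map of $\sR^{D\times D}$. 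Since all weight entries are independent draws from continuous distributions, the conditional law of $\mV$ is still continuous with support of non-zero Lebesgue measure, hence so is the conditional law of $\mA$, and therefore so is that of $\mA - \mB$ (subtracting the conditionally constant $\mB$ is a translation). Lemma~\ref{lemma:random_matrix_nonsingular} applied conditionally gives $\mA - \mB$ non-singular with conditional probability $1$, and integrating out the conditioning variables keeps this probability $1$. Concretely, one takes $\mV = \tgW_{Q1}^h$ and $\mV = \tgW_{V1}^h$ for the two $i=1$ terms, $\mV = \tgW_{Qi}^h$ and $\mV = \tgW_{Vi}^h$ for $2 \le i \le L$, and $\mV = \tgW_o$ for $i = L+1$.

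Finally, there are only finitely many such difference matrices in \eqref{eq:rank_based_gap_gi} — indexed by $i \in [L+1]$, head $h \in [H]$, and the at most two terms per $i$ — plus finitely many invertibility events used in the conditioning, so a union bound shows that with probability $1$ they are all simultaneously non-singular, whence $G_i = D$ for every $i$. Combining this with Lemma~\ref{lemma:full_rank_assump_multi_tf} and invoking Theorem~\ref{thm:exact_tfn_multihead} with $R \ge \ceil{D/2} = \max_{i\in[L+1]}\ceil{G_i/2}$ finishes the proof. The only real obstacle is the bookkeeping in the middle step: one must confirm that for each of the structurally different products appearing in \eqref{eq:rank_based_gap_gi} there genuinely is a target factor occurring linearly and exactly once, and that the remaining factors are invertible almost surely, so that the conditional-density step applies — the doubled occurrence of $\tgW_{2,i-1}$ in the $2 \le i \le L$ terms is the one place this needs a moment's care, handled by the rewriting above.
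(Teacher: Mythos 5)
Your proposal is correct and follows essentially the same route as the paper: combine Lemma~\ref{lemma:full_rank_assump_multi_tf} with the conditioning argument from the proof of Corollary~\ref{corollary:fnn_exact} to get $\max_i G_i = D$ almost surely, then invoke Theorem~\ref{thm:exact_tfn_multihead}. Your explicit handling of the doubled factor $\tgW_{2,i-1}$ by isolating a factor that appears linearly and exactly once is a detail the paper leaves implicit, but it is the same argument.
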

\begin{proof}[Proof of Corollary~\ref{theorem:random_tfn_approx}]
    By combining Lemma~\ref{lemma:full_rank_assump_multi_tf} and Theorem~\ref{thm:exact_tfn_multihead}, and following the same steps in the proof of Corollary~\ref{corollary:fnn_exact} which yields $\max_i G_i = D$, we can obtain the desired outcome. 
\end{proof}

\section{Experiments}\label{app:exp}
In this section, we perform experiments on both synthetic and real datasets to corroborate our theoretical results. 
Firstly, we focus on validating the construction of the LoRA adapter in our proof. 
Subsequently, we extend our experimental validation to encompass the effects of tuning final layers and the significance of updatable bias.
Additionally, we offer visual representations of training curves, assess the generalization performance of LoRA, and evaluate its efficacy on classification tasks.
We also conduct experiments on real datasets to further support our theoretical insights in real-world scenarios.

\subsection{Additional Details of Experiment Setup}
We implement LoRA adapter $\dW$ by reparameterizing it as $\dW = \mA \mB^{\top},$ where $\mA, \mB \in \sR^{D\times R},$ and we use the same initialization scheme as proposed by \citet{hu2022lora}.
For experiments presented in Sec.~\ref{sec:exp}, \ref{sec:fnn_exp}, \ref{sec:exp_tfn}, \ref{exp:last_layers}, and \ref{sec:bias_tuning}, we consider two variants of frozen models:
\begin{itemize}[leftmargin=*]
    \item \textbf{(Random)} The first method involves randomly generating all the weight matrices using the Xavier uniform distribution, which is the default weight initialization method used in PyTorch. 
    \item \textbf{(Pretrained)} The second method aims to simulate scenarios where the pretrained model is relatively closer to the target model.
    We achieve this by initially creating the target model and the frozen model in the same way as the first method and then performing full-rank updates on the frozen model via gradient descent to approximate the target model until the approximation error is reduced by 1/3.
\end{itemize}
For other experiments on synthetic datasets, we default to the randomly parameterized frozen model unless specified otherwise.

\subsection{Additional Details on Gradient Update Method}
In our experiments, we utilize the Adam optimizer.
We tune the learning rate $\in \set{10^{-2}, 10^{-3}, 10^{-4}}$ and the weight decay $\in \set{0, 10^{-2}, 10^{-3}, 10^{-4}}$.
The optimal configuration is determined based on the validation loss on a set of 256 samples independently drawn from a standard normal distribution.
We run 5,000 iterations for each hyperparameter setting, where at each step 256 fresh standard Gaussian samples are generated for loss and gradient computation.

\subsection{Validation of Our LoRA Adapter Construction}
Recall that all our theoretical statements are based on our construction of the LoRA adapters presented in their corresponding proofs. 
To validate these results, here we empirically examine the relationship between approximation error and rank by integrating the LoRA adapters, which are constructed with the uniform partition in our proof, into the frozen model.
Furthermore, we evaluate the effectiveness of our constructed LoRA adapters by comparing their performance against adapters updated through gradient descent and optimized by Adam.
All simulations are conducted five times using different seeds, and the reported values represent the median computed across different runs.

\subsubsection{FNN Approximation}\label{sec:fnn_exp}
\begin{figure}[!htbp]
    \centering
    \begin{subfigure}[b]{0.45\textwidth}
        \centering
        \includegraphics[width=\textwidth]{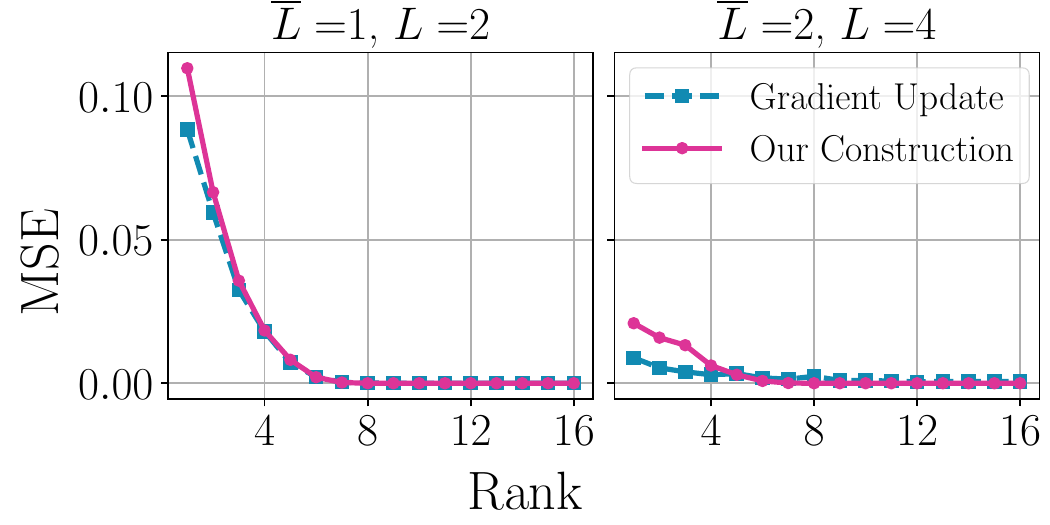}
        \caption{Frozen model is randomly generated.}
        \label{fig:fnn_approx}
    \end{subfigure}
    \hfill
    \begin{subfigure}[b]{0.45\textwidth}
        \centering
        \includegraphics[width=\textwidth]{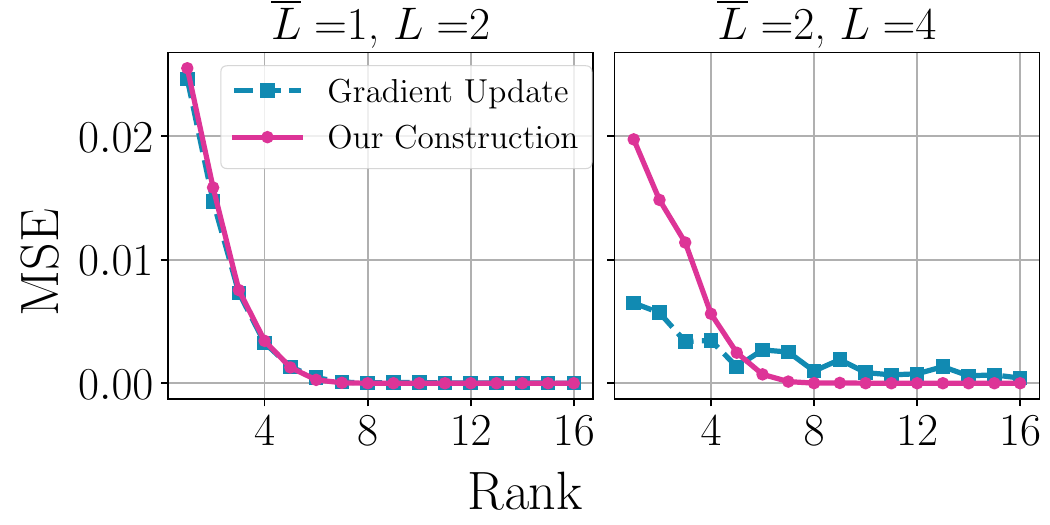}
        \caption{Frozen model is pretrained.}
        \label{fig:fnn_approx_pretrained}
    \end{subfigure}
    \caption{Approximation error (measured by MSE) versus LoRA-rank on FNNs.}
    \label{fig:fnn}
\end{figure}

In this experiment, we assess the effectiveness of our low-rank adapter construction for FNN approximation, which is detailed in the proof of Theorem~\ref{thm:fnn_approx}. 

\begin{wrapfigure}{r}{0.4\textwidth}
    \centering
    \vspace{-.2in}
    \includegraphics[width=0.4\textwidth]{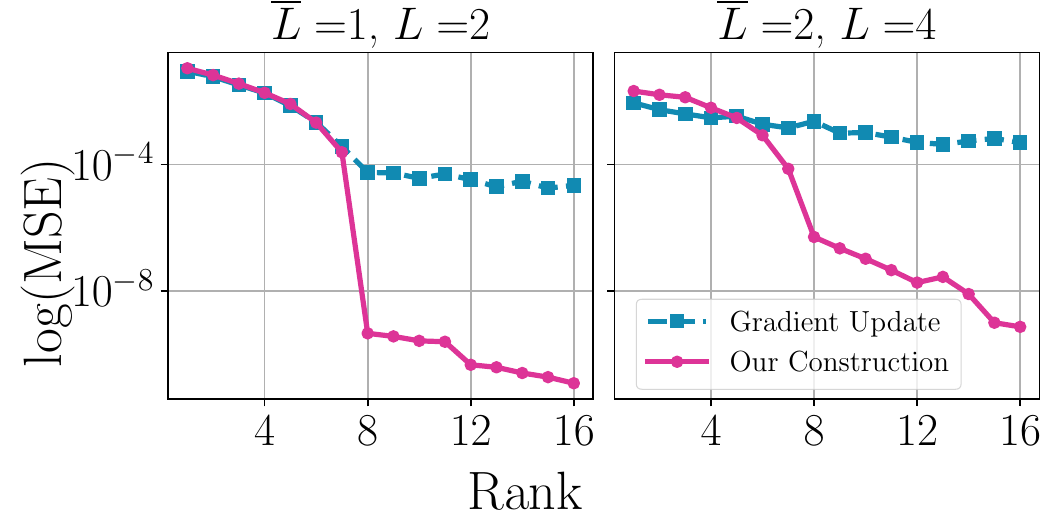}
    \caption{Log-scale MSE versus LoRA-rank on randomly initialized FNNs.}
    \label{fig:log_mse}
    \vspace{-.3in}
\end{wrapfigure}
\paragraph{Setup.} 
We consider two scenarios: one with $\tgL = 1$ and $L=2$ and the other one with $\tgL=2$ and $L=4$.
It should be noted that for both these cases, we have $\tdL = \floor{L/ \tgL} = 2$ here. 
We employ the gradient update method and the construction outlined in the proof of Theorem~\ref{thm:fnn_approx} to update the LoRA adapters.

\paragraph{Results.}

Fig.~\ref{fig:fnn} presents the results for FNN approximation.
Consistent with the implications drawn in Sec.~\ref{sec:exp}, the $y$ limit changes from Fig.~\ref{fig:fnn_approx} to Fig.~\ref{fig:fnn_approx_pretrained} suggest that the pretrained frozen model results in less approximation error. 
Additionally, we observe that our construction's performance aligns closely with the gradient update method when the target model depth $\tgL = 1$.
However, this alignment is not observed when $\tgL = 2$ on low-rank region (\ie, $R \leq 4$), 
This further underscores the limitation of our LoRA adapter construction, which inherently assumes that the intermediate outputs of the frozen model and the target model need to align.

To facilitate a more effective comparison between our construction and the gradient update method in the higher-rank region (\ie, $R\geq 6$), we present the curves on a logarithmic scale, as depicted in Fig.~\ref{fig:log_mse}.
While the gradient update appears to reach the optimal performance achieved by our LoRA construction in FNNs, a gap is still discernible when viewed on a logarithmic scale. The MSE of the gradient update method is approximately $10^{-4}$, while for our LoRA construction, it's around $10^{-8}$ for a sufficiently large rank.

\subsubsection{TFN Approximation}\label{sec:exp_tfn}
We assess the effectiveness of our LoRA adapter construction in approximating TFN, as detailed in the proof of Theorem~\ref{thm:exact_tfn_multihead}. 

\paragraph{Setup.}
We examine target model $\tgf$ and frozen model $f$, both featuring the same architecture with $L$ transformer blocks, a single output layer, two attention heads, and embedding size $D = 16$.
We focus on two scenarios: $L=1$ and $L=2$.
The weight matrices for the attention layers follow a standard Gaussian distribution, while those for the linear layers are initialized using the Xavier uniform distribution, which is PyTorch's default scheme for linear layer initialization.

\paragraph{Results.} 

\begin{figure}[!h]
    \centering
    \begin{subfigure}[b]{0.45\textwidth}
        \centering
        \includegraphics[width=\textwidth]{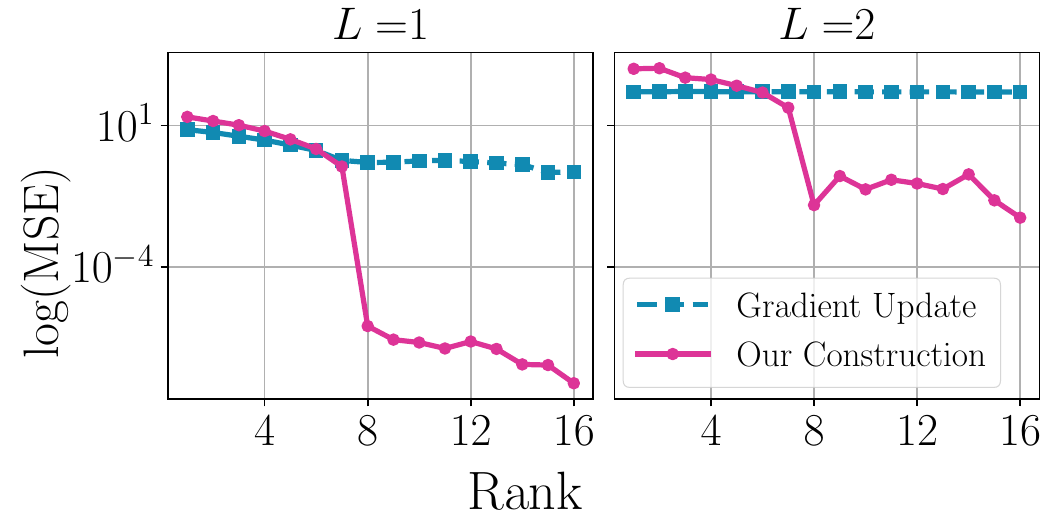}
        \caption{Frozen model is randomly generated.}
        \label{fig:tfn_approx}
    \end{subfigure}
    \hfill
    \begin{subfigure}[b]{0.45\textwidth}
        \centering
        \includegraphics[width=\textwidth]{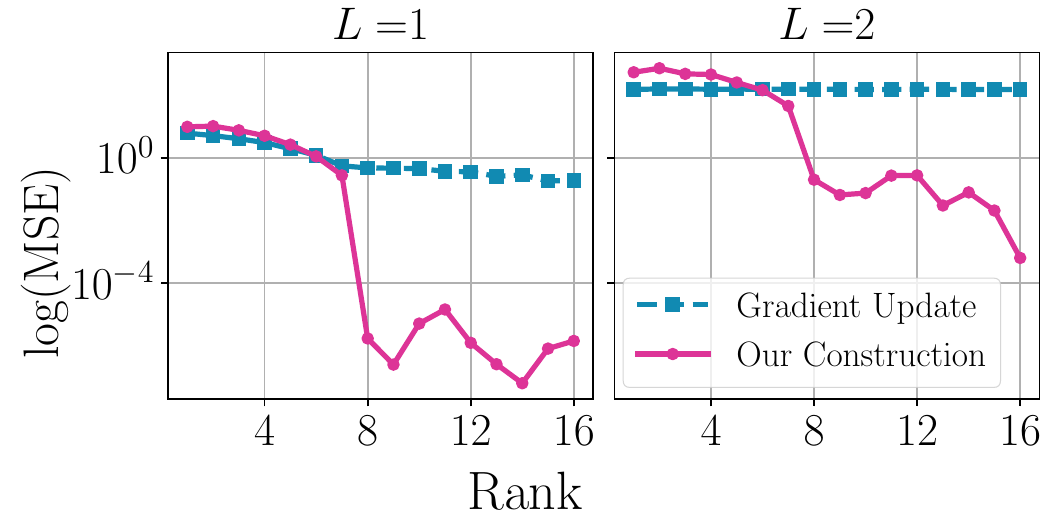}
        \caption{Frozen model is pretrained.}
        \label{fig:tfn_approx_pretrained}
    \end{subfigure}
    \caption{Approximation error (measured by MSE) versus LoRA-rank on TFNs.}
    \label{fig:tfn}
\end{figure}
The observations here align with those from the experiments of FNN approximation.
We note that the gradient update method outperforms our approach when the rank is relatively small but lags behind as the rank increases.
This advantage of the gradient update method at minimal ranks arises from the inherent complexity of TFNs, which allows for more flexible low-rank adapter construction. 
Meanwhile, the gradient update method's performance does not significantly improve as the rank increases. 
This arises from the inherent complexity involved in optimizing TFNs.
Nonetheless, our results corroborate the claims made in Theorem~\ref{thm:exact_tfn_multihead}, as the approximation error must be eradicated when the rank reaches $\ceil{\frac{D}{2}} = 8$.

\subsection{Comparison to Tuning Final Layers}\label{exp:last_layers}
\begin{figure}[!htbp]
    \centering
    \begin{subfigure}[b]{0.45\textwidth}
        \centering
        \includegraphics[width=\textwidth]{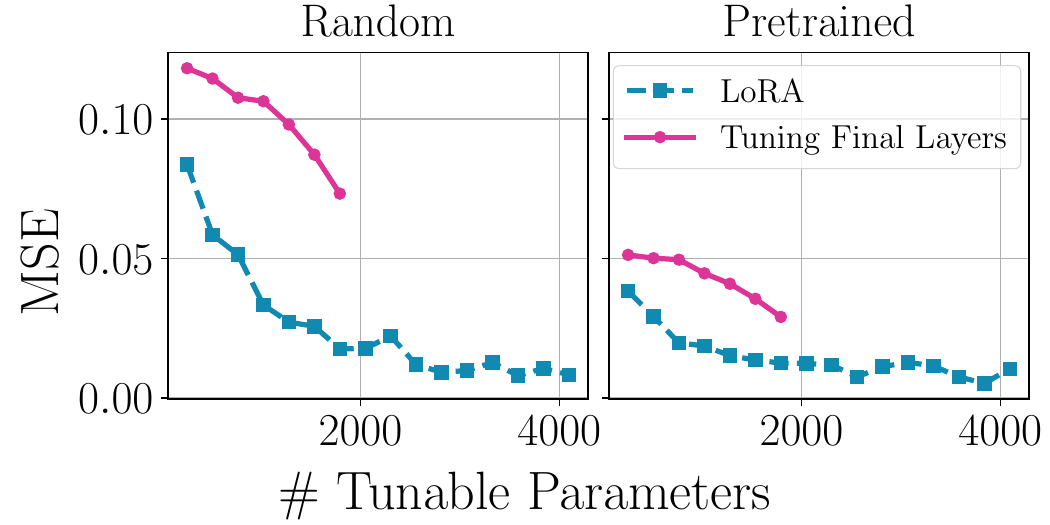}
        \caption{Comparison between LoRA and tuning final layers.}
        \label{fig:flt_approx}
    \end{subfigure}
    \hfill
    \begin{subfigure}[b]{0.45\textwidth}
        \centering
        \includegraphics[width=\textwidth]{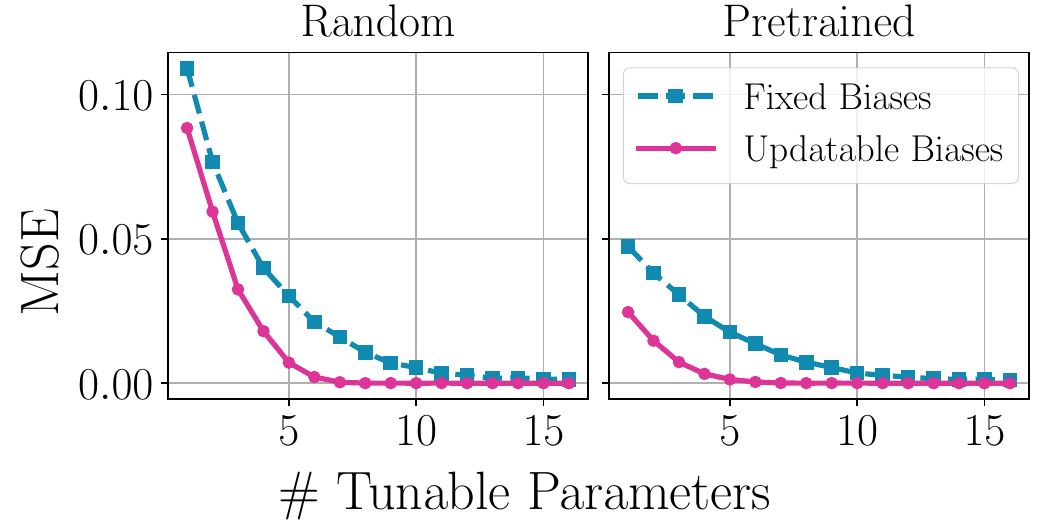}
        \caption{Comparison between LoRA with fixed biases and LoRA with updatable biases.}
        \label{fig:lora_bitfit}
    \end{subfigure}
    \caption{
    Approximation error (measured by MSE) versus the number of tunable parameters when various methods are employed.
    The analyses are conducted on FNN models.
    }
    \label{fig:ablation}
\end{figure}
Tuning or adding the final layers only is also a common adaptation method used in various domains, including computer vision~\citep{chatfield2014return,donahue2014decaf,sharif2014cnn}, and natural language processing~\citep{devlin-etal-2019-bert,gira2022debiasing}.
Recall that Corollary~\ref{corollary:fnn_exact} and Lemma~\ref{lemma:pre_last_layers} demonstrate that tuning final layers does not perform as well as LoRA for randomly generated models, provided the LoRA-rank satisfies the rank constraints shown in Corollary~\ref{corollary:fnn_exact}. 
In this experiment, we aim to validate this assertion and compare the performance of tuning final layers and LoRA in more general scenarios, such as when the frozen model has been pretrained, and when the LoRA-rank is smaller than required.

\paragraph{Setup.} 
We consider FNN models with $D=16, \tgL = 1, L=8$.
In this experiment, we employ two baselines: LoRA and tuning final layers.
The LoRA adapters and the final layers are updated using the gradient update method.

\paragraph{Results.} 
Figure~\ref{fig:flt_approx} compares the MSE of LoRA and final layer tuning when the same number of tunable parameters are used. 
In the case of randomly generated models, we observe that final layer tuning yields a significantly higher MSE when using the same number of tunable parameters, corroborating our results in Lemma~\ref{lemma:pre_last_layers}. 
However, when the frozen model has been pretrained, the performance of final layer tuning improves considerably, though it still falls short of LoRA. 
This aligns with conclusions drawn from previous theoretical studies such as \citet{tripuraneni2020theory}, which asserts that the performance of final layer tuning heavily depends on the quality of the shared representations.

\subsection{Benefits of Tuning Biases}\label{sec:bias_tuning}
In our proof, as detailed in Sec.~\ref{sec:one_layer_fnn} and \ref{sec:1_layer_fnn}, the updatable biases in the FNN play a crucial role in eliminating the nonlinearity of ReLUs. 
In this experiment, we investigate the importance of updatable biases in ensuring the success of LoRA in FNN cases.

\paragraph{Setup.}
We consider FNN models with parameters $D=16, \tgL = 1, L=2$, and examine the performance of LoRA both with and without biases tuning for adapting it to match the target FNN. 
The LoRA adapters and biases are updated using the gradient update method.

\paragraph{Results.} The performance of LoRA with and without updatable biases is presented in Figure~\ref{fig:lora_bitfit}. We observe that in both random and pretrained model cases, LoRA with updatable biases outperforms LoRA with fixed biases when the number of tunable parameters is relatively small.
However, the performance gap is not significant and diminishes as the number of tunable parameters increases. 
This suggests that while tuning biases in conjunction with the low-rank adapters does enhance performance, the gain is not substantial. 
In other words, even without bias tuning, LoRA's performance remains competitive.

\subsection{Training Curves}

\begin{wrapfigure}{r}{0.4\textwidth}
    \centering
    \vspace{-.2in}
    \includegraphics[width=0.4\textwidth]{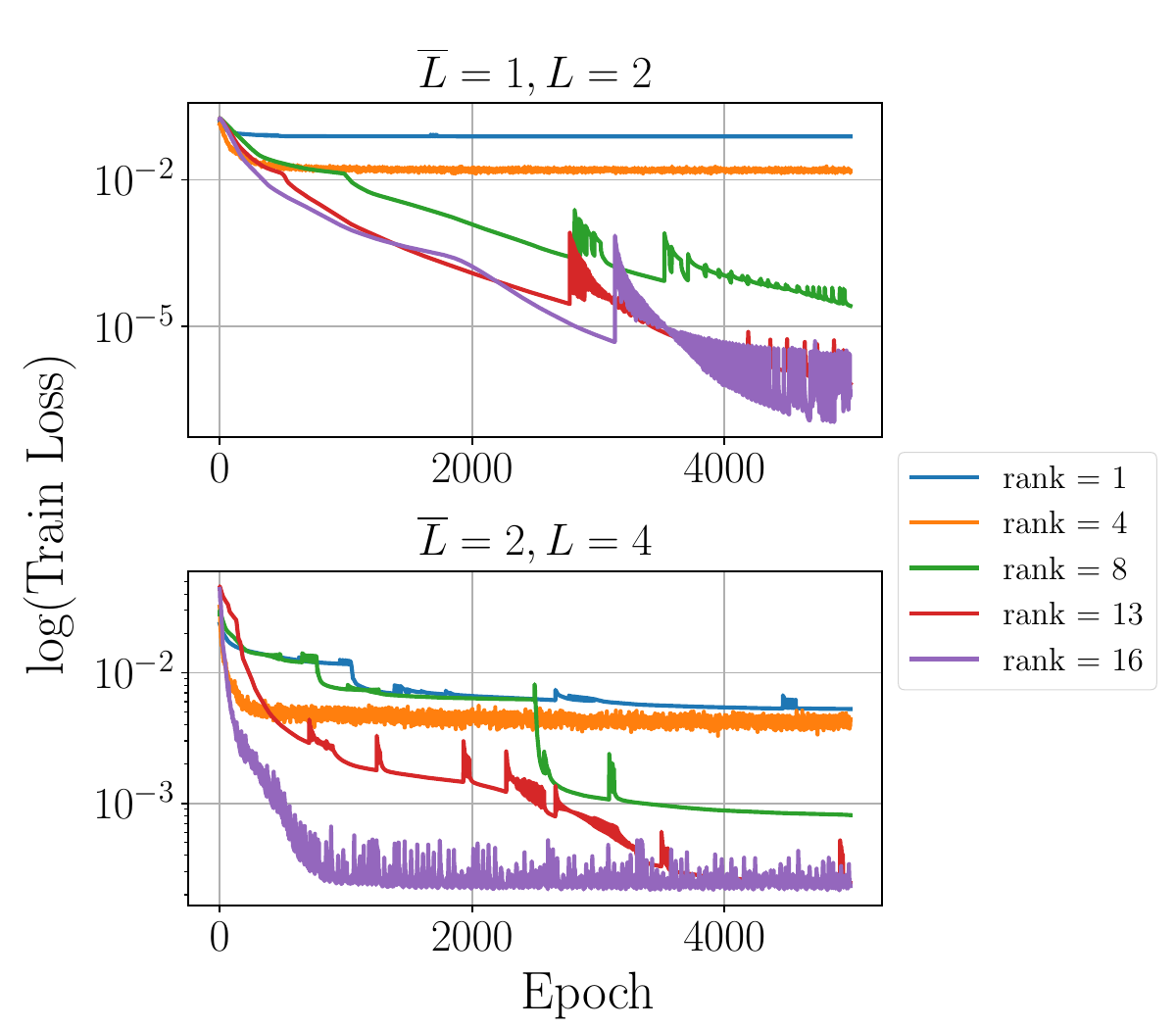}
    \caption{Training curves of LoRA with varying LoRA-ranks when $D=16$.}
    \label{fig:training_curves}
    \vspace{-.5in}
\end{wrapfigure}

Although our theoretical study does not incorporate any training process, we present the training curves of the LoRA gradient update method to illuminate the optimization aspects of LoRA.

\paragraph{Setup} We depict the training curves of LoRA fine-tuning on randomly generated FNNs for $R=1,4,8,13,16$. Unless stated otherwise, all settings strictly adhere to the FNN experiments described in Sec.~\ref{sec:exp}.

\paragraph{Results}
The training curves visualized in Fig.~\ref{fig:training_curves} reveal that models with smaller ranks (\eg, $R$=1,4) converge swiftly due to their limited search space, but they settle at a relatively high training loss. 
Medium rank models (\eg, $R$=8) converge more slowly. Highly overparameterized models (\rg, $R$=13,18) appear to converge faster, aligning with recent advancements in optimization theory, which suggest that overparameterized models are easier to optimize~\citep{liu2022loss}.

\subsection{Generalization Performances}

While our theoretical study only establishes the upper bound of LoRA's performance with infinite data samples, it does not consider LoRA's generalization performance in practice. 
Although this is beyond the current scope of our paper, we empirically investigate LoRA's generalization performance in this experiment.

\paragraph{Setup.} We include a training set of 400 samples for the cases where $\tgL=1, L=2$, and 800 training samples for the cases where $\tgL=2, L=4$. 
We evaluate how well LoRA's training performance transfers to the test set.

\begin{wrapfigure}{r}{0.4\textwidth}
    \centering
    \vspace{-.2in}
    \includegraphics[width=0.4\textwidth]{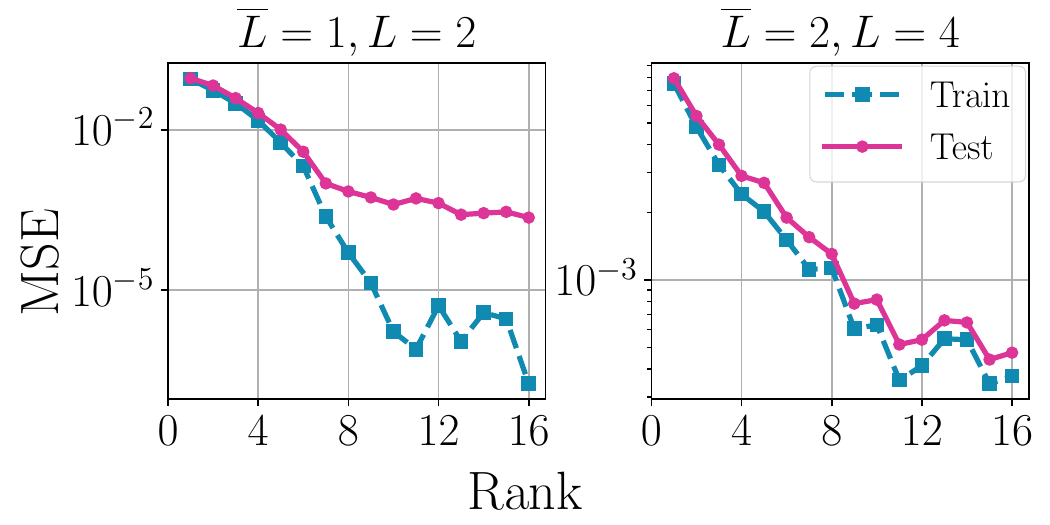}
    \caption{Assessment of LoRA's generalization performance on FNNs.}
    \vspace{-.2in}
    \label{fig:generalization}
\end{wrapfigure}

\paragraph{Results.} Fig.~\ref{fig:generalization} presents the training and test MSE versus LoRA-ranks. 
However, no clear pattern is observed in the variation of the gap between the training and test MSE with respect to the LoRA-ranks. 
This could be due to Adam not precisely finding the minimum (see Fig.~\ref{fig:log_mse}), potentially avoiding overfitting. 

To assess LoRA's generalization performance, we fine-tuned the frozen model on the training set and reported the training and test MSE. 
We notice an increasing generalization gap (test MSE - train MSE) as the LoRA rank increases -- this is very evident with $L$=2, and less so with $L$=4. 
This is intuitive as larger LoRA ranks imply a larger hypothesis class (\eg, the Rademacher complexity), so it is expected. 
We defer a detailed analysis of LoRA's generalization performance to future work but believe our simulation results provide a valuable starting point for further discussion and investigation. 

\subsection{Evaluation on Classification Tasks}

\begin{figure}
    \centering
    \begin{subfigure}[b]{0.45\textwidth}
        \centering
        \includegraphics[width=\textwidth]{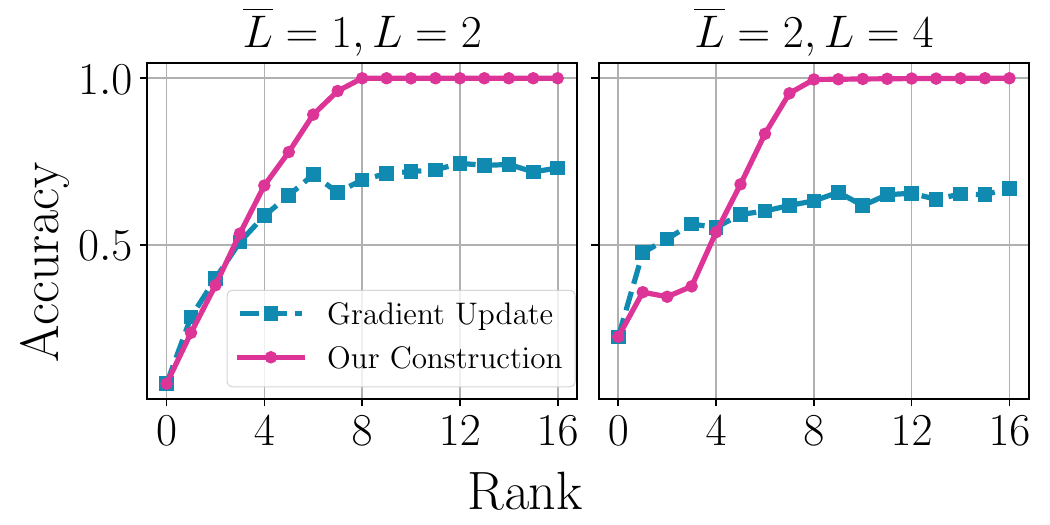}
        \caption{Multi-class classification tasks with 16 classes.}
        \label{fig:multi_class}
    \end{subfigure}
    \hfill
    \begin{subfigure}[b]{0.45\textwidth}
        \centering
        \includegraphics[width=\textwidth]{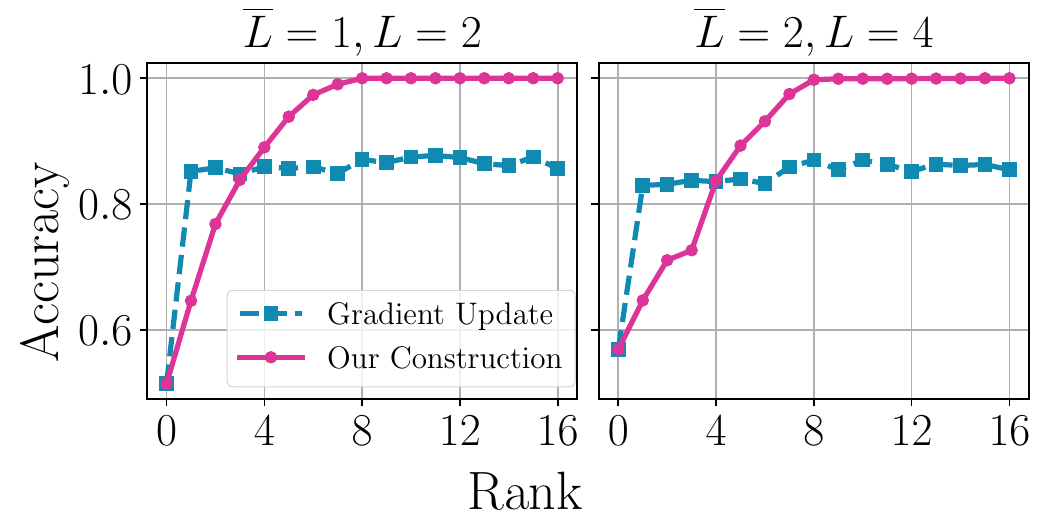}
        \caption{Binary classification task.}
        \label{fig:bin_class}
    \end{subfigure}
    \caption{
        Accuracy versus the rank on classification tasks.
        The analyses are conducted on FNN models.
    }
    \label{fig:classification}
\end{figure}

Our theory and previous experiments all focus on regression cases. 
In this experiment, we consider binary and multi-class classification tasks to optimize the LoRA adapter vias cross-entropy and report the performance of LoRA using accuracy.  

\paragraph{Multi-class Classification.} 
As shown in Fig.~\ref{fig:multi_class}, consistent with our theoretical results, our construction achieves 100\% accuracy when $R \geq 8$. The performance of gradient update is also similar to our observation when MSE is employed as the metric, particularly when MSE is plotted on a logarithmic scale (Fig.~\ref{fig:log_mse}). This observation echoes the findings of \citet{hui2021evaluation}, which indicate that optimizing MSE is fundamentally equivalent to optimizing cross-entropy.

\paragraph{Binary Classification.} We have conducted binary classification tasks. 
We use the same setup as before but add one more output layer $\in \mathbb{R}^{2 \times D}$ which is a block diagonal matrix, with the first 8 elements in the first rows and the last 8 elements in the second row are 1 and all remaining elements are 0. 
We fix this output layer, optimize the cross entropy on the LoRA adapters, and report the test accuracy. 

As shown in Fig.~\ref{fig:bin_class}, we observe that in this binary classification scenario, even with a very low LoRA-rank $R=1$, the accuracy has been significantly improved, comparable to the results achieved by higher ranks. 
In the region of higher ranks, our construction significantly outperforms the gradient update method.
The suboptimal performance of the gradient update method in this simulation suggests that, despite LoRA's current impressive performance in practical applications, there is potential for further refinement. 

\subsection{Evaluation on Real Datasets}\label{exp:real}

In our theoretical analysis, we demonstrate how the sizes of frozen models and the distance between the frozen and target models influence the necessary LoRA-ranks to achieve the desired performance (see Lemma~\ref{lemma:matrix_approx}, \ref{lemma:1_fnn_approx}, and Theorem~\ref{thm:fnn_approx}, \ref{thm:general_fnn},  \ref{thm:exact_tfn_multihead}). 
Specifically, our results suggest that larger models require fewer LoRA-ranks to reach the desired performance. 
Similarly, when the frozen model is closer to the target model, a lower LoRA-rank is sufficient to achieve the same performance. 
We validate these theoretical insights through experiments on the GLUE benchmark~\citep{wang-etal-2018-glue}.

\paragraph{Setup} Our experiments are conducted using Tesla V100-PCIE-16GB, NVIDIA A100-SXM4-80GB, NVIDIA A100-SXM4-40GB, and NVIDIA L40 GPUs. 
For each run, a single GPU is utilized. 
Unless otherwise specified, all our settings align with those established by \citet{hu2022lora}.

\paragraph{Impact of Model Size on LoRA Rank} In practice, most existing studies on LoRA use the same LoRA-rank for models of varying sizes. 
For instance, in the original LoRA paper~\citep{hu2022lora}, Tables 9 and 10 demonstrate the use of the same LoRA-rank for RoBERTa-base~\citep{liu2019roberta}, RoBERTa-large~\citep{liu2019roberta}, and DeBERTa-XXL~\citep{he2021deberta}. 
Similarly, in the QLoRA paper~\citep{dettmers2023qlora}, a LoRA-rank of 64 is set for different models ranging from 13B to 65B parameters (see their Appendix B.2).
To validate our theoretical findings, we evaluated the performance of LoRA on models of different sizes, specifically RoBERTa-base with 110M parameters and RoBERTa-large with 340M parameters. 
The results are presented in Table~\ref{tab:glue_pretrained}.

\begin{table}[!htb]
    \centering
    \resizebox{\textwidth}{!}{
    \begin{tabular}{c|c|c|c|c|c|c|c|c|c}
        \toprule 
        \textbf{Model} & $R$ & \textbf{MNLI} & \textbf{SST-2} & \textbf{MRPC} & \textbf{CoLA} & \textbf{QNLI} & \textbf{QQP} & \textbf{RTE} & \textbf{STS-B}  \\ \midrule 
        $\text{RoBERTa}_{\text{base}}$ & 0 & \textbf{.330} & .491 & .316 & 0 & .495 & \textbf{.682} & \textbf{.527} & .024 \\ 
        $\text{RoBERTa}_{\text{large}}$ & 0 & .318 & \textbf{.505} & \textbf{.684} & 0 & \textbf{.505} & .369 & .473 & \textbf{.032} \\ \midrule 
        $\text{RoBERTa}_{\text{base}}$ & 2 & .861 & .950 & .892 & \textbf{.632} & .928 & .891 & .780 & .907 \\ 
        $\text{RoBERTa}_{\text{base}}$ & 6 & .870 & .948 & .892 & .629 & .931 & \textbf{.900} & .773 & .909 \\ 
        $\text{RoBERTa}_{\text{large}}$ & 2 & \textbf{.904} & \textbf{.956} & \textbf{.917} & .631 & \textbf{.946} & .887 & \textbf{.884} & \textbf{.916} \\ \bottomrule
    \end{tabular}
    }
    \caption{
        \textbf{Comparison of the fine-tuned performance of RoBERTa-base and RoBERTa-large using LoRA with different LoRA-ranks on the GLUE benchmark. }
        Following \citet{hu2022lora}, we report the overall (matched and mismatched) accuracy for MNLI, Matthew's correlation for CoLA, Pearson correlation for STS-B, and accuracy for other tasks.
        Higher is better for all metrics. 
        Despite the absence of a clear pattern indicating which pretrained model is generally superior, after fine-tuning using LoRA, we observe that RoBERTa-large (340M) fine-tuned with LoRA-rank $R=2$ outperforms RoBERTa-base (110M) with LoRA-rank $R=6$ in 7 out of 8 tasks. 
        This observation aligns with our theoretical conclusion that larger models require lower LoRA-ranks to achieve the desired performance.
    }
    \label{tab:glue_pretrained}
\end{table}

Initially, we observe that, in the absence of fine-tuning (LoRA-rank $R=0$), there is no consistent trend -- RoBERTa-base performs better on 3 datasets, while RoBERTa-large performs better on 4 datasets.
However, after LoRA fine-tuning, we observe that RoBERTa-large outperforms in most cases. In fact, even when the base model is trained with a LoRA-rank three times larger, RoBERTa-large still performs better on 6 out 8 datasets. Given that the pretrained RoBERTa-large model was performing no differently from the base model, this observation supports our theoretical findings that deeper models are more expressive with LoRA training.

\paragraph{Impact of Model Proximity on LoRA Rank}
While our theoretical results (Lemma~\ref{lemma:matrix_approx}, \ref{lemma:1_fnn_approx}, and Theorem~\ref{thm:fnn_approx}, \ref{thm:general_fnn},  \ref{thm:exact_tfn_multihead})  imply that the frozen model that is closer to the target model achieves better results for a fixed LoRA-rank. 
To validate this, we compare the performance of pretrained RoBERTa-base with the randomly initialized RoBERTa-base fine-tuned using the same LoRA-ranks. 

\begin{table}[!htb]
    \centering
    \resizebox{\textwidth}{!}{
    \begin{tabular}{c|c|c|c|c|c|c|c|c|c}
        \toprule 
        \textbf{Model} & $R$ & \textbf{MNLI} & \textbf{SST-2} & \textbf{MRPC} & \textbf{CoLA} & \textbf{QNLI} & \textbf{QQP} & \textbf{RTE} & \textbf{STS-B}  \\ \midrule 
        Random & \multirow{2}{*}{2} & .523 & .775 & .691 & .154 & .627 & .761 & .542 & .213  \\
        Pretrained &  & \textbf{.861} & \textbf{.950} & \textbf{.892} & \textbf{.632} & \textbf{.928} & \textbf{.891} & \textbf{.780} & \textbf{.907} \\ \midrule 
        Random & \multirow{2}{*}{4} & .535 & .788 & .696 & .145 & .625 & .768 & .542 & .224  \\ 
        Pretrained &  & \textbf{.868} & \textbf{.950} & \textbf{.890} & \textbf{.634} & \textbf{.929} & \textbf{.898} & \textbf{.805} & \textbf{.910}  \\ \midrule 
        Random & \multirow{2}{*}{6} & .544 & .799 & .696 & .154 & .632 & .768 & .542 & .210 \\ 
        Pretrained &  & \textbf{.868} & \textbf{.948} & \textbf{.892} & \textbf{.629} & \textbf{.931} & \textbf{.900} & \textbf{.773} & \textbf{.909}\\ \bottomrule
    \end{tabular}
    }
    \caption{
        \textbf{Comparison of the fine-tuned performance of randomly initialized and pretrained RoBERTa-base. }
        Following \citet{hu2022lora}, we report the overall (matched and mismatched) accuracy for MNLI, Matthew's correlation for CoLA, Pearson correlation for STS-B, and accuracy for other tasks.
        Higher is better for all metrics. 
        We observe that the performance of the pretrained RoBERTa-base significantly surpasses that of the randomly initialized RoBERTa-base given the same LoRA-rank. 
        This observation is consistent with our theoretical findings, which suggest that a frozen model closer to the target model yields better performance given the same LoRA-rank.
    }
    \label{tab:glue_random}
\end{table}

The results in Table~\ref{tab:glue_random} demonstrate that the pretrained RoBERTa-base significantly surpasses the randomly initialized RoBERTa-base. 
This observation is consistent with our theoretical findings, suggesting that the pretrained model requires lower LoRA-ranks to achieve the desired performance.

\section{Extension to Cases with Different Model Dimensions}\label{dis:different_dim}
This discussion only applies to linear model approximation and FNN approximation. 
As highlighted in Sec.~\ref{sec:linear_model}, our results can be easily extended to scenarios where the target model, $\tgf$, and the frozen model, $f$, have different model dimensions. 
Specifically, for linear model or FNN approximation, we use $\tgD$ to represent the number of hidden neurons per layer in the target model and $D$ for the frozen model.
We particularly consider the cases where the frozen model is wider than the target model, \ie, $D \geq \tgD$.
This is because the frozen model is typically overparameterized in practical applications.

The key idea for extending our analysis to scenarios with different model dimensions is expanding the dimension of the target model.
For the sake of simplicity, we focus on the simplest case, the linear model approximation, as an example. 
In this setting, the difference between the output of the adapted model and the target model can be measured by 
\begin{align}\label{eq:discussion_approximation_error}
 f\pt{\begin{bmatrix}
        \vx \\ 
        \vzero
    \end{bmatrix}} - \begin{bmatrix}
        \tgf(\vx) \\
        \vzero
    \end{bmatrix}  = \prod_{l=1}^L \pt{\mW_l + \dW_l} \begin{bmatrix}
        \vx \\ 
        \vzero
    \end{bmatrix} - \begin{bmatrix}
        \tgW \vx \\
        \vzero
    \end{bmatrix},
\end{align}
where $\vx \in \sR^{\tgD}$.
Consequently, the last $(D-\tgD)$ columns and rows of $\prod_{l=1}^L \pt{\mW_l + \dW_l}$ does not affect the results at all.
Denote the submatrix consisting of the first $d$ rows and $d$ columns of a matrix $\mW$ by $\bt{\mW}_d$.
Then, to approximate the target model, we aim to solve the following constrained optimization problem for a given LoRA-rank $R\in [D]$:
\begin{align}\label{opt:linear_approx_different_d}
    \min_{\rk(\dW_l) \leq R} \fnorm{\bt{\prod_{l=1}^L \pt{\mW_l + \dW_l} }_{\tgD} - \tgW }.
\end{align}

To solve this problem, we first define an expanded target matrix, denoted by $\tgeW \in \sR^{D\times D}$.
The expanded target matrix $\tgeW$ is constructed such that $\bt{\tgeW}_{\tgD} = \tgW$, while the remaining entries matches the corresponding entries in $\prod{l=1}^L \mW_l$. 
Then, the error matrix $\mE = \tgeW - \prod_{l=1}^L \mW_l$, consists entirely of zeros except for the first $\tgD$ rows and $\tgD$ columns. 
Therefore, we obtain $\re = \rk(\mE) \leq \tgD$.

Given the expanded target matrix,  we consider the updated constrained optimization problem as follows:
\begin{align}\label{opt:expand_linear_approx_different_d}
    \min_{\rk(\dW_l) \leq R} \fnorm{\prod_{l=1}^L \pt{\mW_l + \dW_l} - \tgeW }.
\end{align}
By Lemma~\ref{lemma:matrix_approx}, we obtain that when the LoRA-rank $R \geq \floor{\frac{\tgD}{L}} $, the optimal solution to \eqref{opt:expand_linear_approx_different_d} satisfies $\prod_{l=1}^L \pt{\mW_l + \dW_l} = \tgeW $, given that $\tgD \geq \re$. 
This result implies that $\bt{\prod_{l=1}^L \pt{\mW_l + \dW_l} }_{\tgD} = \tgW$ and therefore the approximation error defined in \eqref{eq:discussion_approximation_error} is 0 for all input $\vx$.

A similar analysis can be conducted for FNN approximation.

\section{Extended Future Works}\label{app:future_works}
To the best of our knowledge, this paper is the first to offer a theoretical understanding of LoRA fine-tuning on both FNN and TFN.
Our work delivers insightful results, elucidating the impact of rank, depth of the pre-trained model, and the distance between the pre-trained model and the target model on the expressive power of LoRA. 
Those theoretical results are further corroborated via our experiments.
Despite these advancements, several intriguing questions still remain open.
First, as observed in the numerical experiments, our construction of LoRA adapters for FNN and TFN may not be always optimal. 
Given that more complex models offer increased flexibility, an open question is whether we can devise a more parameter-efficient scheme to construct the LoRA adapters, thereby deriving a tighter bound on approximation error.
Second, for TFN, we have only identified the conditions under which the LoRA-adapted model exactly matches the target model, due to the analytical complexity of TFN. 
It would be interesting to quantify the approximation error when the rank is lower than required.
Furthermore, for TFN, we constrain the target model and the frozen model to have identical embedding size and depth, and we omit the skip connections and layer norms for simplicity.
Another intriguing direction would be to study the expressive power of LoRA under TFN cases with more general settings on TFN architectures.
While our analysis does not involve any training process, an interesting direction for future research would be to consider gradient-based optimization algorithms and examine how efficiently LoRA can be optimized. 
Finally, theoretical questions about LoRA's generalization to unseen data also remain unresolved.

\end{document}